\crefname{hypothesis}{Hypothesis}{Hypotheses}
\title{Quaternion Nuclear Norms Over Frobenius Norms Minimization for Robust Matrix Completion
\thanks{Submitted to the editors 2024-11-.
\funding{This work was supported by National Natural Science Foundation of China (No. 12061052), Young Talents of Science and Technology in Universities of Inner Mongolia Autonomous Region (No. NJYT22090), Natural Science Fund of Inner Mongolia Autonomous Region (No. 2020MS01002), Innovative Research Team in Universities of Inner Mongolia Autonomous Region (No. NMGIRT2207),  Special Funds for Graduate Innovation and Entrepreneurship of Inner Mongolia University (No.~11200-121024), Prof. Guoqing Chen's “111 project” of higher education talent training in Inner Mongolia Autonomous Region, Inner Mongolia University Independent Research Project (No. 2022-ZZ004) and the network information center of Inner Mongolia University. M. Ng’s research is funded by HKRGC GRF 17201020 and 17300021, HKRGC CRF C7004-21GF, and Joint NSFC and RGC N-HKU769/21.}
}
}
\author{Yu Guo \thanks{School of Mathematical Science, Inner Mongolia University, Hohhot, China. Corresponding author: Qiyu Jin. 
  (\email{yuguomath@aliyun.com}, \email{cgq@imu.edu.cn}, \email{qyjin2015@aliyun.com}).}
\and Guoqing Chen\footnotemark[2]
\and Tieyong Zeng  \thanks{Department of Mathematics, The Chinese University of Hong Kong, Satin, Hong Kong (\email{zeng@math.cuhk.edu.hk}).}
\and Qiyu Jin\footnotemark[2]
\and Michael Kwok-Po Ng\thanks{Department of Mathematics, Hong Kong Baptist University, Kowloon Tong, Hong Kong (\email{michael-ng@hkbu.edu.hk}).}
}
\DeclareMathOperator{\diag}{diag}
\begin{document}

\maketitle

% REQUIRED
\begin{abstract}
Recovering hidden structures from incomplete or noisy data remains a pervasive challenge across many fields, particularly where multi-dimensional data representation is essential. Quaternion matrices, with their ability to naturally model multi-dimensional data, offer a promising framework for this problem. This paper introduces the quaternion nuclear norm over the Frobenius norm (QNOF) as a novel nonconvex approximation for the rank of quaternion matrices. QNOF is parameter-free and scale-invariant. Utilizing quaternion singular value decomposition, we prove that solving the QNOF can be simplified to solving the singular value $L_1/L_2$ problem. Additionally, we extend the QNOF to robust quaternion matrix completion, employing the alternating direction multiplier method to derive solutions that guarantee weak convergence under mild conditions. Extensive numerical experiments validate the proposed model's superiority, consistently outperforming state-of-the-art quaternion methods.
\end{abstract}

% REQUIRED
\begin{keywords}
quaternion, color image inpainting, sparsity, ADMM, scale-invariant
\end{keywords}

% REQUIRED
\begin{MSCcodes}
65F35, 90C30, 94A08, 68U10
\end{MSCcodes}

\section{Introduction}

%In recent years, the problem of recovering low-rank matrices from degenerate observations has attracted significant attention. This issue has a wide range of applications in fields such as computer vision \cite{gu2017weighted}, image processing \cite{guo2022gaussian,li2022selecting}, and machine learning \cite{udell2019big}. As multidimensional data becomes more prevalent, additional tools for characterizing such data, including tensors and quaternions, are increasingly adopted.  Consequently, low-rank matrix recovery has been extended to low-rank tensor recovery \cite{zhang2019corrected,zhao2022robust} and quaternion low-rank matrix recovery \cite{jia2019robust,guo2025quaternion}.

Recovering low-rank matrices from incomplete or corrupted observations is a fundamental problem with wide-ranging applications in computer vision \cite{gu2017weighted}, image processing \cite{guo2022gaussian,li2022selecting}, and machine learning \cite{udell2019big} and so on. The prevalence of multidimensional data in modern applications has further driven the development of sophisticated tools such as tensors and quaternions for effectively representing and processing such data. Consequently, the scope of low-rank recovery has expanded beyond traditional matrices to include low-rank tensor recovery \cite{zhang2019corrected, zhao2022robust} and quaternion low-rank matrix recovery \cite{jia2019robust, guo2025quaternion}.

The problem of minimizing the rank of a matrix is expressed as:
\begin{equation}
 \min_{\bm{X}} \mathrm{rank}(\bm{X}) \quad \mathrm{s.t.} \quad \mathcal{A}(\bm{X}) = \bm{Y},
 \label{Model lowrank}
\end{equation}
%where $\bm{X} \in \mathbb{R}^{m \times n} $ is an unknown low-rank matrix, and $\bm{Y}$ is its corresponding observation. $\mathcal{A}$ is the linear map. For low-rank matrix recovery, rank minimization is usually NP-hard \cite{natarajan1995sparse}. 
where $\bm{X} \in \mathbb{R}^{m \times n}$ is the unknown low-rank matrix, $\bm{Y}$ denotes the observed data, and $\mathcal{A}$ is a linear mapping. However, solving this rank minimization problem is generally NP-hard \cite{natarajan1995sparse}. To overcome this difficulty, the nuclear norm, the tightest convex relaxation of the rank function \cite{recht2010guaranteed}, is often used as an approximation. The nuclear norm minimization (NNM) is described as
\begin{equation}
 \min_{\bm{X}} || \bm{X} ||_{*} \quad \mathrm{s.t.} \quad \mathcal{A}(\bm{X}) = \bm{Y}. 
\end{equation}
%Although there are strong theoretical guarantees for solving NNM problems using singular value thresholding, many studies have shown that such solutions often reach sub-optimality. This is because NNM imposes the same threshold on all singular values, contrary to the fact that larger singular values, which contain more information, should be penalized less. 
NNM provides strong theoretical guarantees and can be efficiently solved using singular value thresholding. However, in practice, its solutions are frequently suboptimal. This limitation arises because NNM applies an equal penalty to all singular values, disregarding the fact that larger singular values, which carry more significant information, should be penalized less.

%To further improve the NNM, many nonconvex functions have been introduced, e.g., truncated nuclear norms \cite{geng2018truncated}, weighted nuclear norms \cite{gu2014weighted,gu2017weighted}, Schatten $p$-norms \cite{lu2015nonconvex}. The most representative is the weighted nuclear norm minimization model (WNNM) proposed by Gu et al. \cite{gu2014weighted,gu2017weighted}. This model applies different weights to the singular values to ensure that the larger singular values undergo smaller shrinkage. Inspired by \cite{gu2014weighted,gu2017weighted}, \cite{xie2016weighted} proposed a weighted Schatten $p$-norms minimization (WSNM) by reweighting the Schatten $p$-norms.

To improve NNM, several nonconvex approaches have been proposed, such as truncated nuclear norms \cite{geng2018truncated}, weighted nuclear norms \cite{gu2014weighted,gu2017weighted}, and Schatten $p$-norms \cite{lu2015nonconvex}. Among these, the weighted nuclear norm minimization (WNNM) model \cite{gu2014weighted,gu2017weighted} is particularly noteworthy. WNNM assigns different weights to singular values, allowing larger singular values to undergo less shrinkage. Building on this concept, Xie et al. \cite{xie2016weighted} extended WNNM to the weighted Schatten $p$-norm minimization (WSNM) model, further refining the approximation by reweighting the Schatten $p$-norms.

With the increasing availability of multidimensional data, methods for effectively characterizing such data have garnered significant attention. Quaternions, as an extension of complex numbers, are widely employed for high-dimensional data such as color images \cite{jia2019color} and remote sensing images \cite{pan2023separable}, owing to their ability to efficiently represent multidimensional information. Recent advancements in quaternion low-rank matrix reconstruction have enhanced both theoretical understanding and practical applications. Theoretically, Jia et al. \cite{jia2019robust} demonstrated that a quaternion matrix can be exactly recovered with high probability under incoherence conditions, provided its rank is sufficiently low and the corrupted entries are sparse. This result was further extended to incorporate the nonlocal self-similarity (NSS) prior in images \cite{jia2022non}. Chen et al. \cite{chen2022color} proposed a novel minimization problem combining a nuclear norm with a three-channel weighted quadratic loss, offering error bounds for both clean and corrupted regions. As many quaternion problems ultimately involve solving quaternion linear systems, Jia et al. \cite{jia2021structure} developed the quaternion generalized minimum residual method (QGMRES) for efficiently solving such systems.

In practical applications, quaternion low-rank matrix recovery is extensively utilized in color image processing \cite{miao2021color,miao2020quaternion}. For example, quaternion weighted nuclear norm minimization (QWNNM), a direct extension of WNNM to quaternions, has been applied to tasks such as color image denoising and deblurring \cite{yu2019quaternion,huang2022quaternion}. Building on these works, Zhang et al. \cite{zhang2024quaternion} generalized WSNM to quaternions, proposing quaternion weighted Schatten $p$-norm minimization (QWSNM) for color image processing. Additionally, Chen et al. \cite{chen2019low} introduced several non-convex approximation models to better estimate singular values in noisy observations for low-rank quaternion matrices. Guo et al. \cite{guo2025quaternion} further advanced the field by proposing the quaternion nuclear norm minus Frobenius norm (QNMF), a novel nonconvex hybrid norm for color image reconstruction, achieving excellent performance across various color image tasks.

Recently, the $L_1$ norm-to-$L_2$ norm ratio ($L_1/L_2$) has gained significant attention for its superior approximation of $L_0$. The $L_1/L_2$ model shares $L_0$’s scale invariance and is parameter-free, distinguishing it from $L_p$ $(0<p<1)$ and $L_1-\alpha L_2$ $(\alpha>0)$ models. This concept originates from early works by Hoyer \cite{hoyer2002non} and Hurley and Rickard \cite{hurley2009comparing}. In recent studies, Yin et al. \cite{yin2014ratio} established the equivalence of $L_1/L_2$ and $L_0$ under nonnegative signals. Rahimi et al. \cite{rahimi2019scale} introduced the strong null space property (sNSP) and demonstrated that any sparse solution satisfies local optimality in the $L_1/L_2$ model under this condition. They also developed an alternating direction multiplier method (ADMM)-based algorithm for solving $L_1/L_2$. Subsequently, Wang et al. \cite{wang2021limited,wang2022minimizing} extended these findings to $L_1/L_2$ minimization over gradients in imaging applications. Tao \cite{tao2022minimization} derived the analytic solution for the proximal operator of $L_1/L_2$, proposed an ADMM-based splitting format, and proved global convergence under mild assumptions with a linear convergence rate under appropriate conditions. These contributions were further generalized in later works \cite{tao2023study,tao2024partly}. Additionally, $L_1/L_2$ has seen applications in sparse signal reconstruction \cite{wang2020accelerated,zeng2021analysis,ge2023analysis,xu2021analysis}, image segmentation \cite{wu2022efficient}, and image recovery \cite{chowdhury2024poissonian}.

To address the challenge of minimizing the rank of a quaternion matrix, it is crucial to identify a tighter and more practical relaxation for the quaternion rank minimization problem. In this work, we propose the quaternion nuclear norm over Frobenius norm (QNOF) minimization, inspired by the $L_1/L_2$ model, to approximate the rank of a quaternion matrix. The QNOF is both parameter-free and scale-invariant, consistent with the rank function property. 
Using quaternion singular value decomposition (QSVD), we prove that solving QNOF can be transformed into solving the $L_1/L_2$ singular value problem. This leads to a double simplification of the solution problem, i.e. from a matrix problem to a vector problem and from a quaternion space to a real space. We extend its application to matrix completion (MC), robust principal component analysis (RPCA), and robust matrix completion (RMC). By leveraging the alternating direction multiplier method (ADMM), we ensure convergence of the sequence under mild conditions. 
The most relevant prior works, \cite{gao2024low} and \cite{zheng2024scale}, generalize \cite{rahimi2019scale} to matrices and tensors. In contrast, our approach directly tackles the quaternion matrix problem. Instead of adopting the splitting method used in \cite{rahimi2019scale}, We transform the QNOF problem into a $L_1/L_2$ singular value minimization problem and solve it.

The main contributions of this paper are summarized as follows
\begin{itemize}
    \item We propose a new nonconvex quaternion low rank regularization, QNOF, which is parameter-free and scale-invariant compared to other popular nonconvex approximations of the rank function.

    \item To solve QNOF, we use QSVD and prove that solving QNOF can be transformed into solving the $L_1/L_2$ singular value minimization problem. This leads to a double simplification from solving the quaternion matrix problem to solving a real vector problem.

    \item We extend QNOF to matrix completion, robust principal component analysis, and robust matrix completion models. We use the ADMM scheme to solve these proposed models. In this scheme, the convergence of the whole sequence can be established by mild assumptions.

    % \item We conducted extensive experiments on real color images. The experimental results show that our proposed method can outperform several state-of-the-art convex or nonconvex algorithms.

\end{itemize}

% The rest of the paper is organized as follows. 
The relevant organization of this paper is as follows. Some foundations of quaternion algebra which are used in this paper are presented in \cref{sec:qab}. In \cref{sec:nof}, the QNOF model is presented and a solution scheme is given. Based on the QNOF model, in \cref{sec:alg}, we provide its generalization to matrix completion, robust principal component analysis, and robust matrix completion, and a proof of convergence is given. Numerical results are shown in \cref{sec:experiments} and conclusions are presented in \cref{sec:conclusions}.

\section{Quaternion algebraic basics}
\label{sec:qab}
The quaternion space $\mathbb{Q}$ is a generalization of both the real space $\mathbb{R}$ and the complex space $\mathbb{C}$, defined as: $\mathbb{ Q } = \{a_0 + a_1\bm{i} + a_2\bm{j} +a_3\bm{k}|a_0,a_1,a_2,a_3 \in \mathbb{R}\}$, where $\{1, \bm{i}, \bm{j}, \bm{k}\}$ forms a basis for $\mathbb{Q}$. The imaginary units $\bm{i}, \bm{j}, \bm{k}$ satisfy the following relations: 
$$\bm{i}^2=\bm{j}^2=\bm{k}^2=\bm{i}\bm{j}\bm{k}=-1,$$ 
$$\bm{i}\bm{j}=\bm{k}=-\bm{j}\bm{i}, \, \bm{j}\bm{k}=\bm{i}=-\bm{k}\bm{j},\, \mathrm{and}\, \bm{k}\bm{i}=\bm{j}=-\bm{i}\bm{k}.$$ 
One of the key properties of quaternion space $\mathbb{Q}$ is that quaternion multiplication is not commutative. For example, a quaternion variable is often denoted by placing a dot above the variable, such as $\dot{\bm{a}}$.

Let $\dot{\bm{a}} = a_0 + a_1\bm{i} + a_2\bm{j} + a_3\bm{k} \in \mathbb{Q}$, $\dot{\bm{b}} = b_0 + b_1\bm{i} + b_2\bm{j} + b_3\bm{k} \in \mathbb{Q}$, and $\lambda \in \mathbb{R}$. Their operations are as follows:
\begin{equation*}
\begin{aligned}
&\dot{\bm{a}} + \dot{\bm{b}} =   (a_0 + b_0) + (a_1+b_1)\bm{i} + (a_2+b_2)\bm{j} + (a_3+b_3)\bm{k}, \\
&\lambda\dot{\bm{a}} =  (\lambda a_0) + (\lambda a_1)\bm{i} + (\lambda a_2)\bm{j} + (\lambda a_3)\bm{k}, \\
&\dot{\bm{a}}\dot{\bm{b}} = (a_0b_0-a_1b_1-a_2b_2-a_3b_3) + (a_0b_1+a_1b_0+a_2b_3-a_3b_2)\bm{i} \\
&~~+ (a_0b_2-a_1b_3+a_2b_0+a_3b_1)\bm{j}  + (a_0b_3+a_1b_2-a_2b_1+a_3b_0)\bm{k}.
\end{aligned}
\end{equation*}
The conjugate and modulus of $\dot{\bm{a}}$ are defined as:
\begin{equation}
\begin{aligned}
\dot{\bm{a}}^{*} &= a_0 - a_1\bm{i} - a_2\bm{j} - a_3\bm{k}, \\
|\dot{\bm{a}}| &= \sqrt{a_{0}^{2}+a_{1}^{2}+a_{2}^{2}+a_{3}^{2}}.
\end{aligned}
\end{equation}
If $a_0 = 0$, $\dot{\bm{a}}$ is called a pure quaternion. Each quaternion $\dot{\bm{a}}$ can be uniquely represented as:
$$\dot{\bm{a}} = a_0 + a_1\bm{i} + (a_2 +a_3\bm{i})\bm{j} = c_{1} + c_{2}\bm{j},$$
where $c_{1}=a_0 + a_1\bm{i}$ and $c_{2}=a_2 +a_3\bm{i}$ are complex numbers.

For a quaternion matrix defined as $\dot{\bm{X}} = (\dot{x}_{ij}) \in \mathbb{ Q }^{m\times n}$, where $\dot{\bm{X}} = \bm{X}_0 + \bm{X}_1\bm{i} + \bm{X}_2\bm{j}+ \bm{X}_3\bm{k}$ and $\bm{X}_l\in \mathbb{R}^{m\times n}(l=0,1,2,3)$ are real matrices. The modulus of a quaternion matrix is $|\dot{\bm{X}}| = (|\dot{x}_{ij}|) \in \mathbb{ R }^{m\times n}$. For a color image, it can be represented as a pure quaternion matrix with the real part set to zero, i.e., each channel is represented as an imaginary part:
\begin{equation}
\begin{aligned}
\dot{\bm{X}}_{RGB} =  \bm{X}_{R}\bm{i} + \bm{X}_{G}\bm{j}+ \bm{X}_{B}\bm{k},
\end{aligned}
\end{equation}
where $\bm{X}_R$, $\bm{X}_G$, and $\bm{X}_B$ are the red, green, and blue channels of the color image.

The norms of quaternion vectors and matrices are defined as follows. The $l_1$, $l_2$, and $\infty$ norms of a quaternion vector are given by
$$
||\dot{\bm{a}}||_1:=\sum_{i}^{n}|a_i|,  \quad ||\dot{\bm{a}}||_2:=\sqrt{\sum_{i}^{n}|a_i|^2} \quad \mathrm{and}  \quad||\dot{\bm{a}}||_{\infty}:= \max_{1\leq i \leq n} |a_i|,
$$
respectively.
For a quaternion matrix, the $l_1$ and $\infty$ norms are defined as
$$
||\dot{\bm{X}}||_{1} := \sum_{i,j}| \dot{x}_{ij}| \quad \mathrm{and} \quad ||\dot{\bm{X}}||_{\infty} := \max_{i,j}| \dot{x}_{ij}|.
$$
The Frobenius norm of a quaternion matrix is given by
$$
||\dot{\bm{X}}||_{F} := \sqrt{\sum_{i,j}| \dot{x}_{ij}|^{2}} =  \sqrt{\mathrm{Tr}(\dot{\bm{X}}^{*}\dot{\bm{X}})} = \sqrt{\mathrm{Tr}(\dot{\bm{X}}\dot{\bm{X}}^{*})} = \sqrt{\sum_{k}^{\min{\{i,j\}}}  \sigma_{\dot{\bm{X}},k}^{2} },
$$
where $\sigma_{\dot{\bm{X}},k}$ denotes the $k$-th singular value and $\dot{\bm{X}}^{*}$ denotes the conjugate transpose of $\dot{\bm{X}}$. 
The nuclear norm of a quaternion matrix is defined as
$$
||\dot{\bm{X}}||_{*} := \sum_{k}^{\min{\{i,j\}}}  \sigma_{\dot{\bm{X}},k}.
$$
Finally, the singular value decomposition (SVD) of a quaternion matrix $\dot{\bm{X}}$ is presented in the following theorem.

\begin{theorem}[QSVD \cite{zhang1997quaternions}]
Given a quaternion matrix $\dot{\bm{X}} \in \mathbb{ Q }^{m\times n}$ with rank $r$.  There are two unitary quaternion matrices $\dot{\bm{U}} \in \mathbb{ Q }^{m\times m}$ and $\dot{\bm{V}} \in \mathbb{ Q }^{n\times n}$ satisfying $\dot{\bm{X}}  = \dot{\bm{U}}\left(\begin{array}{cc} \bm{\Sigma_{r}} & 0\\ 0&0 \end{array}\right)\dot{\bm{V}}^{*}$, where $\bm{\Sigma_{r}} = \mathrm{diag}(\sigma_1,\dots,\sigma_r)\in \mathbb{R}^{r\times r}$, and all singular values $\sigma_i>0, i=1,\dots,r$.
\label{qsvd}
\end{theorem}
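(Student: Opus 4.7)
The plan is to mimic the classical complex SVD argument via the Hermitian positive semi-definite quaternion matrix $\dot{\bm{X}}^* \dot{\bm{X}} \in \mathbb{Q}^{n \times n}$. First I would invoke a spectral theorem for Hermitian quaternion matrices to produce a unitary $\dot{\bm{V}} \in \mathbb{Q}^{n \times n}$ together with real scalars $\lambda_1 \geq \cdots \geq \lambda_n \geq 0$ such that $\dot{\bm{V}}^* \dot{\bm{X}}^* \dot{\bm{X}} \dot{\bm{V}} = \mathrm{diag}(\lambda_1, \ldots, \lambda_n)$. Since the quaternion rank of $\dot{\bm{X}}^* \dot{\bm{X}}$ agrees with that of $\dot{\bm{X}}$, exactly $r$ of these eigenvalues are strictly positive, and I would set $\sigma_i := \sqrt{\lambda_i}$ for $i=1,\ldots,r$ together with $\bm{\Sigma}_r := \mathrm{diag}(\sigma_1,\ldots,\sigma_r)$.

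Next, I would partition $\dot{\bm{V}} = [\dot{\bm{V}}_1,\, \dot{\bm{V}}_2]$ with $\dot{\bm{V}}_1 \in \mathbb{Q}^{n \times r}$ collecting the columns that correspond to the positive eigenvalues, and define $\dot{\bm{U}}_1 := \dot{\bm{X}} \dot{\bm{V}}_1 \bm{\Sigma}_r^{-1} \in \mathbb{Q}^{m \times r}$. A direct check gives $\dot{\bm{U}}_1^* \dot{\bm{U}}_1 = \bm{\Sigma}_r^{-1} \dot{\bm{V}}_1^* (\dot{\bm{X}}^* \dot{\bm{X}}) \dot{\bm{V}}_1 \bm{\Sigma}_r^{-1} = \bm{I}_r$, so the columns of $\dot{\bm{U}}_1$ are quaternion-orthonormal. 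I would then extend $\dot{\bm{U}}_1$ to a full unitary $\dot{\bm{U}} = [\dot{\bm{U}}_1,\, \dot{\bm{U}}_2] \in \mathbb{Q}^{m \times m}$ by taking $\dot{\bm{U}}_2$ to be an orthonormal basis of the orthogonal complement of the range of $\dot{\bm{U}}_1$, obtained through a quaternion Gram--Schmidt procedure. Because the columns of $\dot{\bm{V}}_2$ span the null space of $\dot{\bm{X}}^* \dot{\bm{X}}$, which coincides with the null space of $\dot{\bm{X}}$, one has $\dot{\bm{X}} \dot{\bm{V}}_2 = \bm{0}$. Combining, $\dot{\bm{X}} \dot{\bm{V}} = [\dot{\bm{U}}_1 \bm{\Sigma}_r,\, \bm{0}] = \dot{\bm{U}} \bigl(\begin{smallmatrix} \bm{\Sigma}_r & 0 \\ 0 & 0 \end{smallmatrix}\bigr)$, and right-multiplying by $\dot{\bm{V}}^*$ yields the claimed factorization.

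The main obstacle is the spectral theorem for Hermitian quaternion matrices itself, since the noncommutativity of $\mathbb{Q}$ forces one to work with right eigenvalues and to rule out genuinely quaternionic eigenvalues of a Hermitian matrix. I would handle this via the complex adjoint representation $\chi(\dot{\bm{H}}) = \bigl(\begin{smallmatrix} \bm{A} & \bm{B} \\ -\overline{\bm{B}} & \overline{\bm{A}} \end{smallmatrix}\bigr)$, where $\dot{\bm{H}} = \dot{\bm{X}}^* \dot{\bm{X}}$ is decomposed as $\bm{A} + \bm{B}\bm{j}$ with complex $\bm{A},\bm{B}$: the map $\chi$ sends Hermitian quaternion matrices to Hermitian complex matrices of twice the size, and its eigenvalues are real and appear in conjugate-symmetric pairs. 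A standard unitary diagonalization of $\chi(\dot{\bm{H}})$, together with the structural symmetry of $\chi$ under the involution induced by right multiplication by $\bm{j}$, lifts back to a quaternion unitary diagonalization of $\dot{\bm{H}}$. Once this spectral result is granted, the construction above is a rote translation of the classical complex SVD argument.
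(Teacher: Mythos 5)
The paper does not prove this statement; it is quoted as a known result from Zhang's 1997 survey on quaternion matrices, so there is no in-paper argument to compare against. Your proof is correct and is essentially the standard derivation (and the one underlying the cited reference): reduce to the spectral theorem for the Hermitian positive semi-definite matrix $\dot{\bm{X}}^{*}\dot{\bm{X}}$, which you justify via the complex adjoint representation, then build $\dot{\bm{U}}_1=\dot{\bm{X}}\dot{\bm{V}}_1\bm{\Sigma}_r^{-1}$ and extend to a full unitary. The quaternion-specific points are handled properly --- you work with right eigenvalues, note that Hermitian quaternion matrices have real eigenvalues, and use that $\bm{\Sigma}_r$ is real (hence central) so the verification $\dot{\bm{U}}_1^{*}\dot{\bm{U}}_1=\bm{I}_r$ and the final assembly go through despite noncommutativity.
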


\section{Quaternion nuclear norm over Frobenius norm minimization}
\label{sec:nof}

\subsection{The proposed QNOF model}
The quaternion nuclear norm over the Frobenius norm (QNOF) is defined through the following optimization model:
\begin{equation}
\begin{aligned}
\min_{\dot{\bm{X}}} \frac{1}{2}|| \dot{\bm{Y}}-\dot{\bm{X}} ||^{2}_F + \lambda\left(\frac{||\dot{\bm{X}}||_{*}}{||\dot{\bm{X}}||_{F}} \right), \\ 
\end{aligned}
\label{QNOF}
\end{equation}
where $\lambda$ is a regularization parameter. For convenience, we define:
\begin{equation}
||\dot{\bm{X}} ||_{\mathrm{QNOF}} = \frac{||\dot{\bm{X}}||_{*}}{||\dot{\bm{X}}||_{F}}.
\end{equation}
We next discuss scaling and unitary transformation invariants and boundedness of $||\cdot||_{\mathrm{QNOF}}$.
%%%%%

\begin{proposition}[scale invariance] Let $\dot{\bm{A}} \in \mathbb{ Q }^{m\times n}$, we have
$$ ||\dot{\bm{A}}||_{\mathrm{QNOF}} = ||c\dot{\bm{A}}||_{\mathrm{QNOF}},$$
which holds for any nonzero scalar $c$.
\label{p3.1}
\end{proposition}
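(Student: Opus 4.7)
The plan is to verify both the numerator and the denominator of $||\cdot||_{\mathrm{QNOF}}$ are positively homogeneous of degree one under scalar multiplication, so the ratio is scale invariant. Since the statement speaks of "any nonzero scalar $c$," I would treat $c$ as an arbitrary nonzero quaternion (the real case being an immediate specialization), because quaternionic scaling is the most natural setting here.

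First I would handle the Frobenius norm, which is the easy piece: from the entrywise definition $||\dot{\bm{A}}||_F^2=\sum_{i,j}|\dot a_{ij}|^2$ and the identity $|c\dot a_{ij}|=|c|\,|\dot a_{ij}|$ for the quaternion modulus, we get $||c\dot{\bm{A}}||_F=|c|\,||\dot{\bm{A}}||_F$. Next I would do the nuclear norm using \cref{qsvd}. Writing the QSVD $\dot{\bm{A}}=\dot{\bm{U}}\,\mathrm{diag}(\sigma_1,\dots,\sigma_r)\,\dot{\bm{V}}^*$, factor $c=|c|\dot{q}$ with $\dot{q}:=c/|c|$ a unit quaternion, and rewrite
\begin{equation*}
c\dot{\bm{A}}=(\dot{q}\dot{\bm{U}})\,\mathrm{diag}(|c|\sigma_1,\dots,|c|\sigma_r)\,\dot{\bm{V}}^*.
\end{equation*}
The key check is that $\dot{q}\dot{\bm{U}}$ is still unitary: using $(\dot{q}\dot{\bm{U}})^*=\dot{\bm{U}}^*\dot{q}^*$ and $\dot{q}^*\dot{q}=|\dot{q}|^2=1$, one gets $(\dot{q}\dot{\bm{U}})^*(\dot{q}\dot{\bm{U}})=\bm{I}$. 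Hence the displayed factorization is a valid QSVD of $c\dot{\bm{A}}$, the singular values of $c\dot{\bm{A}}$ are exactly $|c|\sigma_k$, and therefore $||c\dot{\bm{A}}||_*=|c|\,||\dot{\bm{A}}||_*$.

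Dividing the two identities, the common factor $|c|\neq 0$ cancels:
\begin{equation*}
||c\dot{\bm{A}}||_{\mathrm{QNOF}}=\frac{||c\dot{\bm{A}}||_*}{||c\dot{\bm{A}}||_F}=\frac{|c|\,||\dot{\bm{A}}||_*}{|c|\,||\dot{\bm{A}}||_F}=||\dot{\bm{A}}||_{\mathrm{QNOF}}.
\end{equation*}
I would add a brief remark that when $\dot{\bm{A}}=\bm{0}$ the quotient is undefined, which is why the convention is to exclude it, and that $c\neq 0$ is needed to keep the Frobenius denominator nonzero.

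There is no genuine obstacle in this proof; the only point that could trip a reader is the noncommutativity of quaternion multiplication, which shows up when one moves $c$ past $\dot{\bm{U}}$ and $\bm{\Sigma}$. I would emphasize that because $\bm{\Sigma}$ has real entries it commutes with $\dot{q}$, so the rearrangement $c\dot{\bm{U}}\bm{\Sigma}=(|c|\dot{q})\dot{\bm{U}}\bm{\Sigma}=(\dot{q}\dot{\bm{U}})(|c|\bm{\Sigma})$ is legitimate, and the verification $\dot{q}^*\dot{q}=1$ is what preserves unitarity; these are the only substantive checks.
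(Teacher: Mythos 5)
Your proof is correct and follows essentially the same route as the paper's: both reduce the claim to the fact that the singular values of $c\dot{\bm{A}}$ are $|c|$ times those of $\dot{\bm{A}}$, so the $\ell_1/\ell_2$ ratio of the singular value vector is unchanged. The only difference is that you explicitly justify this scaling for a general nonzero quaternion $c$ (via the factorization $c=|c|\dot{q}$ and the unitarity of $\dot{q}\dot{\bm{U}}$), a detail the paper's one-line computation leaves implicit.
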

\begin{proof}
    To maintain generality, we assume $m \geq n$. The singular values of $\dot{\bm{A}}$ are expressed as
    $$
    \bm{\sigma} = [\sigma_1, \sigma_2, \dots, \sigma_n] \in \mathbb{R}^{n}. 
    $$
Using the definitions of the nuclear and Frobenius norms, we have:
    \begin{equation}
        ||c\dot{\bm{A}}||_{\mathrm{QNOF}} = \frac{||c\bm{\sigma}||_1}{||c\bm{\sigma}||_2} = \frac{||\bm{\sigma}||_1}{||\bm{\sigma}||_2} = ||\dot{\bm{A}}||_{\mathrm{QNOF}}.
    \end{equation}
\end{proof}
% \begin{proposition}[unitary invariance] For any $\dot{\bm{A}} \in \mathbb{Q}^{m \times n}$ and orthogonal quaternion matrices $\dot{\bm{P}}$ and $\dot{\bm{Q}}$,
% $$
% ||\dot{\bm{A}}||_{\mathrm{QNOF}} = ||\dot{\bm{P}}\dot{\bm{A}}||_{\mathrm{QNOF}} = ||\dot{\bm{A}}\dot{\bm{Q}}^{*}||_{\mathrm{QNOF}} = ||\dot{\bm{P}}\dot{\bm{A}}\dot{\bm{Q}}^{*}||_{\mathrm{QNOF}}.
% $$
% \label{p3.2}
% \end{proposition}
% \begin{proof}
%     We prove only $||\dot{\bm{P}}\dot{\bm{A}}||_{\mathrm{QNOF}}=||\dot{\bm{A}}||_{\mathrm{QNOF}}$. The rest of the cases are similar.
%     \begin{equation}
%         ||\dot{\bm{P}}\dot{\bm{A}}||_{\mathrm{QNOF}} = \frac{||(\dot{\bm{P}}\dot{\bm{U}})\dot{\bm{\Sigma}}\dot{\bm{V}}^{*}||_{*}}{||(\dot{\bm{P}}\dot{\bm{U}})\dot{\bm{\Sigma}}\dot{\bm{V}}^{*}||_{F}} = \frac{||\bm{\sigma}||_1}{||\bm{\sigma}||_2}  = ||\dot{\bm{A}}||_{\mathrm{QNOF}},
%     \end{equation}
%     where $\dot{\bm{A}} = \dot{\bm{U}}\dot{\bm{\Sigma}}\dot{\bm{V}}^{*}$ is obtained from QSVD.
% \end{proof}
\begin{proposition}[unitary invariance] Let $\dot{\bm{A}} \in \mathbb{ Q }^{m\times n}$, QNOF satisfies 
$$
||\dot{\bm{A}}||_{\mathrm{QNOF}} = ||\dot{\bm{P}}\dot{\bm{A}}||_{\mathrm{QNOF}} = ||\dot{\bm{A}}\dot{\bm{Q}}^{*}||_{\mathrm{QNOF}} = ||\dot{\bm{P}}\dot{\bm{A}}\dot{\bm{Q}}^{*}||_{\mathrm{QNOF}},
$$
where $\dot{\bm{P}}$ and $\dot{\bm{Q}}$ are any orthogonal quaternion matrices.
\label{p3.2}
\end{proposition}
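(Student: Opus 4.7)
The plan is to reduce everything to singular values via the QSVD in \cref{qsvd}, since both $\|\cdot\|_*$ and $\|\cdot\|_F$ — and hence their ratio — are functions only of the singular values of a quaternion matrix. So the statement will follow at once if I can argue that left- and right-multiplication by unitary quaternion matrices leaves the singular value spectrum invariant.

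Concretely, I would first write the QSVD of $\dot{\bm{A}}$ as $\dot{\bm{A}}=\dot{\bm{U}}\bm{\Sigma}\dot{\bm{V}}^{*}$ with $\bm{\Sigma}$ the (real, diagonal) singular-value block. Then for unitary $\dot{\bm{P}}$ and $\dot{\bm{Q}}$,
\begin{equation*}
\dot{\bm{P}}\dot{\bm{A}}\dot{\bm{Q}}^{*}=(\dot{\bm{P}}\dot{\bm{U}})\,\bm{\Sigma}\,(\dot{\bm{Q}}\dot{\bm{V}})^{*},
\end{equation*}
and I would observe that $\dot{\bm{P}}\dot{\bm{U}}$ and $\dot{\bm{Q}}\dot{\bm{V}}$ are again unitary because the product of unitary quaternion matrices is unitary (direct check using $(\dot{\bm{P}}\dot{\bm{U}})^{*}(\dot{\bm{P}}\dot{\bm{U}})=\dot{\bm{U}}^{*}\dot{\bm{P}}^{*}\dot{\bm{P}}\dot{\bm{U}}=\dot{\bm{I}}$). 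Hence the displayed expression is itself a valid QSVD of $\dot{\bm{P}}\dot{\bm{A}}\dot{\bm{Q}}^{*}$, so by the uniqueness of the singular-value spectrum the two matrices share the same singular values $\sigma_1,\ldots,\sigma_r$.

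With the singular values equal, I would conclude
\begin{equation*}
\|\dot{\bm{P}}\dot{\bm{A}}\dot{\bm{Q}}^{*}\|_{*}=\sum_{k}\sigma_{k}=\|\dot{\bm{A}}\|_{*},\qquad \|\dot{\bm{P}}\dot{\bm{A}}\dot{\bm{Q}}^{*}\|_{F}=\Bigl(\sum_{k}\sigma_{k}^{2}\Bigr)^{1/2}=\|\dot{\bm{A}}\|_{F},
\end{equation*}
and therefore $\|\dot{\bm{P}}\dot{\bm{A}}\dot{\bm{Q}}^{*}\|_{\mathrm{QNOF}}=\|\dot{\bm{A}}\|_{\mathrm{QNOF}}$. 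The one-sided cases $\|\dot{\bm{P}}\dot{\bm{A}}\|_{\mathrm{QNOF}}$ and $\|\dot{\bm{A}}\dot{\bm{Q}}^{*}\|_{\mathrm{QNOF}}$ are obtained by taking $\dot{\bm{Q}}=\dot{\bm{I}}$ or $\dot{\bm{P}}=\dot{\bm{I}}$ respectively.

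I do not expect a real obstacle here: the only subtlety specific to the quaternion setting is the noncommutativity of multiplication, but this never interferes because the verification that $\dot{\bm{P}}\dot{\bm{U}}$ is unitary uses only associativity and the ordinary conjugate-transpose identity $(\dot{\bm{P}}\dot{\bm{U}})^{*}=\dot{\bm{U}}^{*}\dot{\bm{P}}^{*}$, both of which hold in $\mathbb{Q}^{m\times m}$. If anything warrants care, it is simply citing/using the fact that the QSVD determines the singular values uniquely, which is built into \cref{qsvd}.
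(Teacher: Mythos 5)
Your proposal is correct and follows essentially the same route as the paper: write the QSVD of $\dot{\bm{A}}$, note that the product of unitary quaternion matrices is unitary so that $(\dot{\bm{P}}\dot{\bm{U}})\bm{\Sigma}(\dot{\bm{Q}}\dot{\bm{V}})^{*}$ is again a valid QSVD with the same singular values, and conclude that both the nuclear and Frobenius norms, hence their ratio, are unchanged. The only cosmetic difference is that you prove the two-sided case and specialize, whereas the paper proves the left-multiplication case and declares the rest similar.
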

\begin{proof}
    We prove only $||\dot{\bm{P}}\dot{\bm{A}}||_{\mathrm{QNOF}}=||\dot{\bm{A}}||_{\mathrm{QNOF}}$. The rest of the cases are similar.
    \begin{equation}
        ||\dot{\bm{P}}\dot{\bm{A}}||_{\mathrm{QNOF}} = \frac{||(\dot{\bm{P}}\dot{\bm{U}})\dot{\bm{\Sigma}}\dot{\bm{V}}^{*}||_{*}}{||(\dot{\bm{P}}\dot{\bm{U}})\dot{\bm{\Sigma}}\dot{\bm{V}}^{*}||_{F}} = \frac{||\bm{\sigma}||_1}{||\bm{\sigma}||_2}  = ||\dot{\bm{A}}||_{\mathrm{QNOF}},
    \end{equation}
    where $\dot{\bm{A}} = \dot{\bm{U}}\dot{\bm{\Sigma}}\dot{\bm{V}}^{*}$ is obtained from QSVD.
\end{proof}
\begin{proposition}[boundedness] For any nonzero $\dot{\bm{A}} \in \mathbb{ Q }^{m\times n}$, we have
$$
1 \leq ||\dot{\bm{A}}||_{\mathrm{QNOF}} \leq \sqrt{\mathrm{rank}(\dot{\bm{A}})} \leq \min\{\sqrt{m}, \sqrt{n}\}.
$$
\label{p3.3}
\end{proposition}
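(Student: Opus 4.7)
The plan is to reduce the matrix inequality to the classical $\ell_1/\ell_2$ bound on the singular value vector, exactly as in Proposition \ref{p3.1}. By \cref{qsvd}, write $\dot{\bm{A}} = \dot{\bm{U}}\,\mathrm{diag}(\sigma_1,\dots,\sigma_r)\,\dot{\bm{V}}^{*}$ with $r = \mathrm{rank}(\dot{\bm{A}})$ and $\sigma_i > 0$. By the definitions recalled just before \cref{qsvd},
$$
\|\dot{\bm{A}}\|_{*} = \sum_{i=1}^{r}\sigma_i = \|\bm{\sigma}\|_1, \qquad \|\dot{\bm{A}}\|_F = \sqrt{\sum_{i=1}^{r}\sigma_i^{2}} = \|\bm{\sigma}\|_2,
$$
so $\|\dot{\bm{A}}\|_{\mathrm{QNOF}} = \|\bm{\sigma}\|_1/\|\bm{\sigma}\|_2$, where $\bm{\sigma} \in \mathbb{R}^{r}$ has strictly positive entries (this is the key observation: the quaternionic structure drops out entirely once we pass to singular values).

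For the lower bound, I would expand $\|\bm{\sigma}\|_1^{2} = \sum_i \sigma_i^{2} + 2\sum_{i<j}\sigma_i\sigma_j \geq \sum_i \sigma_i^{2} = \|\bm{\sigma}\|_2^{2}$, using $\sigma_i > 0$, giving $\|\bm{\sigma}\|_1 \geq \|\bm{\sigma}\|_2$ and hence $\|\dot{\bm{A}}\|_{\mathrm{QNOF}} \geq 1$. For the middle inequality, I would apply Cauchy--Schwarz to the vectors $\bm{\sigma}$ and $\bm{1} \in \mathbb{R}^{r}$:
$$
\|\bm{\sigma}\|_1 = \sum_{i=1}^{r}\sigma_i \cdot 1 \leq \Bigl(\sum_{i=1}^{r}\sigma_i^{2}\Bigr)^{1/2}\Bigl(\sum_{i=1}^{r}1\Bigr)^{1/2} = \sqrt{r}\,\|\bm{\sigma}\|_2,
$$
so $\|\dot{\bm{A}}\|_{\mathrm{QNOF}} \leq \sqrt{r} = \sqrt{\mathrm{rank}(\dot{\bm{A}})}$. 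The final inequality is immediate from the standard bound $\mathrm{rank}(\dot{\bm{A}}) \leq \min\{m,n\}$ for quaternion matrices.

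No step is really an obstacle here: the main thing to check is only that the nuclear and Frobenius norms of a quaternion matrix genuinely coincide with $\ell_1$ and $\ell_2$ of its singular value vector, which is exactly how they are defined in the preceding section, and that the rank of a quaternion matrix is still bounded by $\min\{m,n\}$, which follows from \cref{qsvd}. Once this reduction is in place, the three inequalities are just $\ell_1 \geq \ell_2$ for nonnegative vectors, Cauchy--Schwarz, and the rank bound, respectively, with equality in the lower bound iff $r = 1$ (i.e.\ $\dot{\bm{A}}$ has rank one) and in the upper bound iff all nonzero singular values are equal.
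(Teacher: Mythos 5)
Your proposal is correct and follows essentially the same route as the paper: reduce to the singular value vector $\bm{\sigma}$ via the definitions of the nuclear and Frobenius norms, then apply the standard bounds $\|\bm{\sigma}\|_2 \leq \|\bm{\sigma}\|_1 \leq \sqrt{\|\bm{\sigma}\|_0}\,\|\bm{\sigma}\|_2$ together with $\|\bm{\sigma}\|_0 = \mathrm{rank}(\dot{\bm{A}}) \leq \min\{m,n\}$. The only difference is that you spell out the elementary proofs of these vector inequalities (expanding the square and Cauchy--Schwarz), which the paper simply cites as known.
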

\begin{proof}
%By $||\bm{x}||_2 \leq ||\bm{x}||_1 \leq \sqrt{||\bm{x}||_0}||\bm{x}||_2 $ for any vector $\bm{x}$, we obtain
For any vector $\bm{x}$, the inequalities $|\bm{x}|_2 \leq |\bm{x}|_1 \leq \sqrt{|\bm{x}|_0}|\bm{x}|_2$ yield:
\begin{equation}
    1 \leq \frac{||\bm{\sigma}||_1}{||\bm{\sigma}||_2} \leq \sqrt{||\bm{\sigma}||_0}.
\end{equation}
Consequently,
\begin{equation*}
 1 \leq ||\dot{\bm{A}}||_{\mathrm{QNOF}} \leq \sqrt{\mathrm{rank}(\dot{\bm{A}})} \leq \min\{\sqrt{m}, \sqrt{n}\}.
\end{equation*}
\end{proof}

From Properties \ref{p3.1}-\ref{p3.3}, we observe that $||\cdot||_{\mathrm{QNOF}}$ exhibits scale invariance and unitary invariance, aligning closely with the properties of the rank function. Thus, $||\cdot||_{\mathrm{QNOF}}$  serves as a superior nonconvex approximation of the rank function compared to other norms like the nuclear norm.
To solve \eqref{QNOF}, we introduce a generalized version of Von Neumann’s trace inequality for quaternion matrices \cite{lewis1995convex,mirsky1975trace}.

\begin{lemma}
For $\dot{\bm{A}}, \dot{\bm{B}} \in \mathbb{Q}^{m \times n}$ with $m \geq n$, the following inequality holds: 
\begin{equation}
\mathrm{Re}(\mathrm{Tr}(\dot{\bm{A}}^{*}\dot{\bm{B}})) \leq \bm{\Sigma_{\dot{A}}}^{\top}\bm{\Sigma_{\dot{B}}},
\label{v-n}
\end{equation}
where $ \sigma_{\dot{\bm{A}},1} \geq \cdots \geq \sigma_{\dot{\bm{A}},n} \geq 0$ and
$ \sigma_{\dot{\bm{B}},1} \geq \cdots \geq \sigma_{\dot{\bm{B}},n} \geq 0 $ are the descending singular values of $\dot{\bm{A}}$ and $\dot{\bm{B}}$, respectively. The case of equality occurs if and only if it is possible to find two unitary $\dot{\bm{U}} \in \mathbb{ Q }^{m\times m}$ and $\dot{\bm{V}} \in \mathbb{ Q }^{n\times n}$ that simultaneously singular value decompose $\dot{\bm{A}}$ and $\dot{\bm{B}}$ in the sense that
\begin{equation}
\dot{\bm{A}} = \dot{\bm{U}} \bm{\Sigma_{\dot{A}}} \dot{\bm{V}}^{*} \quad and \quad
\dot{\bm{B}} = \dot{\bm{U}} \bm{\Sigma_{\dot{B}}} \dot{\bm{V}}^{*},
\end{equation}
where $\bm{\Sigma_{\dot{A}}} = \diag(\sigma_{\dot{\bm{A}},1},  \cdots,\sigma_{\dot{\bm{A}},n} )$ and $\bm{\Sigma_{\dot{B}}} = \diag(\lambda_{\bm{B},1},  \cdots, \lambda_{\bm{B},n})$.
\label{V-N T}
\end{lemma}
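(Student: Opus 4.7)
The plan is to reduce the inequality to a scalar rearrangement statement by exploiting QSVD (\cref{qsvd}) and the cyclic invariance of the real part of the trace. First I would write $\dot{\bm{A}} = \dot{\bm{U}}_A \bm{\Sigma}_{\dot{\bm{A}}} \dot{\bm{V}}_A^{*}$ and $\dot{\bm{B}} = \dot{\bm{U}}_B \bm{\Sigma}_{\dot{\bm{B}}} \dot{\bm{V}}_B^{*}$, then compute
\begin{equation*}
\dot{\bm{A}}^{*}\dot{\bm{B}} = \dot{\bm{V}}_A \bm{\Sigma}_{\dot{\bm{A}}} \dot{\bm{U}}_A^{*} \dot{\bm{U}}_B \bm{\Sigma}_{\dot{\bm{B}}} \dot{\bm{V}}_B^{*}.
\end{equation*}
Since $\mathrm{Re}(\dot{x}\dot{y}) = \mathrm{Re}(\dot{y}\dot{x})$ for any two quaternions, one has $\mathrm{Re}(\mathrm{Tr}(\cdot))$ invariant under cyclic rotation of quaternion factors; this is a point worth verifying carefully because ordinary quaternion trace is \emph{not} cyclic. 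Setting $\dot{\bm{P}} := \dot{\bm{U}}_A^{*}\dot{\bm{U}}_B$ and $\dot{\bm{Q}} := \dot{\bm{V}}_B^{*}\dot{\bm{V}}_A$, both unitary, the problem reduces to proving
\begin{equation*}
\mathrm{Re}(\mathrm{Tr}(\bm{\Sigma}_{\dot{\bm{A}}} \dot{\bm{P}} \bm{\Sigma}_{\dot{\bm{B}}} \dot{\bm{Q}})) \leq \sum_{i=1}^{n} \sigma_{\dot{\bm{A}},i} \sigma_{\dot{\bm{B}},i}.
\end{equation*}

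Next I would expand the left-hand side entrywise. Because $\bm{\Sigma}_{\dot{\bm{A}}}$ and $\bm{\Sigma}_{\dot{\bm{B}}}$ are real diagonal matrices, the real scalars factor out of $\mathrm{Re}(\cdot)$, yielding
\begin{equation*}
\mathrm{Re}(\mathrm{Tr}(\bm{\Sigma}_{\dot{\bm{A}}} \dot{\bm{P}} \bm{\Sigma}_{\dot{\bm{B}}} \dot{\bm{Q}})) = \sum_{i,j} \sigma_{\dot{\bm{A}},i}\,\sigma_{\dot{\bm{B}},j}\, t_{ij}, \qquad t_{ij} := \mathrm{Re}(\dot{p}_{ij}\dot{q}_{ji}).
\end{equation*}
The key observation is that $|t_{ij}| \leq |\dot{p}_{ij}||\dot{q}_{ji}| \leq \tfrac{1}{2}(|\dot{p}_{ij}|^{2} + |\dot{q}_{ji}|^{2})$ by AM-GM, so unitarity of $\dot{\bm{P}}$ and $\dot{\bm{Q}}$ (rows and columns of unit $\ell_2$ norm) gives $\sum_j |t_{ij}| \leq 1$ and $\sum_i |t_{ij}| \leq 1$. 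Thus $(t_{ij})$ is doubly substochastic in modulus.

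The main obstacle is the final combinatorial step: bounding $\sum_{i,j} \sigma_{\dot{\bm{A}},i}\sigma_{\dot{\bm{B}},j} t_{ij}$ by the ``diagonal'' sum $\sum_i \sigma_{\dot{\bm{A}},i}\sigma_{\dot{\bm{B}},i}$. I would extend $(t_{ij})$ to a $2n\times 2n$ doubly stochastic matrix and invoke the Birkhoff–von Neumann theorem, writing it as a convex combination of permutation matrices; the classical rearrangement inequality applied to the descending sequences $\sigma_{\dot{\bm{A}},\cdot}$ and $\sigma_{\dot{\bm{B}},\cdot}$ then yields the desired bound on each extreme point, and hence on the convex combination. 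This step is identical in spirit to the classical proof \cite{mirsky1975trace}, but it is the one place where care with the quaternion structure pays off, since all scalars involved are now real.

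For the equality characterization, I would trace back: equality forces $t_{ij} = |\dot{p}_{ij}||\dot{q}_{ji}|$ and concentrates the support of the doubly stochastic extension on a permutation that matches $\sigma_{\dot{\bm{A}},\cdot}$ with $\sigma_{\dot{\bm{B}},\cdot}$ monotonically (up to tied singular values). Translating these conditions back through the definitions of $\dot{\bm{P}}$ and $\dot{\bm{Q}}$ shows that $\dot{\bm{U}}_A$ and $\dot{\bm{U}}_B$ (resp.\ $\dot{\bm{V}}_A$ and $\dot{\bm{V}}_B$) can be chosen to coincide on the relevant blocks, which is precisely the simultaneous QSVD condition stated in the lemma. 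I expect this equality direction, together with careful handling of singular value multiplicities, to require the most attention.
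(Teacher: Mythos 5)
Your proposal follows essentially the same route as the paper's proof: QSVD of both matrices, cyclic invariance of $\mathrm{Re}(\mathrm{Tr}(\cdot))$ to reduce to a bilinear sum in the singular values weighted by entries of two unitary products, the AM--GM bound $|\dot{p}_{ij}||\dot{q}_{ji}| \leq \tfrac{1}{2}(|\dot{p}_{ij}|^{2}+|\dot{q}_{ji}|^{2})$, and then the doubly (sub)stochastic/rearrangement argument for the final combinatorial step, which the paper simply outsources to Mirsky's lemma. Your version is correct and in fact spells out the Birkhoff--von Neumann step and the multiplicity issues in the equality case more carefully than the paper does.
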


\begin{proof}
By Theorem \ref{qsvd}, there exist unitary matrices $\dot{\bm{U}}_i \in \mathbb{Q}^{m \times m}$ and $\dot{\bm{V}}_i \in \mathbb{Q}^{n \times n}$ $(i = 1, 2)$ such that 
\begin{equation}
\dot{\bm{A}} = \dot{\bm{U}}_1 \bm{\Sigma_{\dot{A}}} \dot{\bm{V}}_1^{*} \quad and \quad
\dot{\bm{B}} = \dot{\bm{U}}_2 \bm{\Sigma_{\dot{B}}} \dot{\bm{V}}_2^{*},
\end{equation}
where $\bm{\Sigma_{\dot{A}}} = \diag(\sigma_{\dot{\bm{A}},1},  \cdots,\sigma_{\dot{\bm{A}},n} )$ and $\bm{\Sigma_{\dot{B}}} = \diag(\lambda_{\bm{B},1},  \cdots, \lambda_{\bm{B},n})$ are the singular value matrices of $\dot{\bm{A}}$ and $\dot{\bm{B}}$, respectively.

For quaternion matrices, $\mathrm{Tr}(\dot{\bm{A}}^{*}\dot{\bm{B}}) = \mathrm{Tr}(\dot{\bm{B}}^{*}\dot{\bm{A}})$ does not generally hold due to the non-commutativity of quaternion multiplication. 
However, the real part satisfies $\mathrm{Re}(\mathrm{Tr}(\dot{\bm{A}}^{}\dot{\bm{B}})) = \mathrm{Re}(\mathrm{Tr}(\dot{\bm{B}}^{}\dot{\bm{A}}))$. Using this, we
\begin{equation}
\begin{aligned}
\mathrm{Re}(\mathrm{Tr}(\dot{\bm{A}}^{*}\dot{\bm{B}})) &= \mathrm{Re}(\mathrm{Tr}(\dot{\bm{V}}_1 \bm{\Sigma_{\dot{A}}}^{\top} \dot{\bm{U}}_1^{*} \dot{\bm{U}}_2 \bm{\Sigma_{\dot{B}}} \dot{\bm{V}}_2^{*})) \\
&= \mathrm{Re}(\mathrm{Tr}(\dot{\bm{V}}_2^{*} \dot{\bm{V}}_1 \bm{\Sigma_{\dot{A}}}^{\top} \dot{\bm{U}}_1^{*} \dot{\bm{U}}_2 \bm{\Sigma_{\dot{B}}} )) \\
&= \mathrm{Re}(\mathrm{Tr}(\dot{\bm{V}} \bm{\Sigma_{\dot{A}}}^{\top} \dot{\bm{U}} \bm{\Sigma_{\dot{B}}} )) \\
&= \mathrm{Re}\left( \sum_{r,s=1}^{n} \dot{v}_{r,s}\dot{u}_{s,r}\sigma_{\dot{\bm{A}},s}\sigma_{\dot{\bm{B}},r} \right), 
\end{aligned}
\end{equation}
where $\dot{\bm{V}} = (\dot{v}_{r,s})_{n\times n} = \dot{\bm{V}}_2^{*} \dot{\bm{V}}_1$ and $\dot{\bm{U}} = (\dot{u}_{r,s})_{m\times m} = \dot{\bm{U}}_1^{*} \dot{\bm{U}}_2$ are two unitary matrices. Hence, 
\begin{equation}
\begin{aligned}
\mathrm{Re}\left( \sum_{r,s=1}^{n} \dot{v}_{r,s}\dot{u}_{s,r}\sigma_{\dot{\bm{A}},s}\sigma_{\dot{\bm{B}},r} \right) 
&\leq \left|\mathrm{Re}\left( \sum_{r,s=1}^{n} \dot{v}_{r,s}\dot{u}_{s,r}\sigma_{\dot{\bm{A}},s}\sigma_{\dot{\bm{B}},r} \right)\right|  \\
&\leq \sum_{r,s=1}^{n} |\dot{v}_{r,s}\dot{u}_{s,r}|\sigma_{\dot{\bm{A}},s}\sigma_{\dot{\bm{B}},r} \\
&\stackrel{(i)}{\leq} \frac{1}{2}\sum_{r,s=1}^{n} |\dot{v}_{r,s}|^{2}\sigma_{\dot{\bm{A}},s}\sigma_{\dot{\bm{B}},r} + \frac{1}{2}\sum_{r,s=1}^{n} |\dot{u}_{s,r}|^{2}\sigma_{\dot{\bm{A}},s}\sigma_{\dot{\bm{B}},r} \\
&\stackrel{(ii)}{\leq} \sum_{r=1}^{n}\sigma_{\dot{\bm{A}},r}\sigma_{\dot{\bm{B}},r}   = \bm{\Sigma_{\dot{A}}}^{\top}\bm{\Sigma_{\dot{B}}}.
\end{aligned}
\end{equation}
Here, inequality (i) follows from the Cauchy-Schwarz inequality. For inequality (ii), see \cite[Lemma]{mirsky1975trace}. Clearly, the equation in \eqref{v-n} holds if and only if $\dot{\bm{U}}_1 = \dot{\bm{U}}_2$ and $\dot{\bm{V}}_1 = \dot{\bm{V}}_2$.
\end{proof}

% 基于引理\ref{V-N T},可以得到如下定理。
Based on Lemma \ref{V-N T}, the following crucial theorem can be derived.
\begin{theorem}
Let $\dot{\bm{Y}} \in \mathbb{Q}^{m \times n}$, and without loss of generality, assume $m \geq n$. Let $\dot{\bm{Y}} = \dot{\bm{U}} \bm{\Sigma_{\dot{Y}}} \dot{\bm{V}}^{*}$ denote the QSVD of $\dot{\bm{Y}}$, where $\bm{\Sigma_{\dot{Y}}} = \mathrm{diag}(\sigma_{\dot{\bm{Y}},1}, \sigma_{\dot{\bm{Y}},2}, \dots, \sigma_{\dot{\bm{Y}},n})$.
The global optimum of model \eqref{QNOF} is given by $\dot{\bm{X}}=\dot{\bm{U}} \bm{\tilde{\Sigma}_{\dot{X}}} \dot{\bm{V}}^{*}$, where $\bm{\tilde{\Sigma}_{\dot{X}}}=\mathrm{diag}(\tilde{\sigma}_{\dot{\bm{X}},1},$ $ \tilde{\sigma}_{\dot{\bm{X}},2},\dots,\tilde{\sigma}_{\dot{\bm{X}},n})$ is a diagonal nonnegative matrix. The matrix $\bm{\tilde{\Sigma}_{\dot{X}}}$ is the solution to the following optimization problem:
\begin{equation}
\begin{aligned}
\bm{\tilde{\Sigma}_{\dot{X}}} = \arg&\min_{\bm{\Sigma_{\dot{X}}}}  \frac{1}{2} ||\bm{\Sigma_{\dot{Y}}} - \bm{\Sigma_{\dot{X}}} ||^{2}_{2} + \lambda\left(\frac{||\bm{\Sigma_{\dot{X}}}||_{1}}{||\bm{\Sigma_{\dot{X}}}||_{2}}\right) \\
& \mathrm{s.t.} \sigma_{\dot{\bm{X}},1} \geq \sigma_{\dot{\bm{X}},2} \geq \dots \geq \sigma_{\dot{\bm{X}},n} \geq 0. 
\end{aligned}
\end{equation}
\label{theorem01}
\end{theorem}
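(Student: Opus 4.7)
The plan is to use Von Neumann's trace inequality (Lemma \ref{V-N T}) to reduce the matrix problem to a purely scalar problem on singular values. First I would expand the Frobenius-squared fidelity term as
\begin{equation*}
\tfrac{1}{2}\|\dot{\bm{Y}}-\dot{\bm{X}}\|_F^2 = \tfrac{1}{2}\|\dot{\bm{Y}}\|_F^2 + \tfrac{1}{2}\|\dot{\bm{X}}\|_F^2 - \mathrm{Re}(\mathrm{Tr}(\dot{\bm{Y}}^{*}\dot{\bm{X}})),
\end{equation*}
and observe that $\|\dot{\bm{Y}}\|_F$, $\|\dot{\bm{X}}\|_F$, $\|\dot{\bm{X}}\|_*$ depend only on the singular values of the respective matrices (by the definitions recalled in Section \ref{sec:qab}). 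Thus only the cross term $\mathrm{Re}(\mathrm{Tr}(\dot{\bm{Y}}^{*}\dot{\bm{X}}))$ depends on the singular vectors of $\dot{\bm{X}}$.

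Next I would write the QSVD of an arbitrary candidate as $\dot{\bm{X}} = \dot{\bm{U}}_{X}\bm{\Sigma_{\dot{X}}}\dot{\bm{V}}_{X}^{*}$ with singular values in nonincreasing order, and apply Lemma \ref{V-N T} to bound
\begin{equation*}
\mathrm{Re}(\mathrm{Tr}(\dot{\bm{Y}}^{*}\dot{\bm{X}})) \leq \bm{\Sigma_{\dot{Y}}}^{\top}\bm{\Sigma_{\dot{X}}},
\end{equation*}
with equality when $\dot{\bm{X}}$ and $\dot{\bm{Y}}$ admit simultaneous QSVDs, i.e.\ when $\dot{\bm{U}}_{X}=\dot{\bm{U}}$ and $\dot{\bm{V}}_{X}=\dot{\bm{V}}$. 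Combining this with the identities for the Frobenius and nuclear norms yields, for every $\dot{\bm{X}}$, the uniform lower bound
\begin{equation*}
\tfrac{1}{2}\|\dot{\bm{Y}}-\dot{\bm{X}}\|_F^2 + \lambda\frac{\|\dot{\bm{X}}\|_*}{\|\dot{\bm{X}}\|_F} \geq \tfrac{1}{2}\|\bm{\Sigma_{\dot{Y}}}-\bm{\Sigma_{\dot{X}}}\|_2^2 + \lambda\frac{\|\bm{\Sigma_{\dot{X}}}\|_1}{\|\bm{\Sigma_{\dot{X}}}\|_2}.
\end{equation*}
I would then argue that, by choosing $\dot{\bm{X}} = \dot{\bm{U}}\bm{\Sigma_{\dot{X}}}\dot{\bm{V}}^{*}$ with the same unitary factors as $\dot{\bm{Y}}$, equality is attained in Von Neumann and hence in the displayed bound; therefore the infimum over $\dot{\bm{X}}\in\mathbb{Q}^{m\times n}$ equals the infimum of the right-hand side over nonnegative nonincreasing sequences $\sigma_{\dot{\bm{X}},1}\geq\cdots\geq\sigma_{\dot{\bm{X}},n}\geq 0$. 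Any optimizer $\bm{\tilde{\Sigma}_{\dot{X}}}$ of that scalar problem then produces the global optimum $\dot{\bm{X}}=\dot{\bm{U}}\bm{\tilde{\Sigma}_{\dot{X}}}\dot{\bm{V}}^{*}$ claimed in the theorem.

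The main obstacle I anticipate is the subtlety of the equality case in Lemma \ref{V-N T}: one must check that the ``aligned'' choice $\dot{\bm{U}}_{X}=\dot{\bm{U}}$, $\dot{\bm{V}}_{X}=\dot{\bm{V}}$ is indeed feasible and preserves the monotone ordering of the singular values after the scalar minimization, so that the resulting $\dot{\bm{X}}$ is a legitimate QSVD. A minor technical point is that the scalar objective is not defined at $\bm{\Sigma_{\dot{X}}}=0$, but since $\dot{\bm{Y}}\neq 0$ (otherwise the statement is trivial) any approximate minimizer is bounded away from zero, so this singularity causes no trouble. The non-commutativity of quaternion multiplication does not interfere because only the real part of the trace is used, and Lemma \ref{V-N T} already handles that subtlety.
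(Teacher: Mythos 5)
Your proposal is correct and follows essentially the same route as the paper's own proof: expand the Frobenius fidelity term, invoke the quaternion Von Neumann trace inequality (Lemma \ref{V-N T}) to lower-bound the cross term, and attain equality by aligning the unitary factors of $\dot{\bm{X}}$ with those of $\dot{\bm{Y}}$, thereby reducing the problem to the scalar singular-value minimization. Your additional remarks on the ordering constraint and the singularity at $\bm{\Sigma_{\dot{X}}}=\bm{0}$ address points the paper leaves implicit, but they do not change the argument.
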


\begin{proof}[Proof]
Let $\dot{\bm{X}} = \hat{\dot{\bm{U}}} \bm{\Sigma_{\dot{X}}} \hat{\dot{\bm{V}}}^{}$ and $\dot{\bm{Y}} = \dot{\bm{U}} \bm{\Sigma_{\dot{Y}}} \dot{\bm{V}}^{}$ represent the QSVD decompositions of $\dot{\bm{X}}$ and $\dot{\bm{Y}}$, respectively. We proceed as follows:
\begin{equation}
\begin{aligned}
& \frac{1}{2}|| \dot{\bm{Y}}-\dot{\bm{X}} ||^{2}_F + \lambda\left(\frac{||\dot{\bm{X}}||_{*}}{||\dot{\bm{X}}||_{F}} \right) \\
&= \frac{1}{2}\mathrm{Tr}(\dot{\bm{Y}}^{*}\dot{\bm{Y}}) + \frac{1}{2}\mathrm{Tr}(\dot{\bm{X}}^{*}\dot{\bm{X}}) -  \mathrm{Re}(\mathrm{Tr}(\dot{\bm{Y}}^{*}\dot{\bm{X}})) + \lambda\left(\frac{||\dot{\bm{X}}||_{*}}{||\dot{\bm{X}}||_{F}} \right) \\
&= \frac{1}{2}|| \dot{\bm{Y}}||^{2}_F + \frac{1}{2}|| \dot{\bm{X}}||^{2}_F - \mathrm{Re}(\mathrm{Tr}(\dot{\bm{Y}}^{*}\dot{\bm{X}})) + \lambda\left(\frac{||\dot{\bm{X}}||_{*}}{||\dot{\bm{X}}||_{F}} \right) \\
&= \frac{1}{2}( || \bm{\Sigma_{\dot{Y}}} ||^{2}_{2} + || \bm{\Sigma_{\dot{X}}} ||^{2}_{2}) - \mathrm{Re}(\mathrm{Tr}(\dot{\bm{Y}}^{*}\dot{\bm{X}})) + \lambda\left(\frac{|| \bm{\Sigma_{\dot{X}}}||_{1}}{||\bm{\Sigma_{\dot{X}}}||_{2}}\right) \\
&\geq \frac{1}{2}( || \bm{\Sigma_{\dot{Y}}} ||^{2}_{2} + || \bm{\Sigma_{\dot{X}}} ||^{2}_{2}) - \bm{\Sigma_{\dot{Y}}}^{\top}\bm{\Sigma_{\dot{X}}} + \lambda\left(\frac{|| \bm{\Sigma_{\dot{X}}}||_{1}}{||\bm{\Sigma_{\dot{X}}}||_{2}}\right). \\
& = \frac{1}{2} ||\bm{\Sigma_{\dot{Y}}} - \bm{\Sigma_{\dot{X}}} ||^{2}_{2} + \lambda\left(\frac{||\bm{\Sigma_{\dot{X}}}||_{1}}{||\bm{\Sigma_{\dot{X}}}||_{2}}\right).
\end{aligned}
\end{equation}
By Lemma \ref{V-N T} (Von Neumann's trace inequality), we have $\mathrm{Re}(\mathrm{Tr}(\dot{\bm{Y}}^{*}\dot{\bm{X}})) \leq \bm{\Sigma_{\dot{Y}}}^{\top}\bm{\Sigma_{\dot{X}}}$, with equality holding if and only if $\dot{\bm{U}} = \hat{\dot{\bm{U}}}$ and $\dot{\bm{V}} = \hat{\dot{\bm{V}}}$. Thus, solving model \eqref{QNOF} reduces to solving for$\bm{\tilde{\Sigma}_{\dot{X}}}$ in the following optimization problem:
\begin{equation}
\begin{aligned}
\bm{\tilde{\Sigma}_{\dot{X}}} = \arg&\min_{\bm{\Sigma_{\dot{X}}}} \frac{1}{2} ||\bm{\Sigma_{\dot{Y}}} - \bm{\Sigma_{\dot{X}}} ||^{2}_{2} + \lambda\left(\frac{||\bm{\Sigma_{\dot{X}}}||_{1}}{||\bm{\Sigma_{\dot{X}}}||_{2}}\right) \\
& \mathrm{s.t.} ~ \sigma_{\dot{\bm{X}},1} \geq \sigma_{\dot{\bm{X}},2} \geq \dots \geq \sigma_{\dot{\bm{X}},n} \geq 0,
\end{aligned}
\label{l1-l2}
\end{equation}
where $\bm{\tilde{\Sigma}_{\dot{X}}}=\mathrm{diag}(\tilde{\sigma}_{\dot{\bm{X}},1},\tilde{\sigma}_{\dot{\bm{X}},2},\dots,\tilde{\sigma}_{\dot{\bm{X}},n})$. 
\end{proof}

%Theorem \ref{theorem01} shows that the NOF minimization problem for quaternion matrices \eqref{QNOF} can be transformed into a singular-valued $L_1/L_2$ problem \eqref{l1-l2}. This transformation offers a twofold advantage: it converts the problem from the quaternion space to the real number space and simplifies it from a matrix problem to a vector problem

Theorem \ref{theorem01} demonstrates that the NOF minimization problem for quaternion matrices \eqref{QNOF} can be reformulated as a singular-value-based $L_1/L_2$ optimization problem \eqref{l1-l2}. This reformulation provides two key advantages: it reduces the problem from quaternion space to real space and simplifies it further from a matrix-level problem to a vector-level problem.

\subsection{Solving for QNOF}
Recent works have achieved remarkable progress in solving the $L_1/L_2$ problem \cite{rahimi2019scale,wang2022minimizing,wang2021limited,tao2022minimization,tao2023study}. Building on the insights from \cite{tao2022minimization}, we propose the following theorem to address the singular value $L_1/L_2$ problem.

\begin{theorem}
Given that $ \bm{\Sigma_{\dot{Y}}} (\neq \bm{0}) \in \mathbb{ R }^{n}$ and $\lambda > 0$, for each optimal solution  $\bm{\tilde{\Sigma}_{\dot{X}}}$ $(t= ||\bm{\tilde{\Sigma}_{\dot{X}}} ||_0$, $a = || \bm{\tilde{\Sigma}_{\dot{X}}} ||_1$, r = $|| \bm{\tilde{\Sigma}_{\dot{X}}} ||_2)$, one of the following assertions holds:
\begin{itemize}
    \item[$\mathrm{(a)}$]  %If $0 < \frac{1}{\lambda} \leq a/r^3$, we denote an integer $s$ as the number of elements in $\bm{\Sigma_{\dot{Y}}}$ with the same largest magnitude, i.e., $\sigma_{\dot{\bm{Y}},1} = \dots = \sigma_{\dot{\bm{Y}},s} > \sigma_{\dot{\bm{Y}},s+1}$, then there exists one index $j \in [s]$ such that
    If $0 < \frac{1}{\lambda} \leq \frac{a}{r^3}$, let $s$ be the number of elements in $\bm{\Sigma_{\dot{Y}}}$ that have the largest magnitude, i.e., $\sigma_{\dot{\bm{Y}},1} = \dots = \sigma_{\dot{\bm{Y}},s} > \sigma_{\dot{\bm{Y}},s+1}$. Then, there exists an index $j \in [s]$ such that
    \begin{align} 
    \sigma_{\dot{\bm{X}},i} = \left\{
    \begin{aligned}
    &\sigma_{\dot{\bm{Y}},i},~~~i=j;  \\
    &0,~~~~~~otherwish.\\
    \end{aligned}\right.
    \label{2.5}
    \end{align}
    
    \item[$\mathrm{(b)}$]  If $ \frac{1}{\lambda} > a/r^3$, then the pair ($a, r$) satisfies ($Q^t = \sum^t_{i=1} | \sigma_{\dot{\bm{Y}},i}| $)
    \begin{subnumcases} {\label{2.6} }
    &$\dfrac{a^2}{r^3}- \dfrac{a}{\lambda} + \dfrac{Q^t}{\lambda} - \dfrac{t}{r} =0,$ \label{2.6a}\\
    &$r^3 - \left(\sum^{t}_{i=1}\sigma^{2}_{\dot{\bm{Y}},i}\right)r + \lambda (a- Q^t) = 0,$  \label{2.6b} 
    \end{subnumcases}
    and the $r$ is also satisfied with
    \begin{equation}
    \begin{aligned}
    \sigma_{\dot{\bm{Y}},t} > \frac{\lambda}{r} \quad and \quad  \sigma_{\dot{\bm{Y}},t+1} \leq \frac{\lambda}{r}
    \end{aligned}
    \label{2.7}
    \end{equation}
    and the solution $\bm{\tilde{\Sigma}_{\dot{X}}}$ is characterized by
    \begin{align} 
    \tilde{\sigma}_{\dot{\bm{X}},i} = \left\{
    \begin{aligned}
    &\frac{\dfrac{1}{\lambda}\sigma_{\dot{\bm{Y}},i}-\dfrac{1}{r}}{\dfrac{1}{\lambda}-\dfrac{a}{r^3}},~~~1 \leq i \leq t;  \\
    &0,~~~~~~otherwish.\\
    \end{aligned}\right.
    \label{L1-L2-S}
    \end{align}
\end{itemize}
\label{theorem001}
\end{theorem}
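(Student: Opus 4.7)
The plan is to write down Fermat's rule for the nonsmooth nonconvex problem \eqref{l1-l2} and exploit its coordinate separability. For $\bm{x}$ with no zero entries, the objective $F(\bm{x}) = \frac{1}{2}\|\bm{\Sigma_{\dot{Y}}}-\bm{x}\|_2^2 + \lambda\,\|\bm{x}\|_1/\|\bm{x}\|_2$ is smooth and its partial derivative is
\begin{equation*}
\partial_i F(\bm{x}) \;=\; x_i - \sigma_{\dot{\bm{Y}},i} + \frac{\lambda}{\|\bm{x}\|_2} - \frac{\lambda\,\|\bm{x}\|_1\,x_i}{\|\bm{x}\|_2^3}.
\end{equation*}
The monotonicity constraint forces the active support of any minimizer $\bm{\tilde{\Sigma}_{\dot{X}}}$ to be an initial segment $\{1,\ldots,t\}$. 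Writing $a=\|\bm{\tilde{\Sigma}_{\dot{X}}}\|_1$ and $r=\|\bm{\tilde{\Sigma}_{\dot{X}}}\|_2$, stationarity on the active segment reduces to the coordinate-wise relation
\begin{equation*}
\tilde{\sigma}_{\dot{\bm{X}},i}\left(1 - \frac{\lambda a}{r^3}\right) \;=\; \sigma_{\dot{\bm{Y}},i} - \frac{\lambda}{r},\qquad 1\leq i\leq t.
\end{equation*}
I then case-split on the sign of $\alpha := 1 - \lambda a / r^3$, noting that case (a) is precisely $\alpha\leq 0$ and case (b) is $\alpha>0$.

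If $\alpha>0$ (case (b)), solving the stationarity relation for $\tilde{\sigma}_{\dot{\bm{X}},i}$ gives the closed form \eqref{L1-L2-S}. Summing the relation over $i=1,\ldots,t$ produces $\alpha a = Q^{t} - \lambda t/r$, which after clearing fractions and dividing by $\lambda$ is exactly \eqref{2.6a}. To obtain the cubic \eqref{2.6b}, I would first multiply stationarity by $\tilde{\sigma}_{\dot{\bm{X}},i}$ and sum to derive the identity $\sum_{i\leq t}\sigma_{\dot{\bm{Y}},i}\tilde{\sigma}_{\dot{\bm{X}},i} = r^2$, then multiply instead by $\sigma_{\dot{\bm{Y}},i}$ and sum to obtain $\alpha r^2 = \sum_{i\leq t}\sigma_{\dot{\bm{Y}},i}^2 - \lambda Q^{t}/r$; expanding $\alpha$ and multiplying through by $r$ yields \eqref{2.6b}. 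The cutoff inequalities \eqref{2.7} come from requiring $\tilde{\sigma}_{\dot{\bm{X}},t}>0$ in \eqref{L1-L2-S} and verifying, via the subdifferential of $\|\cdot\|_1/\|\cdot\|_2$ at the would-be zero coordinate $i=t+1$, that extending the support is infeasible.

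If $\alpha\leq 0$ (case (a)), the same relation forces $\tilde{\sigma}_{\dot{\bm{X}},i}=(\sigma_{\dot{\bm{Y}},i}-\lambda/r)/\alpha$ to be nonincreasing in $\sigma_{\dot{\bm{Y}},i}$ along the support (or to carry the wrong sign), which clashes with the sorted-order constraint $\tilde{\sigma}_{\dot{\bm{X}},1}\geq\cdots\geq\tilde{\sigma}_{\dot{\bm{X}},t}>0$ unless the support is a singleton $\{j\}$. With $t=1$ the penalty collapses to $\lambda$ and the objective reduces to $\tfrac{1}{2}(\sigma_{\dot{\bm{Y}},j}-\tilde{\sigma}_{\dot{\bm{X}},j})^2 + \tfrac{1}{2}\sum_{i\neq j}\sigma_{\dot{\bm{Y}},i}^2 + \lambda$, minimized by $\tilde{\sigma}_{\dot{\bm{X}},j}=\sigma_{\dot{\bm{Y}},j}$; comparing these one-hot values across $j$ forces $j\in[s]$, yielding \eqref{2.5}.

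The main obstacle will be the nonsmoothness and nonconvexity of $\|\cdot\|_1/\|\cdot\|_2$ at points with zero entries, which is exactly what makes \eqref{2.7} rigorous and rules out extraneous roots of the coupled system \eqref{2.6}. I would address this by appealing to the limiting-subdifferential calculus used in \cite{tao2022minimization}: writing down the generalized Fermat condition at the boundary between active and inactive coordinates, checking that the selected root of the cubic \eqref{2.6b} corresponds to a genuine minimizer rather than a spurious stationary point, and handling the degenerate boundary $\alpha=0$ by a continuity argument between the two cases.
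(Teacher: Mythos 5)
The paper offers no self-contained proof of Theorem \ref{theorem001}; it simply asserts that the argument "closely follows" \cite[Theorem 3.2]{tao2022minimization} because the singular-value vector is a sorted nonnegative vector. Your proposal reconstructs exactly that argument (coordinate-wise Fermat rule on the support, case split on the sign of $\alpha=1-\lambda a/r^3$), so in spirit you are on the intended route; the derivations of \eqref{2.6a}, \eqref{L1-L2-S} and the first half of \eqref{2.7} are correct. However, two concrete gaps remain. First, your own algebra does not land on \eqref{2.6b} as printed: combining $\alpha r^{2}=\sum_{i\le t}\sigma_{\dot{\bm{Y}},i}^{2}-\lambda Q^{t}/r$ with $\alpha r^{2}=r^{2}-\lambda a/r$ and multiplying by $r$ gives
\[
r^{3}-\Bigl(\sum_{i=1}^{t}\sigma_{\dot{\bm{Y}},i}^{2}\Bigr)r+\lambda\,(Q^{t}-a)=0,
\]
whose last term has the opposite sign to \eqref{2.6b}. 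A direct numerical check (take $\lambda=1$, $\bm{\tilde{\Sigma}_{\dot{X}}}=(2,1)$, and back-solve for $\bm{\Sigma_{\dot{Y}}}$ from the stationarity relation) confirms that the identity with $+\lambda(Q^{t}-a)$ is the one actually satisfied at stationary points, so you cannot simply assert that your computation "yields \eqref{2.6b}"; you must either reconcile the sign or flag the discrepancy with the statement.

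Second, case (a) is not closed. When $\alpha<0$ the relation $\tilde{\sigma}_{\dot{\bm{X}},i}=(\sigma_{\dot{\bm{Y}},i}-\lambda/r)/\alpha$ reverses the order, but this contradicts $\tilde{\sigma}_{\dot{\bm{X}},1}\ge\cdots\ge\tilde{\sigma}_{\dot{\bm{X}},t}>0$ only when the supported $\sigma_{\dot{\bm{Y}},i}$ are not all equal; if they are tied, a $t$-fold constant solution satisfies both the stationarity relation and the ordering constraint, and your order-reversal argument does not exclude it — one needs an explicit objective comparison (the penalty of such a candidate is $\lambda\sqrt{t}$) or the second-order analysis of \cite{tao2022minimization}. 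The boundary $\alpha=0$ degenerates the stationarity relation to $\sigma_{\dot{\bm{Y}},i}=\lambda/r$, which determines nothing about $\tilde{\sigma}_{\dot{\bm{X}},i}$, so "a continuity argument between the two cases" is not a proof. Finally, you apply the unconstrained coordinate-wise Fermat rule to the constrained problem \eqref{l1-l2}; when the ordering constraints are active the KKT system acquires multipliers that you ignore. The cleanest repair, and the one the paper implicitly relies on, is to solve the unconstrained proximal problem as in \cite{tao2022minimization} and verify a posteriori that its solution is nonnegative and sorted whenever $\bm{\Sigma_{\dot{Y}}}$ is, hence feasible and optimal for \eqref{l1-l2}.
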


Since the singular value vector is a special nonnegative, descending vector, the proof of Theorem \ref{theorem001} closely follows the approach in \cite[Theorem 3.2]{tao2022minimization}, with detailed available in \cite{tao2022minimization}. Although Theorem \ref{theorem001} provides the optimal solution to \eqref{l1-l2}, this solution depends on the unknown parameters $a$ and $r$. While it may appear impractical to determine the optimal solution directly from Theorem \ref{theorem001}, it is sufficient to identify one optimal solution rather than all possible ones. By describing a specific solution based on the value of $\lambda$, we can effectively reduce the dependency on $a$ and $r$. The following theorem presents this result.

\begin{theorem}
Given $ \bm{\Sigma_{\dot{Y}}} (\neq \bm{0}) \in \mathbb{ R }^{n}$ and $\lambda > 0$, one of the following assertions holds:
\begin{itemize}
    \item[$\mathrm{(i)}$] If $0 < \frac{1}{\lambda} \leq \frac{1}{\sigma^{2}_{\dot{\bm{Y}},1}}$, then equation (\ref{l1-l2}) has a solution, as given by (\ref{2.5}). 

    \item[$\mathrm{(ii)}$] If $\frac{1}{\lambda} > \frac{1}{\sigma^{2}_{\dot{\bm{Y}},1}}$, then equation (\ref{l1-l2}) has a solution given by (\ref{L1-L2-S}), where the pair $(a, r)$ satisfies both (\ref{2.6}) and (\ref{2.7}) simultaneously.
\end{itemize}
\label{theorem0001}
\end{theorem}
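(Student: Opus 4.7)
The plan is to invoke the dichotomy in Theorem~\ref{theorem001} on any optimal solution of \eqref{l1-l2} (whose existence follows from coercivity of the fidelity term together with the global bound $1 \leq ||\bm{\Sigma_{\dot{X}}}||_1/||\bm{\Sigma_{\dot{X}}}||_2 \leq \sqrt{n}$), and then to translate the condition that involves the unknown norms $a$ and $r$ into an explicit condition on $\lambda$ and $\sigma_{\dot{\bm{Y}},1}$ alone.

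The key observation driving the proof is that in case~(a) of Theorem~\ref{theorem001} the optimal solution has support of size $t=1$ with its unique nonzero entry equal to $\sigma_{\dot{\bm{Y}},1}$ (since the index $j \in [s]$ selects a tied largest value). Consequently $a = r = \sigma_{\dot{\bm{Y}},1}$ and $a/r^3 = 1/\sigma_{\dot{\bm{Y}},1}^2$, so the hypothesis $0 < 1/\lambda \leq a/r^3$ of case~(a) reduces exactly to the condition $0 < 1/\lambda \leq 1/\sigma_{\dot{\bm{Y}},1}^2$ appearing in assertion~(i). This shows that whenever the regime in~(i) holds, the optimum is realized by the solution described in \eqref{2.5}.

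For part~(ii), assume $1/\lambda > 1/\sigma_{\dot{\bm{Y}},1}^2$. Any candidate of the form \eqref{2.5} would satisfy $a/r^3 = 1/\sigma_{\dot{\bm{Y}},1}^2 < 1/\lambda$, which contradicts the case~(a) hypothesis $1/\lambda \leq a/r^3$ that every such optimizer must obey by Theorem~\ref{theorem001}. Therefore no optimal solution can take the form \eqref{2.5}, and by the dichotomy it must fall into case~(b), which is precisely the form \eqref{L1-L2-S} with $(a,r)$ jointly satisfying \eqref{2.6} and \eqref{2.7}.

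The main obstacle is justifying the \emph{existence} claim in part~(i): Theorem~\ref{theorem001} gives a necessary form for any case~(a) optimizer but does not by itself guarantee that the candidate \eqref{2.5} achieves the global minimum in the regime $1/\lambda \leq 1/\sigma_{\dot{\bm{Y}},1}^2$. I expect to close this gap by a direct comparison of objective values between \eqref{2.5} and any competing candidate of the form \eqref{L1-L2-S}, using that positivity of $\tilde{\sigma}_{\dot{\bm{X}},1}$ in \eqref{L1-L2-S} forces $r\sigma_{\dot{\bm{Y}},1} > \lambda$, which combined with $1/\lambda > a/r^3$ and $a \geq r$ constrains any case~(b) candidate strongly enough to rule out strict improvement over \eqref{2.5} in this regime.
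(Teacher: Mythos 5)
Your reduction of the regime condition to $\sigma_{\dot{\bm{Y}},1}$ via the observation that any case~(a) optimizer of Theorem~\ref{theorem001} has $a=r=\sigma_{\dot{\bm{Y}},1}$, hence $a/r^{3}=1/\sigma_{\dot{\bm{Y}},1}^{2}$, is the right starting point, and your argument for part~(ii) is essentially sound once rephrased: you should not assert the converse of case~(a) (``the hypothesis that every such optimizer must obey''), but rather run the exhaustive dichotomy --- if the optimizer were in case~(a) it would satisfy $1/\lambda\leq a/r^{3}=1/\sigma_{\dot{\bm{Y}},1}^{2}$, contradicting the regime of~(ii), so it must fall into case~(b). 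However, part~(i) contains a genuine gap, which you yourself flag but do not close. Theorem~\ref{theorem001} only tells you the form of the optimizer \emph{conditional on} which side of $a/r^{3}$ the quantity $1/\lambda$ falls for that optimizer's own norms; in the regime $\lambda\geq\sigma_{\dot{\bm{Y}},1}^{2}$ nothing in the dichotomy prevents the global minimizer from sitting in case~(b) with its own $a/r^{3}<1/\lambda$. Saying you ``expect to close this gap by a direct comparison of objective values'' is a plan, not a proof, and the inequalities you list do not suffice: positivity of $\tilde{\sigma}_{\dot{\bm{X}},1}$ together with \eqref{2.7} gives $r\sigma_{\dot{\bm{Y}},1}>\lambda$, and $1/\lambda>a/r^{3}$ with $a\geq r$ gives $r^{2}>\lambda\geq\sigma_{\dot{\bm{Y}},1}^{2}$, i.e.\ $r>\sigma_{\dot{\bm{Y}},1}$ --- but since $r=||\bm{\tilde{\Sigma}_{\dot{X}}}||_{2}$ can a priori exceed $\sigma_{\dot{\bm{Y}},1}$ when the support has $t>1$, this is not yet a contradiction, nor a comparison of objective values. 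You would need to actually carry out the feasibility/optimality analysis of the system \eqref{2.6}--\eqref{2.7} in this regime.

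For what it is worth, the paper does not prove this statement either: it explicitly defers to \cite[Theorem 3.3]{tao2022minimization}, and the step you are missing --- showing that when $\lambda\geq\sigma_{\dot{\bm{Y}},1}^{2}$ the case~(b) system is infeasible or cannot beat the one-sparse candidate \eqref{2.5} --- is precisely the content of that cited proof. So your high-level route (deduce Theorem~\ref{theorem0001} from Theorem~\ref{theorem001} by translating the $a/r^{3}$ threshold into a threshold on $\sigma_{\dot{\bm{Y}},1}$) matches the intended one, but the proposal as written establishes only part~(ii); part~(i) remains open until the comparison argument is supplied.
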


The proof of Theorem \ref{theorem0001} is similar to that of \cite[Theorem 3.3]{tao2022minimization}. Compared to Theorem \ref{theorem001}, Theorem \ref{theorem0001} relaxes the dependence on $a$ and $r$ by changing the condition $a/r^3$ with $1/\sigma^{2}_{\dot{\bm{Y}},1}$. Additionally, \cite[Lemmas 3.5 and 3.6]{tao2022minimization} establish the uniqueness of the $(a, r)$ pairs when the sparsity $t$ is given. The solutions to the quadratic function \eqref{2.6a} for $a$ and the cubic function \eqref{2.6b} for $r$ are also provided. Therefore, when the sparsity $t$ is known, an algorithm for solving $(a, r)$ using alternating iterations can be written as follows:

\begin{subnumcases} {\label{l1l2-ar} }
\Delta^{(2)}_{l+1} &= $\dfrac{1}{\lambda^2} - \dfrac{4}{r^{3}_{l}}(\dfrac{Q^t}{\lambda} - \dfrac{t}{r_l}),$ \label{l1l2-ar1}\\
a_{l+1} &= $\dfrac{r^{3}_{l}}{2}(\dfrac{1}{\lambda} - \sqrt{\Delta^{(2)}_{l+1}}),$ \label{l1l2-ar2} \\
\phi_{l+1} &= $\arccos (\dfrac{\lambda(a_{l+1}-Q^t)}{2\varrho^{3}}),$  \label{l1l2-ar3} \\
r_{l+1} &= $2\varrho\cos (\pi/3-\phi_{l+1}/3),$ \label{l1l2-ar4}
\end{subnumcases}
where $\varrho=\sqrt{\sum^{t}_{i=1} \sigma^{2}_{\dot{\bm{Y}},i} /3}$, $l$ is the iteration number.

Next, consider the sparsity $t$. It can be observed that $a$, $r$, and \eqref{L1-L2-S} all depend on the true sparsity $t$. If $0 < \frac{1}{\lambda} \leq \frac{1}{\sigma^{2}{\dot{\bm{Y}},1}}$, we obtain a one-sparse solution using \eqref{2.5}. 
If $\frac{1}{\lambda} > \frac{1}{\sigma^{2}{\dot{\bm{Y}},1}}$, we iteratively search for the sparsity $t$ using the bisection method. In each iteration, we set the lower and upper bounds for $t$ as $t_1$ and $t_2$, respectively. We then compute the midpoint $\hat{t} = \frac{t_1 + t_2}{2}$ and update the corresponding $(a, r)$ pairs alternately according to \eqref{l1l2-ar}.

If the discriminant \eqref{l1l2-ar1} $\Delta^{(2)}_{l+1}<0$, we update $\hat{t}$ as follows: if $\sigma_{\dot{\bm{Y}},t+1} > \dfrac{\lambda}{r}$, we set $t_1 = \hat{t}$; otherwise, $t_2 = \hat{t}$.

If both conditions $(\sigma_{\dot{\bm{Y}},t} - \lambda/\hat{r})(1-\hat{a}\lambda/\hat{r}^{3})>0$ and $(\sigma_{\dot{\bm{Y}},t+1} - \lambda/\hat{r})(1-  \hat{a}\lambda/\hat{r}^{3}) \leq 0$ are satisfied, the solution $\bm{\tilde{\Sigma}_{\dot{X}}}$ is obtained through \eqref{L1-L2-S}. 

If conditions $(\sigma_{\dot{\bm{Y}},t} - \lambda/\hat{r})(1-\hat{a}\lambda/\hat{r}^{3})>0$ and $(\sigma_{\dot{\bm{Y}},t+1} - \lambda/\hat{r})(1- \hat{a}\lambda/\hat{r}^{3}) > 0$ are satisfied, we set $t_1 = \hat{t}$; otherwise, if  $(\sigma_{\dot{\bm{Y}},t} - \lambda/\hat{r})(1-\hat{a}\lambda/\hat{r}^{3}) \leq 0$, we set $t_2 = \hat{t}$. 

By bisecting the update $\hat{t} = (t_1+t_2)/2$ and alternating iterations $a$ and $r$, the unique solution $(a, r)$ satisfying \eqref{2.6} and \eqref{2.7}, along with the sparsity $t$, can eventually be found. In summary, the overall solution to the singular value $L_1/L_2$ problem \eqref{l1-l2} is shown in Algorithm \ref{alg:1}.

%%%%%% 二分法更新事项  %%%% 

\begin{algorithm}[!t]
	\caption{Solution to the singular value $L_1/L_2$ problem}
	\label{alg:1}
    \LinesNumbered 
		\KwIn{ $\lambda>0$, $ \bm{\Sigma_{\dot{Y}}} (\neq \bm{0}) \in \mathbb{ R }^{n}$, $l_{\mathrm{max}}$, $\epsilon$, $\omega$. }%\\
        % \textbf{Intialize}: $\bm{{\tilde{\Sigma}_{\dot{X}}}_{0}} = \max(\bm{\Sigma_{\dot{Y}}} - \omega,0)$}

        \eIf{$\frac{1}{\lambda} \leq \frac{1}{\sigma^{2}_{\dot{\bm{Y}},1}}$}{
         $\bm{\tilde{\Sigma}_{\dot{X}}}$ is computed by (\ref{2.5}) \tcp*[f]{one-sparse solution}}
         {
            Set $t_1=1$, $t_2=n$. \\
           \While(\tcp*[f]{bisection search $t$}){ $t_2 - t_1 >1$}{
                Set $t = (t_1+t_2)/2$, $r_0$, Flag1=0.\\
                \For(\tcp*[f]{split iteration solution $a,r$}){$l=0:l_{\mathrm{max}}$}{  % for 更新(a,r)
                    Compute $\Delta^{(2)}_{l+1}$ by (\ref{l1l2-ar1}). \\
                    \If{$\Delta^{(2)}_{l+1}<0$}{
                        Set $\hat{r} = r_{l}$; Flag1=1.\\
                        break.
                    }                            
                    Compute ($a_{l+1}$,$r_{l+1}$) by (\ref{l1l2-ar2})-(\ref{l1l2-ar4}). \\
                    \If{$\dfrac{|r_{l+1}-r_{l}|}{|r_{l}|}<\epsilon$}{
                        Set $\hat{r} = r_{l+1}$;\\
                        break.                                             
                    }                     % 计算结束
                } %for
                \eIf{Flag1=1}{ 
                        \eIf{$\sigma_{\dot{\bm{Y}},t+1} > \dfrac{\lambda}{r}$}{
                            $t_1 = t$,}{
                            $t_2 = t$,
                            }
                    }{
                    Compute $\hat{a}$ by (\ref{l1l2-ar1})-(\ref{l1l2-ar2}) with $r_{l} = \hat{r}$.\\
                    Flag2 = $(\sigma_{\dot{\bm{Y}},t} - \lambda/\hat{r})(1-\hat{a}\lambda/\hat{r}^{3})>0$, \\
                    \eIf{Flag2}{
                        \eIf(\tcp*[f]{condition (\ref{2.7})}){$(\sigma_{\dot{\bm{Y}},t+1} - \lambda/\hat{r})(1-      \hat{a}\lambda/\hat{r}^{3}) \leq 0$}{
                            Compute $\bm{\tilde{\Sigma}_{\dot{X}}}$ via (\ref{L1-L2-S}) \tcp*[f]{solution}}{
                            $t_1 = t$.
                            } 
                    }{
                    $t_2 = t$.
                    }
                } % if
            } % while
        } % else
		\KwOut{ $\bm{\tilde{\Sigma}_{\dot{X}}}$}
\end{algorithm}

\section{QNOF for robust matrix completion}
\label{sec:alg}

% \lipsum[40]

\subsection{QNOF for matrix completion}
In this section, we address the matrix completion (MC) problem using the QNOF-based model:
\begin{equation}
\begin{aligned}
\min_{\dot{\bm{X}}} ||\dot{\bm{X}} ||_{\mathrm{QNOF}} \quad \mathrm{s.t.} \quad \mathcal{P}_{\Omega}(\dot{\bm{X}}) = \mathcal{P}_{\Omega}(\dot{\bm{Y}}),
\label{mc-NOF}
\end{aligned}
\end{equation}
%where $\Omega$ is a binary mask matrix of the same size as $\dot{\bm{Y}}$. Zeroes in $\Omega$ indicate unobserved values. $\mathcal{P}_{\Omega}(\dot{\bm{Y}})=\Omega\odot\dot{\bm{Y}}$, where $\odot$ denotes the Hadamard product. This constraint ensures consistency between the estimated $\dot{\bm{X}}$ and the observed $\dot{\bm{Y}}$ at the observed entries.
where $\Omega$ is a binary mask matrix matching the dimensions of $\dot{\bm{Y}}$. The entries of $\Omega$ are zero for unobserved values, and $\mathcal{P}_{\Omega}(\dot{\bm{Y}})=\Omega\odot\dot{\bm{Y}}$, with $\odot$ denoting the Hadamard product. This constraint enforces consistency between the observed entries of $\dot{\bm{Y}}$ and the corresponding entries of the estimated $\dot{\bm{X}}$.

By introducing the auxiliary variable $\dot{\bm{Z}}$, and the Lagrange multiplier $\dot{\bm{\eta}}$,  the augmented Lagrange function for (\ref{mc-NOF}) is expressed as:
\begin{equation}
\begin{aligned}
\mathcal{L}(\dot{\bm{X}}, \dot{\bm{Z}}, \dot{\bm{\eta}}, \beta) &=  \lambda\left(\frac{||\dot{\bm{X}}||_{*}}{||\dot{\bm{X}}||_{F}}\right) + \frac{\beta}{2}||\dot{\bm{Y}}-\dot{\bm{X}} - \dot{\bm{Z}} ||_{F}^{2} + \langle{ \dot{\bm{\eta}},\dot{\bm{Y}}-\dot{\bm{X}} - \dot{\bm{Z}} }\rangle \\
& \mathrm{s.t.} \quad \mathcal{P}_{\Omega}(\dot{\bm{Z}}) = 0. 
\label{en-mc}
\end{aligned}
\end{equation}

The update strategy for (\ref{en-mc}) is defined as: 
\begin{align}\left\{\begin{aligned}
\dot{\bm{Z}}^{(k+1)} &= \arg\min_{\dot{\bm{Z}}} \mathcal{L}(\dot{\bm{X}}^{(k)}, \dot{\bm{Z}}, \dot{\bm{\eta}}^{(k)}, \beta^{(k)}),  \quad \mathrm{s.t.} \quad ||\mathcal{P}_{\Omega}(\dot{\bm{Z}})||_{F}^{2} = 0,  \\
\dot{\bm{X}}^{(k+1)} &= \arg\min_{\dot{\bm{X}}} \mathcal{L}(\dot{\bm{X}}, \dot{\bm{Z}}^{(k+1)}, \dot{\bm{\eta}}^{(k)}, \beta^{(k)}), \\
\dot{\bm{\eta}}^{(k+1)} &= \dot{\bm{\eta}}^{(k)} + \beta^{(k)}(\dot{\bm{Y}}- \dot{\bm{X}}^{(k+1)} - \dot{\bm{Z}}^{(k+1)}), \\
\beta^{(k+1)} &= \mu\beta^{(k)},\quad(\mu>1). \\
\end{aligned}\right.
\label{admm-mc}
\end{align}
The $\dot{\bm{Z}}$ subproblem is given by:
\begin{equation}
\begin{aligned}
\dot{\bm{Z}} = \arg\min_{\dot{\bm{Z}}} \, ||\dot{\bm{Y}} -  \dot{\bm{Z}} + \frac{ \dot{\bm{\eta}}}{\beta} - \dot{\bm{X}}||_{F}^{2} \quad \mathrm{s.t.} \quad ||\mathcal{P}_{\Omega}(\dot{\bm{Z}})||_{F}^{2} = 0.
\end{aligned}
\label{z-mc}
\end{equation}
This quadratic problem can be solved efficiently.

The $\dot{\bm{X}}$ subproblem is formulated as:
\begin{equation}
\begin{aligned}
\dot{\bm{X}} = \arg\min_{\dot{\bm{X}}} \, \lambda\left(\frac{||\dot{\bm{X}}||_{*}}{||\dot{\bm{X}}||_{F}} \right) + \frac{\beta}{2}||(\dot{\bm{Y}} -  \dot{\bm{Z}} + \frac{ \dot{\bm{\eta}}}{\beta}) - \dot{\bm{X}}||_{F}^{2}.
\end{aligned}
\label{x-mc}
\end{equation}
The solution to this subproblem is detailed in Algorithm \ref{alg:1}, and the complete procedure is outlined in Algorithm \ref{alg:mc}.

% \begin{theorem}
% Suppose that the parameter sequence $\{\beta^{(k)}\}$ is unbounded and the sequences $\{\dot{\bm{X}}^{(k)}\}$ and $\{\dot{\bm{Z}}^{(k)}\}$ are generated by Algorithm \ref{ag2}, which satisfies: 
% \begin{align}
% &\lim_{k\rightarrow +\infty} || \dot{\bm{X}}^{(k+1)}-\dot{\bm{X}}^{(k)} ||_{F} = 0, \label{converge21}\\
% &\lim_{k\rightarrow +\infty} || \dot{\bm{Y}} - \dot{\bm{X}}^{(k+1)}-\dot{\bm{Z}}^{(k+)} ||_{F} = 0. \label{converge22}
% \end{align}
% \label{theorem03}
% \end{theorem}

\begin{algorithm}[!t]
	\caption{QNOF-MC Algorithm}
	\label{alg:mc}
    % \LinesNumbered 
		\KwIn{ Observation $\dot{\bm{Y}}$; Initialize $\dot{\bm{X}}^{(0)}=\dot{\bm{Y}}$, $\dot{\bm{Z}}^{(0)}=0$, $\dot{\bm{\eta}}^{(0)}=0$; Parameters  $\lambda$, $\mu$, $\beta$, $K$;\\ }%\\

        \For{$k=0,1,\dots,K-1$}{  % for 更新(a,r)
             Update $\dot{\bm{Z}}^{(k+1)}$ by (\ref{z-mc})   % P
             
             Update $\dot{\bm{X}}^{(k+1)}$ by (\ref{x-mc})   % X
             
             Update $\dot{\bm{\eta}}^{(k+1)} = \dot{\bm{\eta}}^{(k)} + \beta^{(k)}(\dot{\bm{Y}}- \dot{\bm{X}}^{(k+1)} - \dot{\bm{Z}}^{(k+1)})$ 

             Update   $\beta^{(k+1)} = \mu\beta^{(k)},\quad(\mu>1)$ 
             } %for

		\KwOut{ The reconstructed image $\dot{\bm{X}}^{(K)}$ }
\end{algorithm}

\subsection{QNOF for robust principal component analysis}
To address the contamination of observed data by outliers or sparse noise, \cite{candes2011robust} proposed robust principal component analysis (RPCA) using the nuclear norm, with the $L_1$-norm modeling the errors. Building on this approach, we extend QNOF to RPCA, resulting in the QNOF-RPCA model:
\begin{equation}
\begin{aligned}
\min_{\dot{\bm{X}},\dot{\bm{Z}}} \lambda||\dot{\bm{X}} ||_{\mathrm{QNOF}} + \rho||\dot{\bm{Z}}||_{1} \quad \mathrm{s.t.} \quad \dot{\bm{Y}} = \dot{\bm{X}} + \dot{\bm{Z}}.
\label{rpca-NOF}
\end{aligned}
\end{equation}
where $\dot{\bm{Z}}$ is sparse error. According to ADMM, its augmented Lagrange function is
\begin{equation}
\begin{aligned}
\mathcal{L}(\dot{\bm{X}}, \dot{\bm{Z}}, \dot{\bm{\eta}}, \beta) &=  \lambda||\dot{\bm{X}} ||_{\mathrm{QNOF}} + \rho||\dot{\bm{Z}}||_{1} + \frac{\beta}{2}||\dot{\bm{Y}}-\dot{\bm{X}} - \dot{\bm{Z}} ||_{F}^{2}
 + \langle{ \dot{\bm{\eta}},\dot{\bm{Y}}-\dot{\bm{X}} - \dot{\bm{Z}} }\rangle.
\label{en-rpca}
\end{aligned}
\end{equation}

The update strategy for (\ref{en-rpca}) is as follows:
\begin{align}\left\{\begin{aligned}
\dot{\bm{Z}}^{(k+1)} &= \arg\min_{\dot{\bm{Z}}} \mathcal{L}(\dot{\bm{X}}^{(k)}, \dot{\bm{Z}}, \dot{\bm{\eta}}^{(k)}, \beta^{(k)}),   \\
\dot{\bm{X}}^{(k+1)} &= \arg\min_{\dot{\bm{X}}} \mathcal{L}(\dot{\bm{X}}, \dot{\bm{Z}}^{(k+1)}, \dot{\bm{\eta}}^{(k)}, \beta^{(k)}), \\
\dot{\bm{\eta}}^{(k+1)} &= \dot{\bm{\eta}}^{(k)} + \beta^{(k)}(\dot{\bm{Y}}- \dot{\bm{X}}^{(k+1)} - \dot{\bm{Z}}^{(k+1)}), \\
\beta^{(k+1)} &= \mu\beta^{(k)},\quad(\mu>1). \\
\end{aligned}\right.
\label{admm-rpca}
\end{align}
The $\dot{\bm{Z}}$ subproblem is
\begin{equation}
\begin{aligned}
\dot{\bm{Z}} = \arg\min_{\dot{\bm{Z}}} \, \rho||\dot{\bm{Z}}||_1 + \frac{\beta}{2} ||(\dot{\bm{Y}} -  \dot{\bm{X}} + \frac{ \dot{\bm{\eta}}}{\beta}) - \dot{\bm{Z}}||_{F}^{2}.
\end{aligned}
\label{z-rpca}
\end{equation}
According to the soft-threshold shrinkage, the solution for $\dot{\bm{Z}}$  is
\begin{equation}
\begin{aligned}
\dot{\bm{Z}} = \frac{\dot{\bm{Y}} -  \dot{\bm{X}} + \frac{ \dot{\bm{\eta}}}{\beta}}{\left|\dot{\bm{Y}} -  \dot{\bm{X}} + \frac{ \dot{\bm{\eta}}}{\beta}\right| + \epsilon }\max\left(\left|\dot{\bm{Y}} -  \dot{\bm{X}} + \frac{ \dot{\bm{\eta}}}{\beta}\right| - \frac{\rho}{\beta}, 0\right),
\end{aligned}
\label{z-s}
\end{equation}
where $|\cdot|$ is  the modulus of the quaternion and $\epsilon$ is a small number. The $\dot{\bm{X}}$ subproblem is consistent with \eqref{x-mc}. The whole algorithmic procedure is described in Algorithm \ref{alg:rpca}.

\begin{algorithm}[!t]
	\caption{QNOF-RPCA Algorithm}
	\label{alg:rpca}
    % \LinesNumbered 
		\KwIn{ Observation $\dot{\bm{Y}}$; Initialize $\dot{\bm{X}}^{(0)}=\dot{\bm{Y}}$, $\dot{\bm{Z}}^{(0)}=0$, $\dot{\bm{\eta}}^{(0)}=0$; Parameters $\rho$, $\lambda$, $\mu$, $\beta$, $K$;\\ }%\\

        \For{$k=0,1,\dots,K-1$}{  % for 更新(a,r)
             
             Update $\dot{\bm{Z}}^{(k+1)}$ by (\ref{z-s})   % Z
            
             Update $\dot{\bm{X}}^{(k+1)}$ by (\ref{x-mc})   % X

             Update $\dot{\bm{\eta}}^{(k+1)} = \dot{\bm{\eta}}^{(k)} + \beta^{(k)}(\dot{\bm{Y}}- \dot{\bm{X}}^{(k+1)} - \dot{\bm{Z}}^{(k+1)})$ 
         
             Update   $\beta^{(k+1)} = \mu\beta^{(k)},\quad(\mu>1)$ 
             } %for

		\KwOut{ The reconstructed image $\dot{\bm{X}}^{(K)}$ }
\end{algorithm}

\subsection{QNOF for robust matrix completion}
Combining \eqref{mc-NOF} and \eqref{rpca-NOF}, we formulate the QNOF-based robust matrix completion (RMC) problem as: 
\begin{equation}
\begin{aligned}
\min_{\dot{\bm{X}}} \lambda||\dot{\bm{X}} ||_{\mathrm{QNOF}} + \rho||\dot{\bm{Z}}||_{1} \quad \mathrm{s.t.} \quad \mathcal{P}_{\Omega}(\dot{\bm{X}} + \dot{\bm{Z}}) = \mathcal{P}_{\Omega}(\dot{\bm{Y}}),
\label{rmc-NOF}
\end{aligned}
\end{equation}
Introducing the auxiliary variables $\dot{\bm{P}}$ and $\dot{\bm{Q}}$, along with Lagrange multipliers $\dot{\bm{\eta}}$ and $\dot{\bm{\xi}}$, the augmented Lagrange function for \eqref{rmc-NOF} is given by:
\begin{equation}
\begin{aligned}
\mathcal{L}(\dot{\bm{X}}, \dot{\bm{Z}}, \dot{\bm{P}}, \dot{\bm{Q}}, \dot{\bm{\eta}}, \dot{\bm{\xi}}, &\beta_1, \beta_2) =  \lambda\left(\frac{||\dot{\bm{X}}||_{*}}{||\dot{\bm{X}}||_{F}}\right) + \rho||\dot{\bm{Z}}||_{1} + \frac{\beta_1}{2}||\dot{\bm{X}} - \dot{\bm{P}} ||^{2}_{F} + \langle{ \dot{\bm{\eta}},\dot{\bm{X}}-\dot{\bm{P}} }\rangle \\
&+  \frac{\beta_2}{2}||\dot{\bm{Z}} - \dot{\bm{Q}} ||^{2}_{F} + \langle{ \dot{\bm{\xi}},\dot{\bm{Z}}-\dot{\bm{Q}} }\rangle ~~ \mathrm{s.t.} ~~ \mathcal{P}_{\Omega}(\dot{\bm{P}} + \dot{\bm{Q}}) = \mathcal{P}_{\Omega}(\dot{\bm{Y}}).
\label{E-rmc}
\end{aligned}
\end{equation}

The update strategy for solving (\ref{E-rmc}) is defined as:
\begin{align}\left\{\begin{aligned}
\dot{\bm{P}}^{(k+1)} &= \arg\min_{\dot{\bm{P}}} \mathcal{L}(\dot{\bm{X}}^{(k)}, \dot{\bm{Z}}^{(k)}, \dot{\bm{P}}, \dot{\bm{Q}}^{(k)}, \dot{\bm{\eta}}^{(k)}, \dot{\bm{\xi}}^{(k)}, \beta_1^{(k)}, \beta_2^{(k)}),   ~~ \mathrm{s.t.} ~~ \mathcal{P}_{\Omega}(\dot{\bm{P}} + \dot{\bm{Q}}) = \mathcal{P}_{\Omega}(\dot{\bm{Y}}), \\
\dot{\bm{X}}^{(k+1)} &= \arg\min_{\dot{\bm{X}}} \mathcal{L}(\dot{\bm{X}}, \dot{\bm{Z}}^{(k)}, \dot{\bm{P}}^{(k+1)}, \dot{\bm{Q}}^{(k)}, \dot{\bm{\eta}}^{(k)}, \dot{\bm{\xi}}^{(k)}, \beta_1^{(k)}, \beta_2^{(k)}),   \\
\dot{\bm{Z}}^{(k+1)} &= \arg\min_{\dot{\bm{Z}}} \mathcal{L}(\dot{\bm{X}}^{(k+1)}, \dot{\bm{Z}}, \dot{\bm{P}}^{(k+1)}, \dot{\bm{Q}}^{(k)}, \dot{\bm{\eta}}^{(k)}, \dot{\bm{\xi}}^{(k)}, \beta_1^{(k)}, \beta_2^{(k)}),   \\
\dot{\bm{Q}}^{(k+1)} &= \arg\min_{\dot{\bm{Q}}} \mathcal{L}(\dot{\bm{X}}^{(k+1)}, \dot{\bm{Z}}^{(k+1)}, \dot{\bm{P}}^{(k+1)}, \dot{\bm{Q}}, \dot{\bm{\eta}}^{(k)}, \dot{\bm{\xi}}^{(k)}, \beta_1^{(k)}, \beta_2^{(k)}), ~~ \mathrm{s.t.} ~~ \mathcal{P}_{\Omega}(\dot{\bm{P}} + \dot{\bm{Q}}) = \mathcal{P}_{\Omega}(\dot{\bm{Y}}), \\
\dot{\bm{\eta}}^{(k+1)} &= \dot{\bm{\eta}}^{(k)} + \beta_{1}^{(k)}(\dot{\bm{X}}^{(k+1)}- \dot{\bm{P}}^{(k+1)}), \\
\dot{\bm{\xi}}^{(k+1)} &= \dot{\bm{\xi}}^{(k)} + \beta_{2}^{(k)}(\dot{\bm{Z}}^{(k+1)} - \dot{\bm{Q}}^{(k+1)}), \\
\beta_{1}^{(k+1)} &= \mu\beta_{1}^{(k)}, \beta_{2}^{(k+1)} = \mu\beta_{2}^{(k)},\quad(\mu>1). \\
\end{aligned}\right.
\label{admm-rmc}
\end{align}
The $\dot{\bm{X}}$ subproblem is solved as:
\begin{equation}
\begin{aligned}
\dot{\bm{X}} = \arg\min_{\dot{\bm{X}}} \, \lambda\left(\frac{||\dot{\bm{X}}||_{*}}{||\dot{\bm{X}}||_{F}}\right) +  \frac{\beta_1}{2}||\dot{\bm{X}} - \dot{\bm{P}} + \frac{\dot{\bm{\eta}}}{\beta_1}||^{2}_{F}. 
\label{X-rmc}
\end{aligned}
\end{equation}
Theorem \ref{theorem01} and Algorithm \ref{alg:1} provide the solution to this subproblem.

The $\dot{\bm{Z}}$ subproblem is formulated as:
\begin{equation}
\begin{aligned}
\dot{\bm{Z}} = \arg\min_{\dot{\bm{Z}}} \,  \rho||\dot{\bm{Z}}||_{1} +  \frac{\beta_2}{2}||\dot{\bm{Z}} - \dot{\bm{Q}} + \frac{\dot{\bm{\xi}}}{\beta_2} ||^{2}_{F}. 
\label{Z-rmc}
\end{aligned}
\end{equation}
This formulation aligns with \eqref{z-s}.

The $\dot{\bm{P}}$ and $\dot{\bm{Q}}$ subproblems are defined as:
\begin{equation}
\begin{aligned}
\dot{\bm{P}} = \arg\min_{\dot{\bm{P}}} \,  \frac{\beta_1}{2}||\dot{\bm{X}} - \dot{\bm{P}} + \frac{\dot{\bm{\eta}}}{\beta_1}||^{2}_{F} ~~ \mathrm{s.t.} ~~ \mathcal{P}_{\Omega}(\dot{\bm{P}} + \dot{\bm{Q}}) = \mathcal{P}_{\Omega}(\dot{\bm{Y}}), \\
\dot{\bm{Q}} = \arg\min_{\dot{\bm{Q}}} \, \frac{\beta_2}{2}||\dot{\bm{Z}} - \dot{\bm{Q}} + \frac{\dot{\bm{\xi}}}{\beta_2} ||^{2}_{F} ~~ \mathrm{s.t.} ~~ \mathcal{P}_{\Omega}(\dot{\bm{P}} + \dot{\bm{Q}}) = \mathcal{P}_{\Omega}(\dot{\bm{Y}}). 
\label{PQ-rmc}
\end{aligned}
\end{equation}
The closed-form solutions for these quadratic problems are:
\begin{align}
&\dot{\bm{P}} =\left\{\begin{aligned}
&\dot{\bm{X}}(\bar{\Omega}) + \frac{\dot{\bm{\eta}}(\bar{\Omega})}{\beta_1}, \, &\dot{P_{ij}} \in \bar{\Omega};\\
&\frac{\dot{\bm{X}}(\Omega) + \frac{\dot{\bm{\eta}}(\Omega)}{\beta_1} + \dot{\bm{Y}}(\Omega) - \dot{\bm{Z}}(\Omega) - \frac{\dot{\bm{\xi}}(\Omega)}{\beta_2}}{2}, \, &\dot{P_{ij}} \in \Omega;\\
\end{aligned}\right. \label{PS-rmc} \\
&\dot{\bm{Q}} =\left\{\begin{aligned}
& \dot{\bm{Z}}(\bar{\Omega}) + \frac{\dot{\bm{\xi}}(\bar{\Omega})}{\beta_2}, \, \dot{Q_{ij}} \in \bar{\Omega};\\
& \dot{\bm{Y}}(\Omega) - \dot{\bm{P}}(\Omega), \, \dot{Q_{ij}} \in \Omega.
\end{aligned}\right.
\label{QS-rmc}
\end{align}

\begin{algorithm}[!t]
	\caption{QNOF-RMC Algorithm}
	\label{alg:rmc}
    % \LinesNumbered 
		\KwIn{ Observation $\dot{\bm{Y}}$; Initialize $\dot{\bm{X}}^{(0)}=\dot{\bm{Y}}$, $\dot{\bm{Z}}^{(0)}=0$, $\dot{\bm{\eta}}^{(0)}=0$; Parameters $\rho$, $\lambda$, $\mu$, $\beta_1$, $\beta_2$, $K$;\\ }%\\

        \For{$k=0,1,\dots,K-1$}{  % for 更新(a,r)
             Update $\dot{\bm{P}}^{(k+1)}$ by \eqref{PS-rmc}   % P
             
             Update $\dot{\bm{X}}^{(k+1)}$ by \eqref{X-rmc}   % X
             
             Update $\dot{\bm{Z}}^{(k+1)}$ by \eqref{Z-rmc}   % Z
             
             Update $\dot{\bm{Q}}^{(k+1)}$ by \eqref{QS-rmc}   % Q

             Update $\dot{\bm{\eta}}^{(k+1)} = \dot{\bm{\eta}}^{(k)} + \beta_{1}^{(k)}(\dot{\bm{X}}^{(k+1)}- \dot{\bm{P}}^{(k+1)})$ 
             
             Update $\dot{\bm{\xi}}^{(k+1)} = \dot{\bm{\xi}}^{(k)} + \beta_{2}^{(k)}(\dot{\bm{Z}}^{(k+1)} - \dot{\bm{Q}}^{(k+1)})$ 
             
             Update   $\beta_{1}^{(k+1)} = \mu\beta_{1}^{(k)}, \beta_{2}^{(k+1)} = \mu\beta_{2}^{(k)},\quad(\mu>1)$ 
             } %for

		\KwOut{ The reconstructed image $\dot{\bm{X}}^{(K)}$ }
\end{algorithm}

\subsection{Convergence analysis}

This section analyzes the convergence of the proposed ADMM schemes \eqref{admm-mc}, \eqref{admm-rmc}, and \eqref{admm-rpca}. Since \eqref{admm-mc} and \eqref{admm-rpca} are special cases of \eqref{admm-rmc}, it suffices to demonstrate the convergence of \eqref{admm-rmc}. In Theorem \ref{theorem04}, we prove that the sequences generated by \eqref{admm-rmc} are weakly convergent. Specifically, these sequences are Cauchy sequences when the sequence $\{\dot{\bm{X}}^{(k)}\}$ is bounded.

\begin{theorem}
Suppose that the parameter sequence $\{\beta_1^{(k)}\}$, $\{\beta_2^{(k)}\}$ are unbounded and the sequences $\{\dot{\bm{X}}^{(k)}\}$, $\{\dot{\bm{Z}}^{(k)}\}$, $\{\dot{\bm{P}}^{(k)}\}$, and $\{\dot{\bm{Q}}^{(k)}\}$ are generated by Algorithm \ref{alg:rmc}. Additionally, assume $\{\dot{\bm{X}}^{(k)}\}$ is bounded. To prevent the matrices from degenerating to $\bm{0}$, let a small constant $\theta$ exist such that the first singular value $\sigma_{1}^{(k)}$ of $\dot{\bm{P}}^{(k+1)} - {\beta_{1}^{(k)}}^{-1}\dot{\bm{\eta}}^{(k)}$ satisfies $\sigma_{1}^{(k)} > \theta$. Under these conditions, the following convergence properties hold:

\begin{align}
&\lim_{k\rightarrow +\infty} || \dot{\bm{X}}^{(k+1)}-\dot{\bm{X}}^{(k)} ||_{F} = 0, \label{converge31}\\
&\lim_{k\rightarrow +\infty} || \dot{\bm{Z}}^{(k+1)}-\dot{\bm{Z}}^{(k)} ||_{F} = 0. \label{converge32}\\
&\lim_{k\rightarrow +\infty} || \dot{\bm{P}}^{(k+1)}-\dot{\bm{P}}^{(k)} ||_{F} = 0. \label{converge33}\\
&\lim_{k\rightarrow +\infty} || \dot{\bm{Q}}^{(k+1)}-\dot{\bm{Q}}^{(k)} ||_{F} = 0. \label{converge34}\\
&\lim_{k\rightarrow +\infty} ||\dot{\bm{X}}^{(k+1)}-\dot{\bm{P}}^{(k+1)} ||_{F} = 0. \label{converge35}\\
&\lim_{k\rightarrow +\infty} ||\dot{\bm{Z}}^{(k+1)}-\dot{\bm{Q}}^{(k+1)} ||_{F} = 0. \label{converge36}
\end{align}
\label{theorem04}
\end{theorem}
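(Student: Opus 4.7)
The plan is to follow the standard ADMM convergence template adapted to the nonconvex QNOF setting, splitting the proof into three tiers: bounding the dual multipliers, deducing the constraint-violation limits \eqref{converge35}--\eqref{converge36}, and finally using the closed-form $\dot{\bm{P}}$- and $\dot{\bm{Q}}$-updates to obtain the primal successive-iterate limits \eqref{converge31}--\eqref{converge34}.

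First, I would establish uniform boundedness of $\{\dot{\bm{\eta}}^{(k)}\}$ and $\{\dot{\bm{\xi}}^{(k)}\}$ via the KKT conditions of the subproblems. For $\dot{\bm{\xi}}$, the optimality condition of the $\dot{\bm{Z}}$-subproblem \eqref{Z-rmc} reads $-\dot{\bm{\xi}}^{(k+1)}\in\rho\,\partial\|\dot{\bm{Z}}^{(k+1)}\|_1$, and since the quaternion $\ell_1$-subdifferential is entrywise bounded in modulus by $1$, this gives $\|\dot{\bm{\xi}}^{(k+1)}\|_F\le \rho\sqrt{mn}$. For $\dot{\bm{\eta}}$, the optimality of the $\dot{\bm{X}}$-subproblem \eqref{X-rmc} yields
$$
-\dot{\bm{\eta}}^{(k+1)} \in \lambda\,\partial \|\cdot\|_{\mathrm{QNOF}}\bigl(\dot{\bm{X}}^{(k+1)}\bigr).
$$
Using the QSVD characterization of the minimizer from Theorem \ref{theorem01}, this subgradient factors as $\dot{\bm{U}}^{(k)}\mathrm{diag}(\bm{g}^{(k)})\dot{\bm{V}}^{(k)\ast}$, where $\bm{g}^{(k)}$ is a subgradient of the scalar ratio $\|\bm{\sigma}\|_1/\|\bm{\sigma}\|_2$ at $\tilde{\bm{\Sigma}}_{\dot{\bm{X}}}^{(k)}$. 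The entries of $\bm{g}^{(k)}$ scale like $1/\|\tilde{\bm{\Sigma}}_{\dot{\bm{X}}}^{(k)}\|_2$ times quantities uniformly controlled by the boundedness hypothesis on $\{\dot{\bm{X}}^{(k)}\}$; the lower bound $\sigma_{1}^{(k)}>\theta$ prevents $\|\tilde{\bm{\Sigma}}_{\dot{\bm{X}}}^{(k)}\|_2$ from approaching zero (via the thresholding formula \eqref{L1-L2-S}), so a uniform bound $\|\dot{\bm{\eta}}^{(k)}\|_F\le M$ follows.

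Given these multiplier bounds, properties \eqref{converge35} and \eqref{converge36} follow immediately from the update rules
$$
\dot{\bm{X}}^{(k+1)}-\dot{\bm{P}}^{(k+1)}=\frac{\dot{\bm{\eta}}^{(k+1)}-\dot{\bm{\eta}}^{(k)}}{\beta_1^{(k)}},\qquad
\dot{\bm{Z}}^{(k+1)}-\dot{\bm{Q}}^{(k+1)}=\frac{\dot{\bm{\xi}}^{(k+1)}-\dot{\bm{\xi}}^{(k)}}{\beta_2^{(k)}},
$$
because the numerators stay bounded while $\beta_1^{(k)},\beta_2^{(k)}\to+\infty$. For the successive-iterate limits \eqref{converge31}--\eqref{converge34}, I would exploit the explicit formulas \eqref{PS-rmc}--\eqref{QS-rmc}. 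On $\bar{\Omega}$, $\dot{\bm{P}}^{(k+1)}=\dot{\bm{X}}^{(k)}+\dot{\bm{\eta}}^{(k)}/\beta_1^{(k)}$, so consecutive differences $\dot{\bm{P}}^{(k+1)}-\dot{\bm{P}}^{(k)}$ reduce to $\dot{\bm{X}}^{(k)}-\dot{\bm{X}}^{(k-1)}$ plus a vanishing multiplier term; an analogous decomposition applies on $\Omega$ using the average expression. Combining these with the already-established \eqref{converge35}, \eqref{converge36} and the triangle inequality $\|\dot{\bm{X}}^{(k+1)}-\dot{\bm{X}}^{(k)}\|_F\le\|\dot{\bm{X}}^{(k+1)}-\dot{\bm{P}}^{(k+1)}\|_F+\|\dot{\bm{P}}^{(k+1)}-\dot{\bm{P}}^{(k)}\|_F+\|\dot{\bm{P}}^{(k)}-\dot{\bm{X}}^{(k)}\|_F$, then iterating the same idea for $\dot{\bm{Z}}$ and $\dot{\bm{Q}}$, closes the loop.

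The main obstacle will be the multiplier bound for $\dot{\bm{\eta}}^{(k)}$. Unlike the $\ell_1$ subdifferential, the subgradient of the QNOF regularizer is not a priori bounded: as $\|\dot{\bm{X}}\|_F\to 0$ the Frobenius denominator vanishes and the subgradient blows up, and the nonconvexity of the $L_1/L_2$ ratio means I cannot rely on convex analysis shortcuts. The hypothesis $\sigma_{1}^{(k)}>\theta$ is precisely what excludes this pathology, but converting it into a quantitative bound on $\|\bm{g}^{(k)}\|$ requires writing the subgradient explicitly through the QSVD of $\dot{\bm{X}}^{(k+1)}$, invoking scale invariance (Proposition \ref{p3.1}) to normalize, and then estimating the singular-value ratio derivative uniformly in $k$. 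Once this technical estimate is secured, all remaining steps reduce to routine telescoping and algebraic manipulation of the ADMM updates.
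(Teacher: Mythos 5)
Your overall architecture---bound the multipliers, let $\beta_1^{(k)},\beta_2^{(k)}\to\infty$ to kill the constraint violations \eqref{converge35}--\eqref{converge36}, then control successive differences---matches the paper's. Your route to the bound on $\{\dot{\bm{\eta}}^{(k)}\}$ is genuinely different: you invoke the first-order condition $-\dot{\bm{\eta}}^{(k+1)}\in\lambda\,\partial\|\cdot\|_{\mathrm{QNOF}}(\dot{\bm{X}}^{(k+1)})$ and estimate the subgradient, whereas the paper bounds $\beta_1^{(k)}\|\dot{\bm{X}}^{(k+1)}-(\dot{\bm{P}}^{(k+1)}-{\beta_1^{(k)}}^{-1}\dot{\bm{\eta}}^{(k)})\|_F$ directly from the closed-form proximal solution of Theorems \ref{theorem001} and \ref{theorem0001} via a two-case analysis. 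Your version is arguably cleaner if the generalized subdifferential calculus for the quaternion ratio is justified, but both arguments hinge on the same ingredient: a uniform positive lower bound on $\|\dot{\bm{X}}^{(k+1)}\|_F$. The hypothesis $\sigma_{1}^{(k)}>\theta$ supplies this directly only in the one-sparse branch; in the branch governed by \eqref{L1-L2-S} one needs a lower bound on $r=\|\dot{\bm{X}}^{(k+1)}\|_F$, which the paper obtains by quietly introducing extra constants $\theta_r,\theta_a,\theta_t$. You correctly flag this as the main obstacle but do not close it. A smaller imprecision: the identity $-\dot{\bm{\xi}}^{(k+1)}\in\rho\,\partial\|\dot{\bm{Z}}^{(k+1)}\|_1$ is off by $\beta_2^{(k)}(\dot{\bm{Q}}^{(k)}-\dot{\bm{Q}}^{(k+1)})$, since the $\dot{\bm{Z}}$-subproblem is solved against $\dot{\bm{Q}}^{(k)}$ while the multiplier update uses $\dot{\bm{Q}}^{(k+1)}$ (the paper commits the same slip in \eqref{xi}).

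The genuine gap is in your derivation of \eqref{converge31}--\eqref{converge34}: as written it is circular. You reduce $\|\dot{\bm{P}}^{(k+1)}-\dot{\bm{P}}^{(k)}\|_F$ on $\bar{\Omega}$ to $\|\dot{\bm{X}}^{(k)}-\dot{\bm{X}}^{(k-1)}\|_F$ plus vanishing terms, and then recover $\|\dot{\bm{X}}^{(k+1)}-\dot{\bm{X}}^{(k)}\|_F$ from $\|\dot{\bm{P}}^{(k+1)}-\dot{\bm{P}}^{(k)}\|_F$ by the triangle inequality; substituting one into the other only yields $d_{k+1}\le d_k+\epsilon_k$ with $\epsilon_k\to 0$, which is satisfied by $d_k\equiv 1$ and proves nothing. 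The paper breaks the cycle with two devices you would need to reproduce: (i) the $\dot{\bm{Z}}$-chain is anchored by the soft-threshold contraction $\|\dot{\bm{Z}}^{(k+1)}-(\dot{\bm{Q}}^{(k)}-{\beta_2^{(k)}}^{-1}\dot{\bm{\xi}}^{(k)})\|_F\le\rho\sqrt{mn}/\beta_2^{(k)}$, which gives \eqref{converge32} from \eqref{converge36} without ever referring back to $\dot{\bm{Q}}$-differences, and then \eqref{converge34} follows; (ii) for $\mathcal{P}_{\bar{\Omega}}(\dot{\bm{P}}^{(k+1)}-\dot{\bm{P}}^{(k)})$ it rewrites $\dot{\bm{P}}^{(k)}=\dot{\bm{X}}^{(k)}-{\beta_1^{(k-1)}}^{-1}(\dot{\bm{\eta}}^{(k)}-\dot{\bm{\eta}}^{(k-1)})$ via the multiplier update, so the $\dot{\bm{X}}^{(k)}$ term cancels exactly against $\mathcal{P}_{\bar{\Omega}}(\dot{\bm{P}}^{(k+1)})=\mathcal{P}_{\bar{\Omega}}(\dot{\bm{X}}^{(k)})+{\beta_1^{(k)}}^{-1}\mathcal{P}_{\bar{\Omega}}(\dot{\bm{\eta}}^{(k)})$ and only bounded-multiplier-over-$\beta$ terms survive, while the $\Omega$-part follows from $\mathcal{P}_{\Omega}(\dot{\bm{P}})=\mathcal{P}_{\Omega}(\dot{\bm{Y}}-\dot{\bm{Q}})$ and the already-established $\dot{\bm{Q}}$-differences. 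Without one of these (or an auxiliary lemma that a bounded sequence whose increments vary slowly must have vanishing increments), your loop does not close.
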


See Appendix \ref{appendix.a} for the proof. Notably, the boundedness of $\{\dot{\bm{X}}^{(k)}\}$ is essential to ensure convergence. The term $||\dot{\bm{X}} ||_{\mathrm{QNOF}}$ differs from $||\dot{\bm{X}}||_{*}$ and $||\dot{\bm{X}}||_{*} - \alpha||\dot{\bm{X}}||_{F}$. Fortunately, this assumption is generally satisfied in practical applications. For instance, color images represented as quaternion matrices are inherently bounded.

\section{Experimental results}
\label{sec:experiments}

In this section, we perform a series of numerical tests. All experiments are performed on a PC with Intel (R) Core (TM) i7-13700KF 3.40 GHz and 64G RAM and MATLAB (R2022b).

\subsection{Simulation data}

We begin by testing the feasibility of QNOF using small-scale simulation data. We generate random quaternion matrices $\dot{\bm{X}}_0$ of size $n \times n$, where $n$ is set to 50 and 100. For $n=50$, we vary the rank of $\dot{\bm{X}}_0$ to 2, 4, 6, 8, and 10; for $n=100$, we set the ranks to 2, 6, 10, 15, and 20. These matrices are corrupted with 5\% random noise, leaving 95\% of the entries observed. The recovery results of QNOF are presented in Table \ref{tab:sim}. The findings demonstrate that the $\mathrm{rank}(\dot{\bm{X}})$ of the recovered matrix aligns with $\mathrm{rank}(\dot{\bm{X}}_0)$, and the relative error $||\dot{\bm{X}}-\dot{\bm{X}}_0||_{F}/||\dot{\bm{X}}_0||_{F}$ remains small.

\begin{table}[t]
\footnotesize
\caption{ Performance of QNOF at 5\% noise damage and 95\% observation. }
\label{tab:sim}
\begin{center}
\renewcommand\arraystretch{0.9}
  \begin{tabular}{lccccc} 
  \hline
   \multicolumn{6}{c}{$n=50$} \\ \hline
    rank($\dot{\bm{X}}_0$) & 2 & 4 & 6 & 8 & 10 \\
    rank($\dot{\bm{X}}$)   & 2 & 4 & 6 & 8 & 10  \\ 
    $||\dot{\bm{X}}-\dot{\bm{X}}_0||_{F}/||\dot{\bm{X}}_0||_{F}$ & 2.3634e-08 & 3.1679e-08 & 3.4732e-08 & 4.0861e-08 & 4.6718e-08\\ \hline

    \multicolumn{6}{c}{$n=100$} \\ \hline
    rank($\dot{\bm{X}}_0$) & 2 & 6 & 10 & 15 & 20  \\
    rank($\dot{\bm{X}}$)   & 2 & 6 & 10 & 15 & 20  \\ 
    $||\dot{\bm{X}}-\dot{\bm{X}}_0||_{F}/||\dot{\bm{X}}_0||_{F}$ & 2.3283e-08 & 2.1213e-08 & 3.1720e-08 & 5.6182e-08 & 5.7727e-08 \\ \hline

  \end{tabular}
\end{center}
\end{table}

\begin{figure}[t]
	\centering
	\addtolength{\tabcolsep}{-5.5pt}
    \renewcommand\arraystretch{0.4}
	{\fontsize{8pt}{\baselineskip}\selectfont 
	\begin{tabular}{ccc}
        \includegraphics[width=0.3\textwidth]{./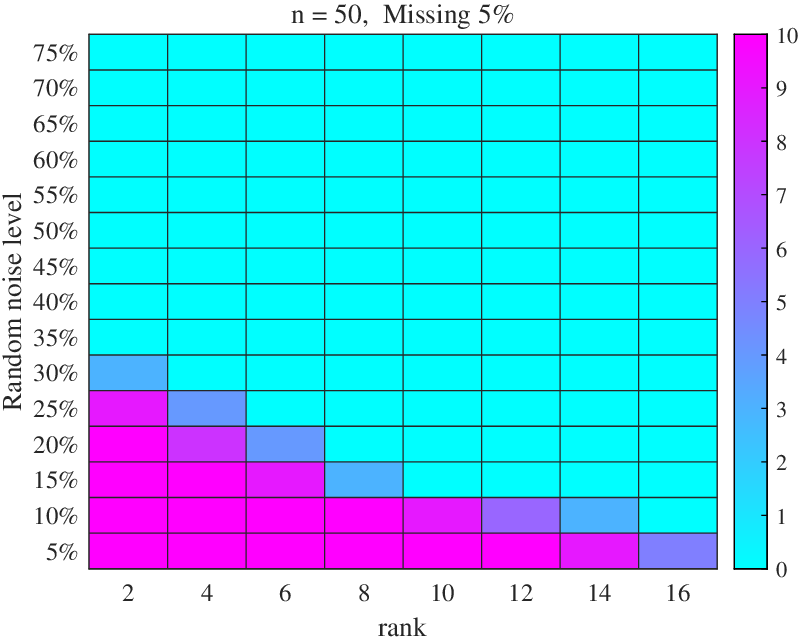} &
        \includegraphics[width=0.3\textwidth]{./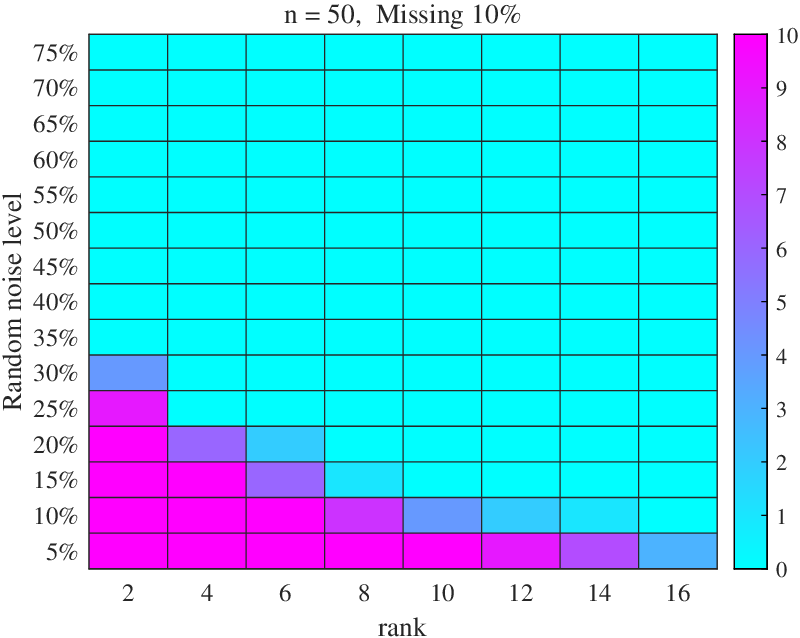} &
        \includegraphics[width=0.3\textwidth]{./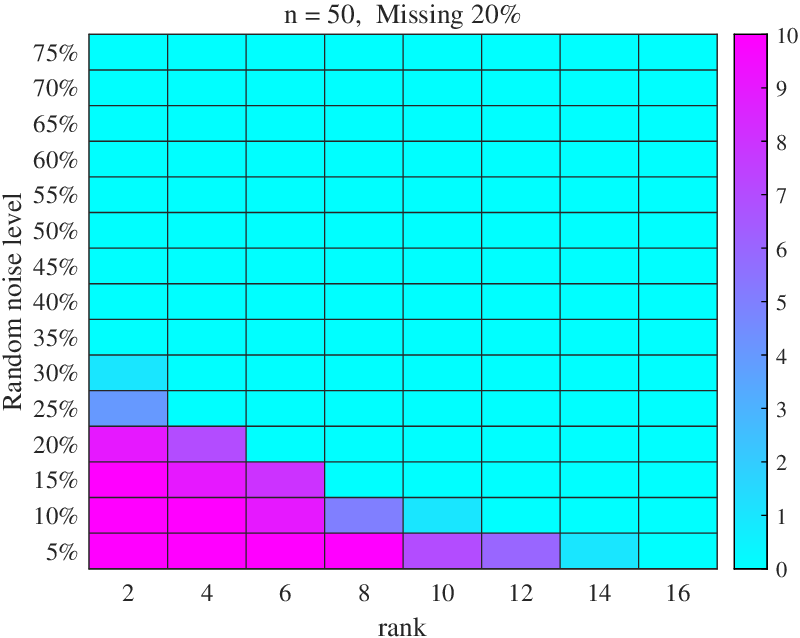} \\
        \includegraphics[width=0.3\textwidth]{./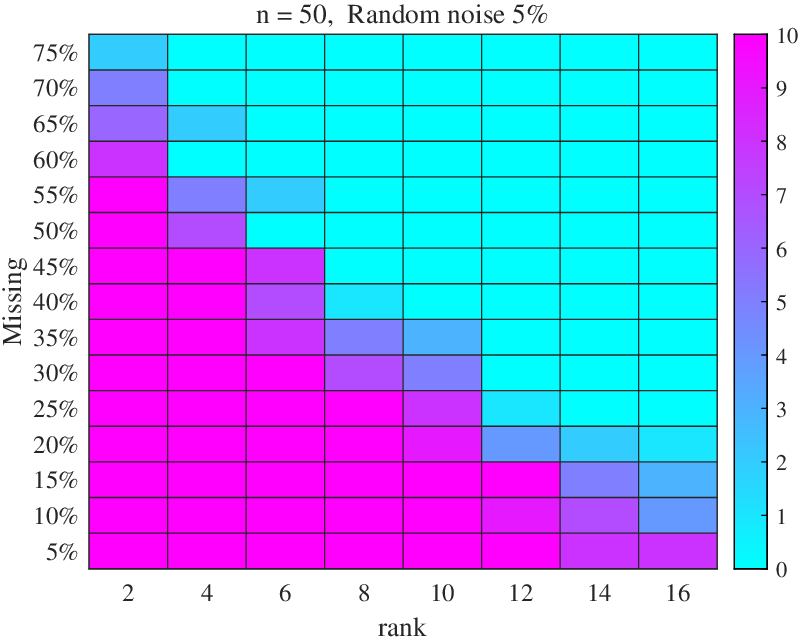} &
        \includegraphics[width=0.3\textwidth]{./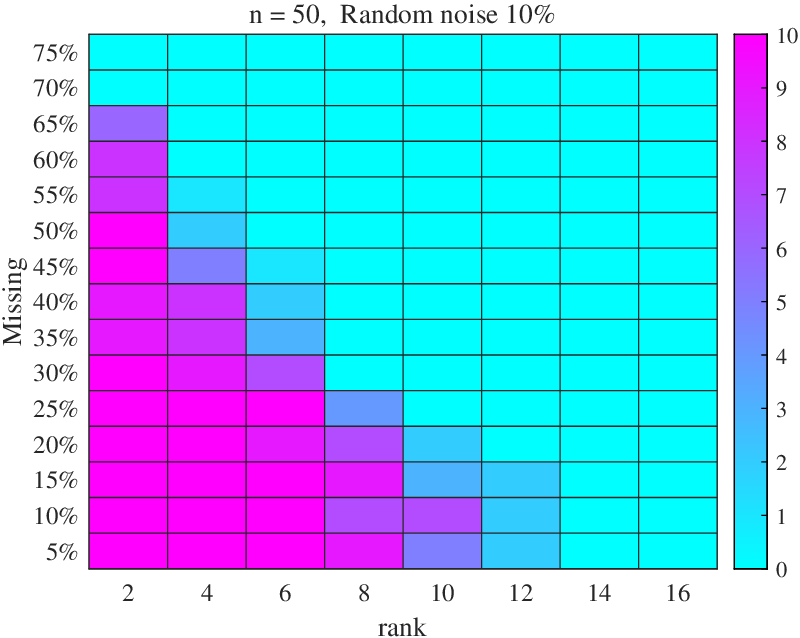} &
        \includegraphics[width=0.3\textwidth]{./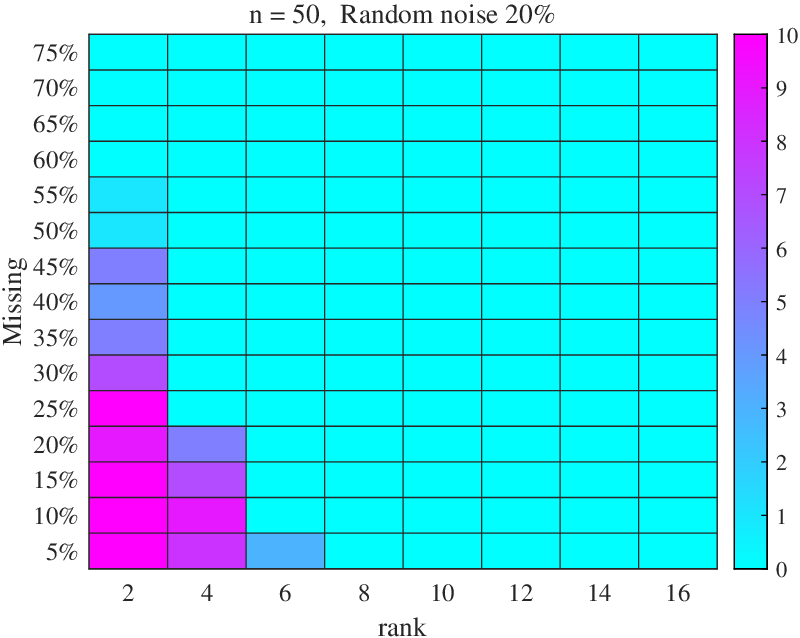} \\

	\end{tabular}
	}
	\caption{Exact recovery for different ranks ($x$-axis) and noise/missing ($y$-axis) degradation is analyzed with a fixed $n=50$. The first row shows the effect of noise and rank on exact recovery with a fixed proportion of missing entries. The second row illustrates the effect of missing entries and rank on exact recovery with a fixed noise level.}
	\label{fig:sim}
\end{figure}
\begin{figure}[t]
	\centering
	\addtolength{\tabcolsep}{-5.5pt}
    \renewcommand\arraystretch{0.4}
	{\fontsize{8pt}{\baselineskip}\selectfont 
	\begin{tabular}{cccccc}
        \includegraphics[width=0.13\textwidth]{./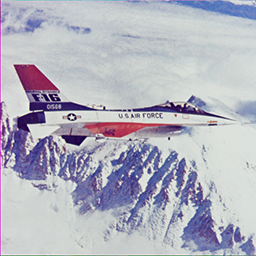} &
        \includegraphics[width=0.13\textwidth]{./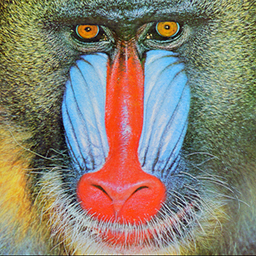} &
        \includegraphics[width=0.13\textwidth]{./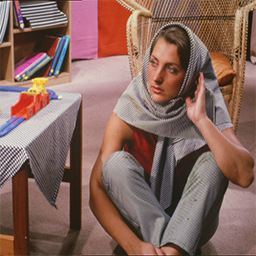} &
        \includegraphics[width=0.13\textwidth]{./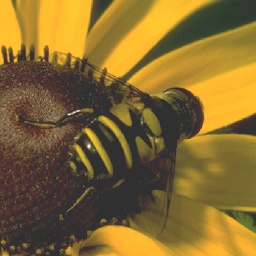} &
        \includegraphics[width=0.13\textwidth]{./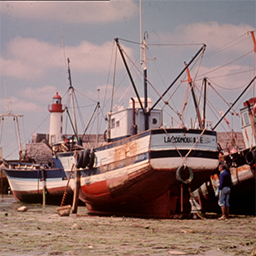} &
        \includegraphics[width=0.13\textwidth]{./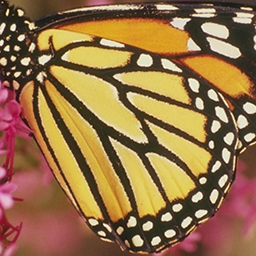} \\
        \includegraphics[width=0.13\textwidth]{./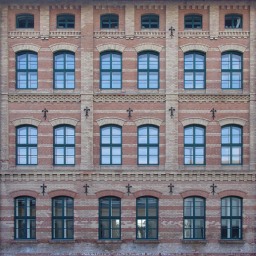} &
        \includegraphics[width=0.13\textwidth]{./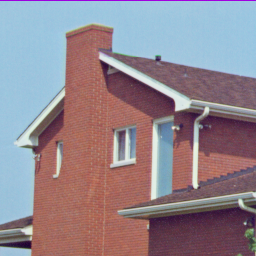} &
        \includegraphics[width=0.13\textwidth]{./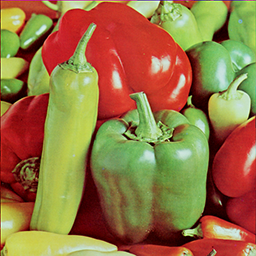} &
        \includegraphics[width=0.13\textwidth]{./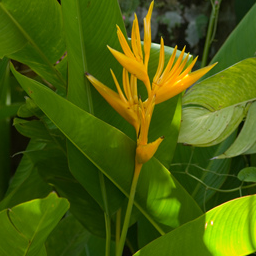} & \includegraphics[width=0.13\textwidth]{./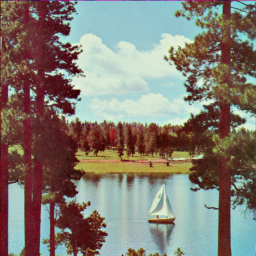} &
        \includegraphics[width=0.13\textwidth]{./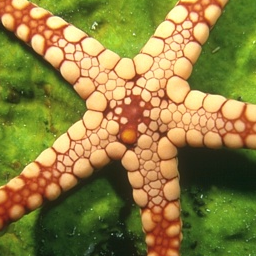} \\

	\end{tabular}
	}
	\caption{ CSet12. }
	\label{CSet12}
\end{figure}

To further evaluate the accuracy of QNOF in matrix recovery, we investigate its relationship with the matrix rank, the proportion of missing entries, and the degree of noise corruption. The experiments are organized into two groups. In the first group, the proportion of missing entries is fixed at 5\%, 10\%, and 20\%, while the ($\mathrm{rank}(\dot{\bm{X}}_0)$, noise level) pairs are varied. Here, $\mathrm{rank}(\dot{\bm{X}}_0)$ increases by 2 within the range $[2,16]$, and the noise level is incremented by 5\% within the range $[5\%, 75\%]$. In the second group, the noise levels are fixed at 5\%, 10\%, and 20\%, while the ($\mathrm{rank}(\dot{\bm{X}}_0)$, missing entries) pairs are varied in the same manner as in the first group. For each experiment, we conduct 10 trials, and the matrix is considered accurately recovered if $||\dot{\bm{X}}-\dot{\bm{X}}_0||_{F}/||\dot{\bm{X}}_0||_{F} \leq 1.0e-8$. The exact recovery rates for each experiment are shown in Fig. \ref{fig:sim}. The results indicate that a smaller proportion of missing entries or noise corruption increases the likelihood of exact recovery. Furthermore, the rank of the matrix also influences recovery success; lower ranks lead to higher recovery rates.

\subsection{Real image data}

\begin{table}[t]
\footnotesize
\caption{ State-of-the-art algorithms compare PSNR/SSIM on MC. The best results are shown in \textbf{bold}. }
\label{tab:mc}
\begin{center}
\resizebox{\textwidth}{!}{
  \begin{tabular}{ccccccccc} 
  \hline
    Miss & QNNM & QWNNM & QSNM$_{1/2}$ & QSNM$_{2/3}$ & QSNM$_{0.975}$ & QWSNM & QNMF & QNOF \\ \hline

    50\%  & 26.12/0.7733 & 24.63/0.6577 & 25.87/0.7156 & 26.30/0.7440 & 26.22/0.7739 & 25.47/0.6993 & 26.27/0.7594 & \textbf{26.34}/\textbf{0.7819} \\ 
    60\%  & 24.18/0.6890 & 22.80/0.5680 & 23.94/0.6255 & 24.34/0.6548 & 24.27/0.6895 & 23.62/0.6124 & 24.36/0.6717 & \textbf{24.43}/\textbf{0.7004} \\ 
    75\%  & 21.24/0.5287 & 19.60/0.3942 & 20.88/0.4568 & 21.32/0.4859 & 21.33/0.5275 & 19.14/0.3827 & 21.52/0.5083 & \textbf{21.59}/\textbf{0.5442} \\ 
    80\%  & 20.08/0.4616 & 17.04/0.2769 & 19.61/0.3879 & 20.10/0.4182 & 20.17/0.4594 & 14.42/0.2097 & 20.37/0.4415 & \textbf{20.46}/\textbf{0.4792} \\ \hline

  \end{tabular}
}
\end{center}
\end{table}

\begin{table}[!t]
\footnotesize
\caption{State-of-the-art algorithms compare PSNR/SSIM on RPCA. The best results are shown in \textbf{bold}.}
\label{tab:rpca}
\begin{center}
\resizebox{\textwidth}{!}{
  \begin{tabular}{ccccccccc} 
  \hline

   Random  & QNNM & QWNNM & QSNM$_{1/2}$ & QSNM$_{2/3}$ & QSNM$_{0.975}$ & QWSNM & QNMF & QNOF \\ \hline
3\%  & 25.78/0.8627 & 24.40/0.7582 & 28.22/0.8756 & 28.82/0.8972 & 29.51/0.9432 & 24.32/0.7513 & \textbf{33.21}/0.9462 & 32.00/\textbf{0.9662} \\ 
5\%  & 25.69/0.8586 & 24.39/0.7558 & 28.11/0.8712 & 28.66/0.8926 & 29.32/0.9375 & 24.29/0.7489 & \textbf{32.07}/0.9362 & 31.58/\textbf{0.9563} \\ 
7\%  & 25.61/0.8544 & 24.33/0.7534 & 28.01/0.8670 & 28.50/0.8877 & 29.11/0.9313 & 24.25/0.7468 & 30.86/0.9164  & \textbf{31.12}/\textbf{0.9441} \\ 
10\%  & 25.47/0.8479 & 24.28/0.7498 & 27.73/0.8576 & 28.25/0.8793 & 28.79/0.9207 & 24.18/0.7438 & 29.34/0.8895 & \textbf{30.38}/\textbf{0.9223} \\ 
20\%  & 24.94/0.8216 & 23.97/0.7338 & 26.78/0.8201 & 27.27/0.8421 & 27.57/0.8691 & 23.91/0.7294 & 25.85/0.7704 & \textbf{27.71}/\textbf{0.8547} \\ \hline
% 30\%  & 24.32/0.7895 & 23.55/0.7109 & 23.30/0.6946 & 25.58/0.7714 & 25.93/0.7854 & 22.87/0.6584 & 25.87/0.7891 \\ \hline

  \end{tabular}
}
\end{center}
\end{table}

We evaluate the proposed QNOF on real images, as shown in Fig. \ref{CSet12}, and compare it with several state-of-the-art quaternion low-rank methods: quaternion nuclear norm (QNNM) \cite{jia2019robust}, quaternion weighted nuclear norm (QWNNM) \cite{yu2019quaternion,huang2022quaternion}, quaternion Schatten $p$-norm (QSNM), quaternion weighted Schatten $p$-norm (QWSNM) \cite{zhang2024quaternion}, and quaternion nuclear norm minus Frobenius norm (QNMF) \cite{guo2025quaternion}. Our evaluation focuses on two tasks: color image inpainting and random impulse noise removal. We assess performance using the peak signal-to-noise ratio (PSNR) and structural similarity (SSIM), defined as:
\begin{equation}
\begin{aligned}
\mathrm{PSNR} &= 10 \log_{10}\frac{{\mathrm{Peak}}^{2}}{\mathrm{MSE}}, \quad \mathrm{MSE}  
= \frac{1}{mn}\sum_{i=1}^{mn} \left( u(i) - \hat{u}(i) \right)^{2}; \\
\mathrm{SSIM} &= \frac{(2{\mu_u}{\mu_{\hat{u}}}+c_1)(2{\sigma_{u\hat{u}}}+c_2)}
{({\mu^2_u}+{\mu^2_{\hat{u}}}+c_1) ({\sigma^2_u}+{\sigma^2_{\hat{u}}}+c_2)},
\end{aligned}
\end{equation}
where Peak represents the image peak value. The variables $u$ and $\hat{u}$ denote the ground truth and restored images, respectively. $\mu_u$ and $\sigma_u$ are the mean and standard deviation of $u$, while $\mu_{\hat{u}}$ and $\sigma_{\hat{u}}$ refer to the statistics of $\hat{u}$. Additionally, $\sigma_{u\hat{u}}$ is the covariance between $u$ and $\hat{u}$, and $c_1$ and $c_2$ are small constants.

For the MC simulations of color image inpainting, we test four missing rates: 50\%, 60\%, 75\%, and 80\%. The numerical results are summarized in Table \ref{tab:mc}, which demonstrates that QNOF achieves the best performance. Notably, the improvement in SSIM is more pronounced, indicating that QNOF excels at preserving image structure and detail. While the recent QNMF method also yields good PSNR results, it is significantly inferior to QNOF in terms of SSIM.

For RPCA, we test five scenarios with random noise rates of 3\%, 5\%, 7\%, 10\%, and 20\% for color image random impulse noise removal. The results are shown in Table \ref{tab:rpca}. Similar to the MC simulations, QNOF outperforms other methods in terms of SSIM. At low noise levels, QNMF performs better than QNOF, but at higher noise levels ($> 5\%$), QNOF demonstrates the best performance.

For RMC, we validate the method through a joint task of color image inpainting and random impulse noise removal. We consider six combinations of missing and noise rates: (50\%, 3\%), (50\%, 5\%), (70\%, 3\%), (70\%, 5\%), (80\%, 3\%), and (80\%, 5\%). The results, presented in Table \ref{tab:rmc}, show that QNOF remains robust and achieves optimal performance across all scenarios.

\begin{table}[!t]
\footnotesize
\caption{State-of-the-art algorithms compare PSNR/SSIM on RMC. The best results are shown in \textbf{bold}.}
\label{tab:rmc}
\begin{center}
\resizebox{\textwidth}{!}{
  \begin{tabular}{ccccccccc} 
  \hline
    (Miss, Random)  & QNNM & QWNNM & QSNM$_{1/2}$ & QSNM$_{2/3}$ & QSNM$_{0.975}$ & QWSNM & QNMF & QNOF \\ \hline

        (50\%, 3\%)  & 24.63/0.7629 & 21.09/0.6007 & 25.03/0.7406 & 24.99/0.7505 & 24.90/0.7650 & 24.09/0.7124 & 24.36/0.7476 & \textbf{25.29}/\textbf{0.7520} \\ 
        (50\%, 5\%)  & 24.47/0.7479 & 21.02/0.5963 & 24.81/0.7245 & 24.83/0.7373 & 24.70/0.7484 & 23.96/0.7010 & 24.24/0.7359 & \textbf{24.80}/\textbf{0.7423} \\ 
        (70\%, 3\%)  & 21.77/0.5917 & 19.64/0.5092 & 21.90/0.5485 & 20.70/0.5634 & 21.93/0.5903 & 20.72/0.4906 & 21.34/0.5866 & \textbf{22.03}/\textbf{0.5919} \\ 
        (70\%, 5\%)  & 21.65/0.5761 & 19.61/0.5033 & 21.47/0.5127 & 20.67/0.5582 & 21.78/0.5727 & 19.73/0.4288 & 21.25/0.5762 & \textbf{21.81}/\textbf{0.5802} \\ 
        (80\%, 3\%)  & 19.88/0.4652 & 18.25/0.4258 & 17.04/0.2863 & 19.79/0.4904 & 19.96/0.4591 & 13.91/0.1856 & 18.72/0.4157 & \textbf{20.04}/\textbf{0.4916} \\ 
        (80\%, 5\%)  & 19.72/0.4501 & 18.17/0.4195 & 16.32/0.2502 & 19.69/0.4838 & 19.78/0.4423 & 13.23/0.1530 & 18.58/0.4027 & \textbf{19.92}/\textbf{0.4791} \\ \hline

  \end{tabular}
}
\end{center}
\end{table}

\begin{figure}[!t]
	\centering
	\addtolength{\tabcolsep}{-5pt}
    \renewcommand\arraystretch{0.4}
	{\fontsize{8pt}{\baselineskip}\selectfont 
	\begin{tabular}{ccccc}
        \includegraphics[width=0.17\textwidth]{./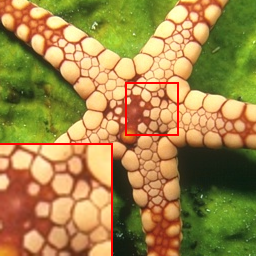} &
        \includegraphics[width=0.17\textwidth]{./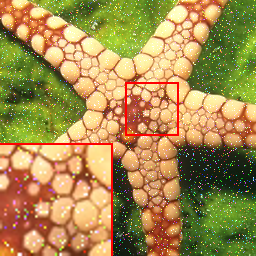} &
        \includegraphics[width=0.17\textwidth]{./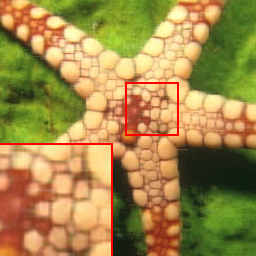} &
        \includegraphics[width=0.17\textwidth]{./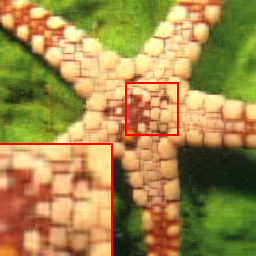} &
        \includegraphics[width=0.17\textwidth]{./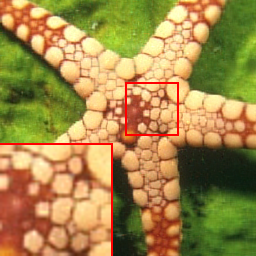} \\
        GT & NY & QNNM & QWNNM & QSNM$_{1/2}$ \\
        
        \includegraphics[width=0.17\textwidth]{./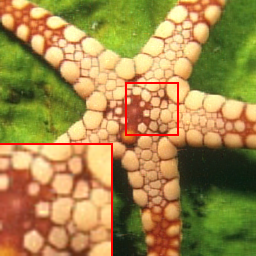} &
        \includegraphics[width=0.17\textwidth]{./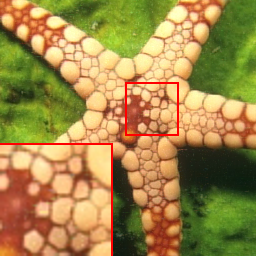} &
        \includegraphics[width=0.17\textwidth]{./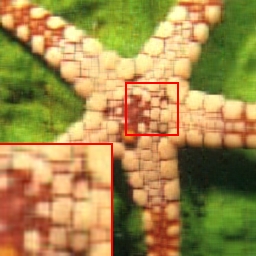} &
        \includegraphics[width=0.17\textwidth]{./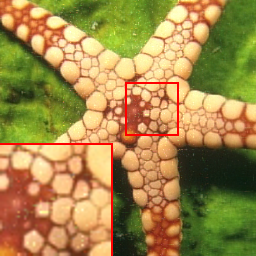} &
        \includegraphics[width=0.17\textwidth]{./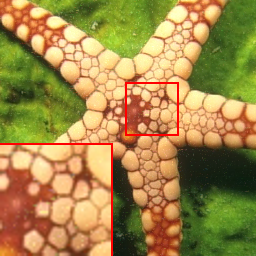} \\
         QSNM$_{2/3}$ & QSNM$_{0.975}$ & QWSNM & QNMF & QNOF 

	\end{tabular}
	}
	\caption{ Comparison of RPCA performance by random impulse noise removal from color images. The random rate is 10\%. }
	\label{rpca_10}
\end{figure}

\begin{figure}[!t]
	\centering
	\addtolength{\tabcolsep}{-5.5pt}
    \renewcommand\arraystretch{0.4}
	{\fontsize{8pt}{\baselineskip}\selectfont 
	\begin{tabular}{ccccc}
        \includegraphics[width=0.17\textwidth]{./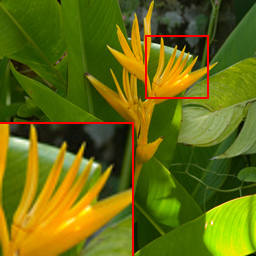} &
        \includegraphics[width=0.17\textwidth]{./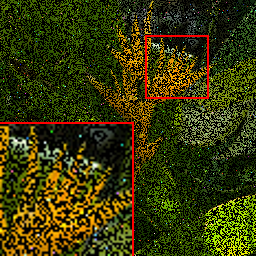} &
        \includegraphics[width=0.17\textwidth]{./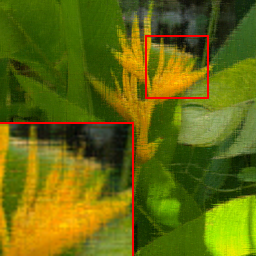} &
        \includegraphics[width=0.17\textwidth]{./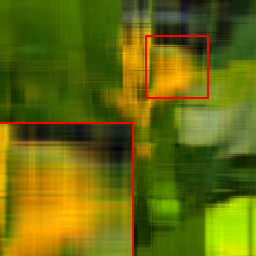} &
        \includegraphics[width=0.17\textwidth]{./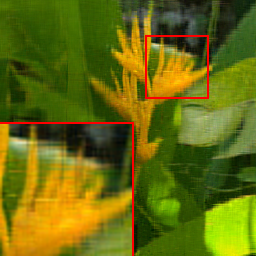} \\
        GT & NY & QNNM & QWNNM & QSNM$_{1/2}$ \\
        
        \includegraphics[width=0.17\textwidth]{./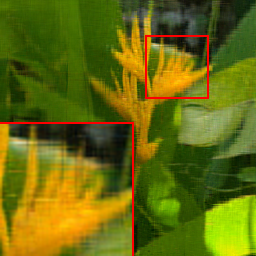} &
        \includegraphics[width=0.17\textwidth]{./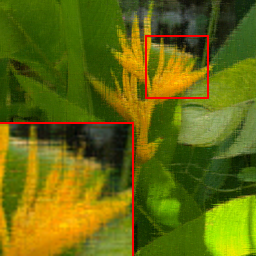} &
        \includegraphics[width=0.17\textwidth]{./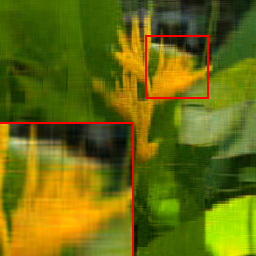} &
        \includegraphics[width=0.17\textwidth]{./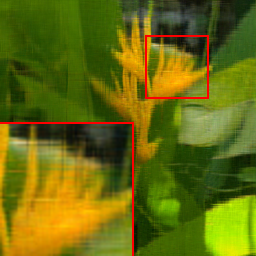} &
        \includegraphics[width=0.17\textwidth]{./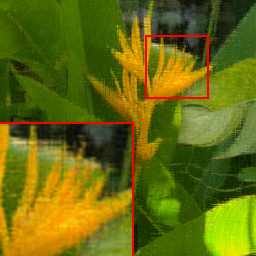} \\
         QSNM$_{2/3}$ & QSNM$_{0.975}$ & QWSNM & QNMF & QNOF 

	\end{tabular}
	}
	\caption{Comparison of RMC performance by color image inpainting combined with random impulse noise removal. Image missing rate of 50\% and noise random rate of 3\%.}
	\label{rmc_50_3}
\end{figure}

Figs. \ref{rpca_10} and \ref{rmc_50_3} show a visual comparison of different algorithms. These figures highlight that QNOF achieves superior reconstruction clarity and better detail preservation compared to other methods. In contrast, QWNNM and QWSNM exhibit cluttered reconstruction details, while QNNM suffers from noticeable over-smoothing. Specifically, in Fig. \ref{rpca_10}, QNMF leaves significant noise residuals, whereas QNOF produces fewer residuals. Overall, QNOF demonstrates state-of-the-art performance both numerically and visually.

To further assess QNOF's performance, Fig. \ref{error} presents the variation of PSNR, SSIM, and convergence error across iterations for different algorithms. The convergence error is computed as $\frac{|| \dot{\bm{X}}^{(k)} - \dot{\bm{X}}^{(k-1)}||_{F}}{|| \dot{\bm{X}}^{(k)} ||_{F}}$. The steady convergence of QNOF is evident in Figs. \ref{error}(a) and (b). The zoomed-in section of Fig. \ref{error}(c) shows that QNOF consistently achieves the lowest convergence error. Around the 40th iteration, QNOF meets the stopping criterion and halts further iterations, demonstrating faster convergence than other algorithms. Table \ref{tab:time} lists the runtime for RMC testing on a single $256 \times 256$ image, further confirming the results from Fig. \ref{error}(c).

\begin{figure}[t]
	\centering
	\addtolength{\tabcolsep}{-5.5pt}
    \renewcommand\arraystretch{0.5}
	{\fontsize{8pt}{\baselineskip}\selectfont 
	\begin{tabular}{ccc}
        \includegraphics[width=0.33\textwidth, trim={23 0 33 23}, clip]{./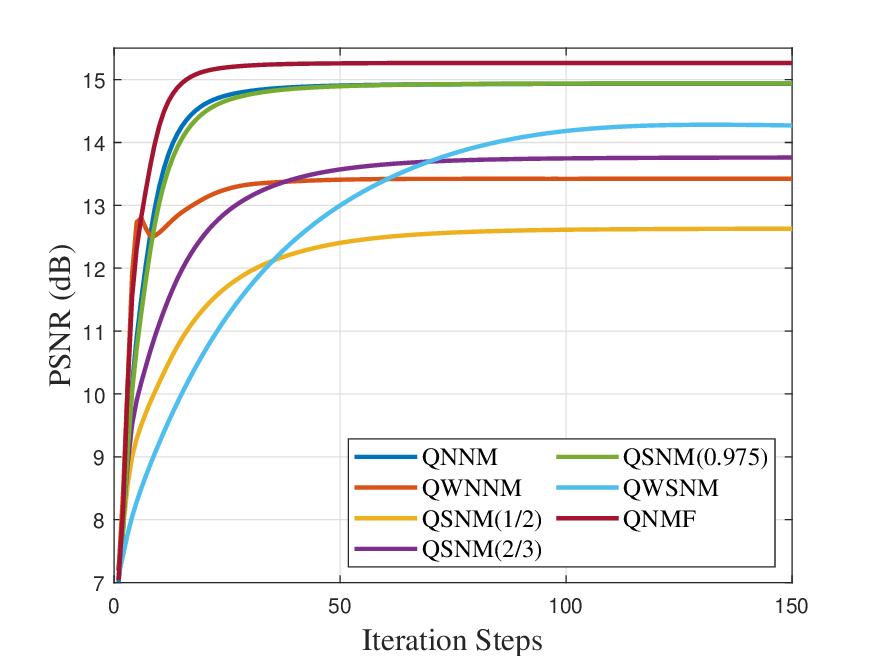} &
        \includegraphics[width=0.33\textwidth, trim={15 0 33 20}, clip]{./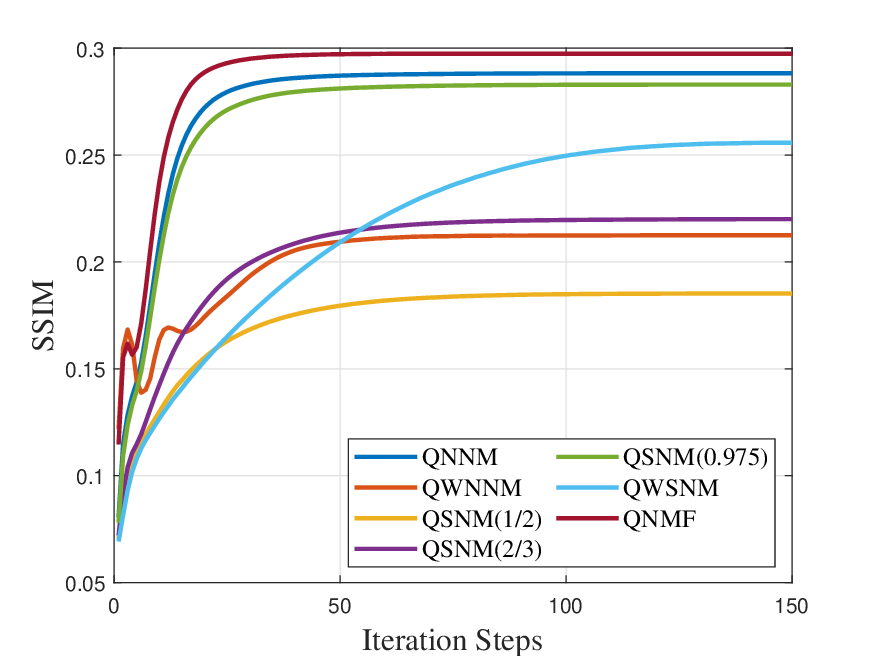} &
        \includegraphics[width=0.33\textwidth, trim={5 0 33 20}, clip]{./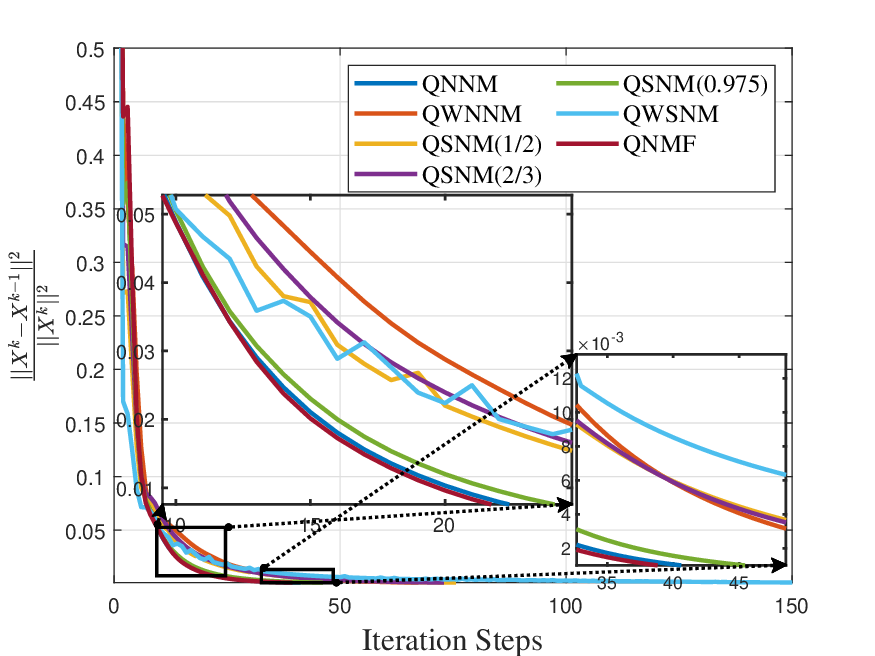} \\

        (a) PSNR & (b) SSIM & (c) $\frac{|| \dot{\bm{X}}^{(k)} - \dot{\bm{X}}^{(k-1)}||_{F}}{|| \dot{\bm{X}}^{(k)} ||_{F}}$ \\ 
	\end{tabular}
	}
	\caption{ PSNR, SSIM and error $\frac{|| \dot{\bm{X}}^{(k)} - \dot{\bm{X}}^{(k-1)}||_{F}}{|| \dot{\bm{X}}^{(k)} ||_{F}}$ change with iterations as an example of RMC with image missing rate 80\% and noise random rate 3\%. 
 }
	\label{error}
\end{figure}
\begin{table}[!t]
\footnotesize
\caption{State-of-the-art algorithms compare time(s) on RMC using $256\times256$ images.}
\label{tab:time}
\begin{center}
\resizebox{\textwidth}{!}{
  \begin{tabular}{ccccccccc} 
  \hline
    (Miss, Random)  & QNNM & QWNNM & QSNM$_{1/2}$ & QSNM$_{2/3}$ & QSNM$_{0.975}$ & QWSNM & QNMF & QNOF \\ \hline

     (50\%, 5\%)  & 10.44 & 9.56 & 19.41 & 18.57 & 13.08 & 19.28 & 6.90 & 6.20 \\ 
     (80\%, 3\%) & 10.35 & 9.62  & 18.65 & 18.03 & 14.28 & 18.20 & 12.74 & 4.96 \\ \hline

  \end{tabular}
}
\end{center}
\end{table}

\section{Conclusions}
\label{sec:conclusions}

In this paper, we introduce a novel nonconvex approximation to the rank of quaternion matrices, called the quaternion nuclear norm over Frobenius norm (QNOF). QNOF retains the scale invariance and parameterlessness of the rank function. Importantly, when applied to standard matrices, QNOF reduces to a special case where the quaternion's real part is preserved, and its three imaginary components are set to zero. By using QSVD, solving QNOF is simplified to the $L_1/L_2$ problem of determining the singular values of a quaternion matrix. We then present a solution to QNOF using the $L_1/L_2$ proximal operator.

We also extend QNOF to the robust matrix completion problem and propose a robust matrix completion model based on QNOF. This model can be efficiently solved using ADMM, with weak convergence guaranteed under mild conditions. Extensive experiments validate the effectiveness of our model compared to state-of-the-art approaches. The numerical results demonstrate that QNOF outperforms existing quaternion models in both accuracy and speed. Since SVD has $O(n^3)$ complexity, we plan to further investigate low-rank decomposition methods for quaternion matrices that avoid SVD, aiming to reduce the computational cost of the algorithm.

\appendix
\section{Proof of Theorem \ref{theorem04}} 
\label{appendix.a}

\begin{proof}[Proof]
First, we prove that the sequences $\{\dot{\bm{\eta}}^{(k+1)}\}$ and $\{\dot{\bm{\xi}}^{(k+1)}\}$ generated by Algorithm \ref{alg:rmc} have upper bounds. $\{\dot{\bm{\eta}}^{(k+1)}\}$ is related to QNOF, so
\begin{equation}
\begin{aligned}
||\dot{\bm{\eta}}^{(k+1)} ||_{F}^{2} &= || \dot{\bm{\eta}}^{(k)} + \beta_{1}^{(k)}(\dot{\bm{X}}^{(k+1)}-\dot{\bm{P}}^{(k+1)})  ||_{F}^{2} \\
&= {\beta_{1}^{(k)}}^{2} || \dot{\bm{X}}^{(k+1)} - ( \dot{\bm{P}}^{(k+1)} - {\beta_{1}^{(k)}}^{-1}\dot{\bm{\eta}}^{(k)})  ||_{F}^{2} \\
&= {\beta_{1}^{(k)}}^{2} ||\dot{\bm{U}}^{(k)}\Sigma_{\dot{\bm{X}}}^{(k)} \bm{V}^{(k)\top} -\dot{\bm{U}}^{(k)}\Sigma^{(k)} \bm{V}^{(k)\top} ||_{F}^{2} \\
&= {\beta_{1}^{(k)}}^{2} \sum^{n}_{i=1}(\sigma^{(k)}_{\dot{\bm{X}},i} - \sigma^{(k)}_{i})^2. 
\label{eta1}
\end{aligned}
\end{equation}
% 分两种情况
Next, two cases need to be discussed.
\begin{itemize}
\item (I) If $0< \dfrac{\beta_{1}^{(k)}}{\lambda} \leq \dfrac{1}{{\sigma^{(k)}_{1}}^{2}}$, by Theorem \ref{theorem001} and \ref{theorem0001}, it follows that $ \sigma^{(k)}_{\dot{\bm{X}},i}$ is 
$$ \sigma^{(k)}_{\dot{\bm{X}},i} = \left\{  
     \begin{array}{cl} \sigma^{(k)}_{i}, & i = 1;\\ 
     0, & i \neq 1. 
     \end{array}  
\right. $$
% Now, we can get $0< \dfrac{\beta_{1}^{(k)}}{\lambda} \leq \dfrac{1}{{\sigma^{(k)}_{1}}^{2}}$. 
Since the singular values are in descending order, we get
$$ \sum^{n}_{i=1}(\sigma^{(k)}_{\dot{\bm{X}},i} - \sigma^{(k)}_{i})^2 = \sum^{n}_{i=2} {\sigma^{(k)}_{i}}^2 \leq \sum^{n}_{i=2} {\sigma^{(k)}_{1}}^{2} = (n-1){\sigma^{(k)}_{1}}^{2}. $$
Because of $\frac{\beta_{1}^{(k)}}{\lambda} \leq \frac{1}{{\sigma^{(k)}_{1}}^{2}}$, we can obtain ${\sigma^{(k)}_{1}}^{2} \leq \frac{\lambda}{\beta_{1}^{(k)}}$. Thus, 
\begin{equation}
\begin{aligned}
||\dot{\bm{\eta}}^{(k+1)} ||_{F}^{2} \leq {\beta_{1}^{(k)}}^{2}(n-1){\sigma^{(k)}_{1}}^{2}.
\end{aligned}
\end{equation}
It can be further obtained that
\begin{equation}
\begin{aligned}
\dfrac{||\dot{\bm{\eta}}^{(k+1)} ||_{F}^{2}}{\theta^2} \leq {\beta_{1}^{(k)}}^{2}\dfrac{(n-1){\sigma^{(k)}_{1}}^{2}}{\theta^2} &\leq {\beta_{1}^{(k)}}^{2}\dfrac{(n-1){\sigma^{(k)}_{1}}^{4}}{\theta^4} \\
&= {\beta_{1}^{(k)}}^{2}\dfrac{(n-1){\lambda}^{4}}{\theta^{4}{\beta_{1}^{(k)}}^{2}} \\
&= \dfrac{(n-1){\lambda}^{4}}{\theta^{4}}, 
\end{aligned}
\end{equation}
where $\sigma_{1}^{(k)} > \theta$, $\theta>0$ is a constant, i.e. $\frac{\sigma_1}{\theta} > 1$. Hence, it follows that $\{\dot{\bm{\eta}}^{(k)}\}$ has an upper bound, i.e., 
\begin{equation}
\begin{aligned}
||\dot{\bm{\eta}}^{(k+1)} ||_{F}^{2} \leq \dfrac{(n-1){\lambda}^{4}}{\theta^{2}}.
\end{aligned}
\end{equation}

\item (II) If $\dfrac{\beta_{1}^{(k)}}{\lambda} > \dfrac{1}{{\sigma^{(k)}_{1}}^{2}}$, by Theorem \ref{theorem001} and \ref{theorem0001}, it follows that $ \sigma^{(k)}_{\dot{\bm{X}},i}$ is 
$$ \sigma^{(k)}_{\dot{\bm{X}},i} = \left\{  
     \begin{array}{cl} \dfrac{\beta_{1}^{(k)}/\lambda*\sigma^{(k)}_{i} - \dfrac{1}{r} }{ \beta_{1}^{(k)}/\lambda - \dfrac{a}{r^3} } , & 1 \leq i \leq t;\\ 
     0, & otherwise. 
     \end{array}  
\right. $$
From this, we can obtain
\begin{equation}
\begin{aligned}
&{\beta_{1}^{(k)}}^{2} \sum^{n}_{i=1}(\sigma^{(k)}_{\dot{\bm{X}},i} - \sigma^{(k)}_{i})^2 \\
&~= {\beta_{1}^{(k)}}^{2}(\sum^{t}_{i=1}( \frac{\beta_{1}^{(k)}/\lambda*\sigma^{(k)}_{i} - \frac{1}{r} }{ \beta_{1}^{(k)}/\lambda - \frac{a}{r^3} } - \sigma^{(k)}_{i})^2 + \sum^{n}_{i=t+1}{\sigma^{(k)}_{i}}^{2}) \\
% &~= {\beta_{1}^{(k)}}^{2}(\sum^{t}_{i=1}( \frac{\frac{a}{r^3}\sigma^{(k)}_{i} - \frac{1}{r} }{ \beta_{1}^{(k)}/\lambda - \frac{a}{r^3} })^2 + \sum^{n}_{i=t+1}{\sigma^{(k)}_{i}}^{2}) \\
&~= {\beta_{1}^{(k)}}^{2}(\sum^{t}_{i=1}( \frac{a\lambda\sigma^{(k)}_{i} - r^{2}\lambda }{ r^3\beta_{1}^{(k)}  -a\lambda })^2 + \sum^{n}_{i=t+1}{\sigma^{(k)}_{i}}^{2}) \\
&~\leq {\beta_{1}^{(k)}}^{2}(\sum^{t}_{i=1}( \frac{a\lambda\sigma^{(k)}_{1} - r^{2}\lambda }{ r^3\beta_{1}^{(k)}  -a\lambda })^2 + \sum^{n}_{i=t+1}{(\frac{\lambda}{r\beta_{1}^{(k)}}})^{2})  \\
&~=  {\beta_{1}^{(k)}}^{2} \left(t( \frac{a\lambda\sigma^{(k)}_{1} - r^{2}\lambda }{ r^3\beta_{1}^{(k)}  -a\lambda })^2\right) + (n-t)\frac{\lambda^2}{r^2} \\
% &~=  {\beta_{1}^{(k)}}^{2}{\color{red} \left(\frac{t}{{\beta_{1}^{(k)}}^{2}}( \frac{a\lambda\sigma^{(k)}_{1} - r^{2}\lambda }{ r^3  -\frac{a\lambda}{\beta_{1}^{(k)}} })^2\right)} + (n-t)\frac{\lambda^2}{r^2} \\
&~=  t(\frac{a\lambda\sigma^{(k)}_{1} - r^{2}\lambda }{ r^3  -\frac{a\lambda}{\beta_{1}^{(k)}} })^2 + (n-t)\frac{\lambda^2}{r^2} \\
\end{aligned}
\end{equation}
By $\sigma_1 \geq \sigma_i, i=1,\dots,t$ and \eqref{2.7}, the inequality holds, and it further follows that
\begin{equation}
\begin{aligned}
||\dot{\bm{\eta}}^{(k+1)} ||_{F}^{2} \leq  t(\frac{a\lambda\sigma^{(k)}_{1} - r^{2}\lambda }{ r^3  -\frac{a\lambda}{\beta_{1}^{(k)}} })^2 + (n-t)\frac{\lambda^2}{r^2}.
\end{aligned}
\label{eta_beta}
\end{equation}
Since $\{\dot{\bm{X}}^{(k)}\}$ is bounded, then we may as well set the upper bounds of its singular values $\Sigma_{\dot{\bm{X}}}$ of $0,1,2$-norms $t$, $a$, $r$ to be $M_t, M_a, M_r$. Here we omit the iterative superscript $(k)$ for $t$, $a$, $r$ for simplicity of representation. At the same time, $\dot{\bm{X}}$ does not degrade to $\bm{0}$, and the three small constants $\theta_t, \theta_a, \theta_r > 0$ are used to denote the lower bounds of $t$, $a$, $r$. According to the proof of \cite[Theorem 3.3]{tao2022minimization}, $\dfrac{\beta_{1}^{(k)}}{\lambda} > \dfrac{1}{{\sigma^{(k)}_{1}}^{2}}$ implies that $\dfrac{\beta_{1}^{(k)}}{\lambda} > \dfrac{a}{r^3}$, i.e. $\dfrac{\beta_{1}^{(k)}}{\lambda} > \dfrac{1}{{\sigma^{(k)}_{1}}^{2}} \geq \dfrac{a}{r^3}$. This gives the bound of $\sigma_{1}$ as 
\begin{equation}
\begin{aligned}
0< \theta < \sigma_{1}^{(k)} \leq \sqrt{\frac{r^3}{a}} < \sqrt{\frac{M_r^{3}}{\theta_a}}.
\end{aligned}
\end{equation}
From this, taking the limit of the inequality \eqref{eta_beta}, we obtain
\begin{equation}
\begin{aligned}
\lim_{k\rightarrow +\infty}||\dot{\bm{\eta}}^{(k+1)} ||_{F}^{2} &\leq  \lim_{k\rightarrow +\infty} t(\frac{a\lambda\sigma^{(k)}_{1} - r^{2}\lambda }{ r^3  -\frac{a\lambda}{\beta_{1}^{(k)}} })^2 + (n-t)\frac{\lambda^2}{r^2} \\
&= M_t(\frac{M_a\lambda\sqrt{\frac{M_r^{3}}{\theta_a}} - \theta_r^{2}\lambda }{ \theta_r^3 })^2 + (n-t)\frac{\lambda^2}{\theta_r^2}.
\end{aligned}
\end{equation}
\end{itemize}
In summary, $\{\dot{\bm{\eta}}^{(k+1)}\}$ is upper bounded.

Next, let's consider $\{\dot{\bm{\xi}}^{(k+1)}\}$. $\{\dot{\bm{\xi}}^{(k+1)}\}$ is related to $||\dot{\bm{Z}}||_{1}$, from which it can be concluded that
\begin{equation}
\begin{aligned}
||\dot{\bm{\xi}}^{(k+1)} ||_{F}^{2} &= || \dot{\bm{\xi}}^{(k)} + \beta_{2}^{(k)}(\dot{\bm{Z}}^{(k+1)}-\dot{\bm{Q}}^{(k+1)})  ||_{F}^{2} \\
&= {\beta_{2}^{(k)}}^{2} || \dot{\bm{Z}}^{(k+1)} - (\dot{\bm{Q}}^{(k+1)} - {\beta_{2}^{(k)}}^{-1}\dot{\bm{\xi}}^{(k)})  ||_{F}^{2} \\
&= {\beta_{2}^{(k)}}^{2} || \mathcal{S}_{\frac{\rho}{\beta_{2}^{(k)}}}(\dot{\bm{Q}}^{(k+1)} - {\beta_{2}^{(k)}}^{-1}\dot{\bm{\xi}}^{(k)}) - (\dot{\bm{Q}}^{(k+1)} - {\beta_{2}^{(k)}}^{-1}\dot{\bm{\xi}}^{(k)}) ||_{F}^{2} \\
&\leq {\beta_{2}^{(k)}}^{2} \sum_{i=1}^{n} (\rho/\beta_{2}^{(k)})^{2} \\
&= \rho^{2}n.
\label{xi}
\end{aligned}
\end{equation}
So, $\{\dot{\bm{\xi}}^{(k+1)}\}$ also has an upper bound.

% 接下来
Then, we prove that the sequence of Lagrange function $\{ \mathcal{L}(\dot{\bm{X}}^{(k)}, \dot{\bm{Z}}^{(k)}, \dot{\bm{P}}^{(k)}, \dot{\bm{Q}}^{(k)},$ $ \dot{\bm{\eta}}^{(k)},\dot{\bm{\xi}}^{(k)}, \beta_1^{(k)}, \beta_2^{(k)}) \}$ also has an upper bound. From the updated rules for $\dot{\bm{\eta}}^{(k)}$ and $\dot{\bm{\xi}}^{(k)}$, it follows that
\begin{equation}
\begin{aligned}
&\mathcal{L}(\dot{\bm{X}}^{(k+1)}, \dot{\bm{Z}}^{(k+1)}, \dot{\bm{P}}^{(k+1)}, \dot{\bm{Q}}^{(k+1)}, \dot{\bm{\eta}}^{(k+1)}, \dot{\bm{\xi}}^{(k+1)}, \beta_1^{(k+1)}, \beta_2^{(k+1)}) \\
&~=\mathcal{L}(\dot{\bm{X}}^{(k+1)}, \dot{\bm{Z}}^{(k+1)}, \dot{\bm{P}}^{(k+1)}, \dot{\bm{Q}}^{(k+1)}, \dot{\bm{\eta}}^{(k)}, \dot{\bm{\xi}}^{(k)}, \beta_1^{(k)}, \beta_2^{(k)}) \\  
&~~~+ \langle{\dot{\bm{\eta}}^{(k+1)} - \dot{\bm{\eta}}^{(k)}, \dot{\bm{X}}^{(k+1)}-\dot{\bm{P}}^{(k+1)} }\rangle  + \frac{\beta_1^{(k+1)}-\beta_1^{(k)}}{2}||\dot{\bm{X}}^{(k+1)}-\dot{\bm{P}}^{(k+1)} ||_{F}^{2} \\
&~~~+ \langle{\dot{\bm{\xi}}^{(k+1)} - \dot{\bm{\xi}}^{(k)}, \dot{\bm{Z}}^{(k+1)}-\dot{\bm{Q}}^{(k+1)} }\rangle  + \frac{\beta_2^{(k+1)}-\beta_2^{(k)}}{2}||\dot{\bm{Z}}^{(k+1)}-\dot{\bm{Q}}^{(k+1)} ||_{F}^{2} \\ % 第一个等式
&~=\mathcal{L}(\dot{\bm{X}}^{(k+1)}, \dot{\bm{Z}}^{(k+1)}, \dot{\bm{P}}^{(k+1)}, \dot{\bm{Q}}^{(k+1)}, \dot{\bm{\eta}}^{(k)}, \dot{\bm{\xi}}^{(k)}, \beta_1^{(k)}, \beta_2^{(k)}) \\   
&~~~+ \langle{\dot{\bm{\eta}}^{(k+1)} - \dot{\bm{\eta}}^{(k)}, \frac{\dot{\bm{\eta}}^{(k+1)} - \dot{\bm{\eta}}^{(k)}}{\beta_1^{(k)}} }\rangle  + \frac{\beta_1^{(k+1)}-\beta_1^{(k)}}{2}||\frac{\dot{\bm{\eta}}^{(k+1)} - \dot{\bm{\eta}}^{(k)}}{\beta_1^{(k)}} ||_{F}^{2} \\
&~~~+ \langle{\dot{\bm{\xi}}^{(k+1)} - \dot{\bm{\xi}}^{(k)}, \frac{\dot{\bm{\xi}}^{(k+1)} - \dot{\bm{\xi}}^{(k)}}{\beta_2^{(k)}} }\rangle  + \frac{\beta_2^{(k+1)}-\beta_2^{(k)}}{2}||\frac{\dot{\bm{\xi}}^{(k+1)} - \dot{\bm{\xi}}^{(k)}}{\beta_2^{(k)}} ||_{F}^{2} \\ % 第二个等式
&~=\mathcal{L}(\dot{\bm{X}}^{(k+1)}, \dot{\bm{Z}}^{(k+1)}, \dot{\bm{P}}^{(k+1)}, \dot{\bm{Q}}^{(k+1)}, \dot{\bm{\eta}}^{(k)}, \dot{\bm{\xi}}^{(k)}, \beta_1^{(k)}, \beta_2^{(k)}) \\   
&~~~+\frac{\beta_1^{(k+1)} +\beta_1^{(k)}}{2{\beta_1^{(k)}}^2}||\dot{\bm{\eta}}^{(k+1)} - \dot{\bm{\eta}}^{(k)} ||_{F}^{2} + \frac{\beta_2^{(k+1)} +\beta_2^{(k)}}{2{\beta_2^{(k)}}^2}||\dot{\bm{\xi}}^{(k+1)} - \dot{\bm{\xi}}^{(k)} ||_{F}^{2}. 
\label{L}
\end{aligned}
\end{equation}
Since $\{\dot{\bm{\eta}}^{(k)}\}$ and $\{\dot{\bm{\xi}}^{(k)}\}$ are bounded, $\{\dot{\bm{\eta}}^{(k+1)} - \dot{\bm{\eta}}^{(k)} \}$ and $\{\dot{\bm{\xi}}^{(k+1)} - \dot{\bm{\xi}}^{(k)} \}$ are also bounded. Suppose that the upper bound of the sequence $\{\dot{\bm{\eta}}^{(k+1)} - \dot{\bm{\eta}}^{(k)} \}$ is $M_1$, i.e., $ \forall k \geq 0, || \dot{\bm{\eta}}^{(k+1)} - \dot{\bm{\eta}}^{(k)} ||_F \leq M_1$. Similarly, suppose that the upper bound of the sequence $\{\dot{\bm{\xi}}^{(k+1)} - \dot{\bm{\xi}}^{(k)} \}$ is $M_2$, i.e., $ \forall k \geq 0, || \dot{\bm{\xi}}^{(k+1)} - \dot{\bm{\xi}}^{(k)} ||_F \leq M_2$. Meanwhile, the inequality $\mathcal{L}(\dot{\bm{X}}^{(k+1)}, \dot{\bm{Z}}^{(k+1)}, \dot{\bm{P}}^{(k+1)}, \dot{\bm{Q}}^{(k+1)}, \dot{\bm{\eta}}^{(k)}, \dot{\bm{\xi}}^{(k)}, \beta_1^{(k)}, \beta_2^{(k)}) \leq \mathcal{L}(\dot{\bm{X}}^{(k)}, \dot{\bm{Z}}^{(k)}, \dot{\bm{P}}^{(k)},$ $\dot{\bm{Q}}^{(k)}, \dot{\bm{\eta}}^{(k)}, \dot{\bm{\xi}}^{(k)}, \beta_1^{(k)}, \beta_2^{(k)})$ always holds because $\dot{\bm{X}}$, $\dot{\bm{Z}}$, $\dot{\bm{P}}$ and $\dot{\bm{Q}}$ are globally optimal solutions to the corresponding subproblems. Therefore, we have
\begin{equation}
\begin{aligned}
&\mathcal{L}(\dot{\bm{X}}^{(k+1)}, \dot{\bm{Z}}^{(k+1)}, \dot{\bm{P}}^{(k+1)}, \dot{\bm{Q}}^{(k+1)}, \dot{\bm{\eta}}^{(k+1)}, \dot{\bm{\xi}}^{(k+1)}, \beta_1^{(k+1)}, \beta_2^{(k+1)}) \\
&~\leq \mathcal{L}(\dot{\bm{X}}^{(k+1)}, \dot{\bm{Z}}^{(k+1)}, \dot{\bm{P}}^{(k+1)}, \dot{\bm{Q}}^{(k+1)}, \dot{\bm{\eta}}^{(k)}, \dot{\bm{\xi}}^{(k)}, \beta_1^{(k)}, \beta_2^{(k)}) + \frac{\beta_1^{(k+1)}+\beta_1^{(k)}}{2{\beta_1^{(k)}}^2}M_{1}^{2} + \frac{\beta_2^{(k+1)}+\beta_2^{(k)}}{2{\beta_2^{(k)}}^2}M_{2}^{2} \\ % 第一个不等式
\end{aligned}
% \label{L2}
\end{equation}
\begin{equation*}
\begin{aligned}
&~\leq \mathcal{L}(\dot{\bm{X}}^{(k)}, \dot{\bm{Z}}^{(k)}, \dot{\bm{P}}^{(k)}, \dot{\bm{Q}}^{(k)}, \dot{\bm{\eta}}^{(k)}, \dot{\bm{\xi}}^{(k)}, \beta_1^{(k)}, \beta_2^{(k)}) + \frac{\beta_1^{(k+1)}+\beta_1^{(k)}}{2{\beta_1^{(k)}}^2}M_{1}^{2} + \frac{\beta_2^{(k+1)}+\beta_2^{(k)}}{2{\beta_2^{(k)}}^2}M_{2}^{2} \\ % 第二个不等式
&~\leq \dots \\
&~\leq \mathcal{L}(\dot{\bm{X}}^{(1)}, \dot{\bm{Z}}^{(1)}, \dot{\bm{P}}^{(1)}, \dot{\bm{Q}}^{(1)}, \dot{\bm{\eta}}^{(0)}, \dot{\bm{\xi}}^{(0)}, \beta_1^{(0)}, \beta_2^{(0)}) + M_{1}^{2}\sum_{k=0}^{\infty}\frac{1+\mu}{2\beta_1^{(0)}\mu^{k}} + M_{2}^{2}\sum_{k=0}^{\infty}\frac{1+\mu}{2\beta_2^{(0)}\mu^{k}} \\
&~\leq \mathcal{L}(\dot{\bm{X}}^{(1)}, \dot{\bm{Z}}^{(1)}, \dot{\bm{P}}^{(1)}, \dot{\bm{Q}}^{(1)}, \dot{\bm{\eta}}^{(0)}, \dot{\bm{\xi}}^{(0)}, \beta_1^{(0)}, \beta_2^{(0)}) + \frac{M_{1}^{2}}{\beta_1^{(0)}}\sum_{k=0}^{\infty}\frac{1}{\mu^{k-1}} + \frac{M_{2}^{2}}{\beta_2^{(0)}}\sum_{k=0}^{\infty}\frac{1}{\mu^{k-1}}  < -\infty. \\
% &~
\end{aligned}
% \label{L2}
\end{equation*}
Hence, $\{ \mathcal{L}(\dot{\bm{X}}^{(k)}, \dot{\bm{Z}}^{(k)}, \dot{\bm{P}}^{(k)}, \dot{\bm{Q}}^{(k)}, \dot{\bm{\eta}}^{(k)},\dot{\bm{\xi}}^{(k)}, \beta_1^{(k)}, \beta_2^{(k)}) \}$ is  upper bounded.

Next, we consider the sequences $\{\dot{\bm{X}}^{(k)} \}$, $\{\dot{\bm{Z}}^{(k)} \}$, $\{\dot{\bm{P}}^{(k)} \}$ and $\{\dot{\bm{Q}}^{(k)} \}$, where $\{\dot{\bm{X}}^{(k)} \}$ is known to be bounded. From \eqref{E-rmc}, we get
\begin{equation}
\begin{aligned}
&\lambda(\frac{||\dot{\bm{X}}^{(k+1)}||_{*}}{||\dot{\bm{X}}^{(k+1)}||_{F}}) + \rho||\dot{\bm{Z}}^{(k+1)}||_{1} \\
&~=\mathcal{L}(\dot{\bm{X}}^{(k+1)}, \dot{\bm{Z}}^{(k+1)}, \dot{\bm{P}}^{(k+1)}, \dot{\bm{Q}}^{(k+1)}, \dot{\bm{\eta}}^{(k)}, \dot{\bm{\xi}}^{(k)}, \beta_1^{(k)}, \beta_2^{(k)}) - \langle{\dot{\bm{\eta}}^{(k)}, \dot{\bm{X}}^{(k+1)} - \dot{\bm{P}}^{(k+1)}}\rangle \\
&~~ - \frac{\beta_1^{(k)}}{2}|| \dot{\bm{X}}^{(k+1)} - \dot{\bm{P}}^{(k+1)} ||_{F}^{2} - \langle{\dot{\bm{\xi}}^{(k)}, \dot{\bm{Z}}^{(k+1)} - \dot{\bm{Q}}^{(k+1)}}\rangle  - \frac{\beta_2^{(k)}}{2}|| \dot{\bm{Z}}^{(k+1)} - \dot{\bm{Q}}^{(k+1)} ||_{F}^{2}  \\ % 第一个等式
&=\mathcal{L}(\dot{\bm{X}}^{(k+1)}, \dot{\bm{Z}}^{(k+1)}, \dot{\bm{P}}^{(k+1)}, \dot{\bm{Q}}^{(k+1)}, \dot{\bm{\eta}}^{(k)}, \dot{\bm{\xi}}^{(k)}, \beta_1^{(k)}, \beta_2^{(k)}) - \langle{\dot{\bm{\eta}}^{(k)}, \frac{\dot{\bm{\eta}}^{(k+1)} - \dot{\bm{\eta}}^{(k)}}{\beta_1^{(k)}} }\rangle  \\
&~~- \frac{\beta_1^{(k)}}{2}|| \frac{\dot{\bm{\eta}}^{(k+1)} - \dot{\bm{\eta}}^{(k)}}{\beta_1^{(k)}} ||_{F}^{2} - \langle{\dot{\bm{\xi}}^{(k)}, \frac{\dot{\bm{\xi}}^{(k+1)} - \dot{\bm{\xi}}^{(k)}}{\beta_2^{(k)}} }\rangle - \frac{\beta_2^{(k)}}{2}|| \frac{\dot{\bm{\xi}}^{(k+1)} - \dot{\bm{\xi}}^{(k)}}{\beta_2^{(k)}} ||_{F}^{2} \\ % 第二个等式
&= \mathcal{L}(\dot{\bm{X}}^{(k+1)}, \dot{\bm{Z}}^{(k+1)}, \dot{\bm{P}}^{(k+1)}, \dot{\bm{Q}}^{(k+1)}, \dot{\bm{\eta}}^{(k)}, \dot{\bm{\xi}}^{(k)}, \beta_1^{(k)}, \beta_2^{(k)}) + \frac{||\dot{\bm{\eta}}^{(k)}||_{F}^{2} - ||\dot{\bm{\eta}}^{(k+1)}||_{F}^{2} }{2\beta_1^{(k)}}\\
&~~+ \frac{||\dot{\bm{\xi}}^{(k)}||_{F}^{2} - ||\dot{\bm{\xi}}^{(k+1)}||_{F}^{2} }{2\beta_2^{(k)}}.
\label{XZ}
\end{aligned}
\end{equation}
From this, we obtain that $\{\dot{\bm{Z}}^{(k)} \}$ is upper bounded. Since
$$\dot{\bm{P}}^{(k+1)} = \dot{\bm{X}}^{(k+1)} - \dfrac{\dot{\bm{\eta}}^{(k+1)} - \dot{\bm{\eta}}^{(k)}}{\beta_{1}^{(k)}}, \quad \dot{\bm{Q}}^{(k+1)} = \dot{\bm{Z}}^{(k+1)} - \dfrac{ \dot{\bm{\xi}}^{(k+1)} - \dot{\bm{\xi}}^{(k)}}{ \beta_{2}^{(k)}},$$
the sequence $\{\dot{\bm{P}}^{(k)}\}$ and $\{\dot{\bm{Q}}^{(k)} \}$ are bounded. 
Thus, there exists at least one accumulation point for $\{ \dot{\bm{X}}^{(k)}, \dot{\bm{Z}}^{(k)},\dot{\bm{P}}^{(k)},\dot{\bm{Q}}^{(k)} \}$. Further, we can get
\begin{equation}
\begin{aligned}
\lim_{k\rightarrow +\infty} || \dot{\bm{X}}^{(k+1)}-\dot{\bm{P}}^{(k+1)} ||_{F} &= \lim_{k\rightarrow +\infty} \frac{1}{\beta_1^{(k)}}|| \dot{\bm{\eta}}^{(k+1)}-\dot{\bm{\eta}}^{(k)} ||_{F} = 0.
\end{aligned}
\end{equation}
Similarly,
\begin{equation}
\begin{aligned}
\lim_{k\rightarrow +\infty} || \dot{\bm{Z}}^{(k+1)}-\dot{\bm{Q}}^{(k+1)} ||_{F} =0.
\end{aligned}
\end{equation}
Thus, \eqref{converge35} and \eqref{converge36} have to be proved.

Finally, we demonstrate that the change in the sequences $\{\dot{\bm{X}}^{(k)}\}$, $\{\dot{\bm{Z}}^{(k)}\}$, $\{\dot{\bm{P}}^{(k)}\}$ and $\{\dot{\bm{Q}}^{(k)}\}$ converges to 0 with iteration. For $\dot{\bm{Z}}$ subproblem, by $\dot{\bm{Z}}^{(k)} =\dot{\bm{Q}}^{(k)} + \frac{\dot{\bm{\xi}}^{(k)} -\dot{\bm{\xi}}^{(k-1)}}{\beta_2^{(k-1)}}$, we get
\begin{equation}
\begin{aligned}
&\lim_{k\rightarrow +\infty} ||  \dot{\bm{Z}}^{(k+1)} - \dot{\bm{Z}}^{(k)} ||_{F} \\
&~= \lim_{k\rightarrow +\infty} || \mathcal{S}_{\frac{\beta_2^{(k)}}{\rho}}(\dot{\bm{Q}}^{(k)}  - \frac{\dot{\bm{\xi}}^{(k)}}{\beta_2^{(k)}})  - (\dot{\bm{Q}}^{(k)} + \frac{\dot{\bm{\xi}}^{(k)} -\dot{\bm{\xi}}^{(k-1)}}{\beta_2^{(k-1)}})||_{F} \\
&~= \lim_{k\rightarrow +\infty} || \mathcal{S}_{\frac{\beta_2^{(k)}}{\rho}}(\dot{\bm{Q}}^{(k)}  - \frac{\dot{\bm{\xi}}^{(k)}}{\beta_2^{(k)}})  - (\dot{\bm{Q}}^{(k)}  - \frac{\dot{\bm{\xi}}^{(k)}}{\beta_2^{(k)}})  + (\frac{\dot{\bm{\xi}}^{(k-1)}}{\beta_2^{(k-1)}} - \frac{\dot{\bm{\xi}}^{(k)}}{\beta_2^{(k-1)}} - \frac{\dot{\bm{\xi}}^{(k)}}{\beta_2^{(k)}})||_{F} \\
&~\leq \lim_{k\rightarrow +\infty} || \mathcal{S}_{\frac{\beta_2^{(k)}}{\rho}}(\dot{\bm{Q}}^{(k)}  - \frac{\dot{\bm{\xi}}^{(k)}}{\beta_2^{(k)}})  - (\dot{\bm{Q}}^{(k)}  - \frac{\dot{\bm{\xi}}^{(k)}}{\beta_2^{(k)}}) ||_{F}  + ||\frac{\dot{\bm{\xi}}^{(k-1)}}{\beta_2^{(k-1)}} - \frac{\dot{\bm{\xi}}^{(k)}}{\beta_2^{(k-1)}} - \frac{\dot{\bm{\xi}}^{(k)}}{\beta_2^{(k)}}||_{F} \\
&~\leq \lim_{k\rightarrow +\infty} \frac{\rho \sqrt{mn}}{\beta_2^{(k)}} + ||\frac{\mu\dot{\bm{\xi}}^{(k-1)} - (\mu+1) \dot{\bm{\xi}}^{(k)}}{\beta_2^{(k)}}||_{F} \\
&~=0,
\end{aligned}
\end{equation}
where $\mathcal{S}_{\frac{\beta_2^{(k)}}{\rho}}$ is the soft-threshold operation with parameter $\frac{\beta_2^{(k)}}{\rho}$, $m$ and $n$ are the size of $\dot{\bm{Y}}$. Thus, \eqref{converge32} is proved.  Then, we discuss the sequence $\{\dot{\bm{Q}}^{(k)}\}$.
\begin{equation}
\begin{aligned}
&\lim_{k\rightarrow +\infty} ||  \dot{\bm{Q}}^{(k+1)} - \dot{\bm{Q}}^{(k)} ||_{F} \\
&~= \lim_{k\rightarrow +\infty} || \dot{\bm{Z}}^{(k+1)}  - \frac{\dot{\bm{\xi}}^{(k+1)} -\dot{\bm{\xi}}^{(k)}}{\beta_2^{(k)}} - \dot{\bm{Q}}^{(k)}||_{F} \\
&~= \lim_{k\rightarrow +\infty} || \dot{\bm{Z}}^{(k+1)} - \dot{\bm{Z}}^{(k)} + \dot{\bm{Z}}^{(k)} - \dot{\bm{Q}}^{(k)}  - \frac{\dot{\bm{\xi}}^{(k+1)} -\dot{\bm{\xi}}^{(k)}}{\beta_2^{(k)}} ||_{F} \\
&~\leq \lim_{k\rightarrow +\infty} || \dot{\bm{Z}}^{(k+1)} - \dot{\bm{Z}}^{(k)} ||_{F}  + || \dot{\bm{Z}}^{(k)} - \dot{\bm{Q}}^{(k)} ||_{F} + || \frac{\dot{\bm{\xi}}^{(k)} -\dot{\bm{\xi}}^{(k+1)}}{\beta_2^{(k)}} ||_{F} \\
&~=0.
\end{aligned}
\end{equation}
\eqref{converge34} is proved.

Sequences $\{\dot{\bm{P}}^{(k)}\}$ and $\{\dot{\bm{X}}^{(k)}\}$ are discussed in the same way. For $\dot{p}_{i,j} \in \Omega$ satisfying the constraint $\mathcal{P}_{\Omega}(\dot{\bm{P}} + \dot{\bm{Q}}) = \mathcal{P}_{\Omega}(\dot{\bm{Y}})$, it is evident that 
$$
\lim_{k\rightarrow +\infty} ||  \mathcal{P}_{\Omega}(\dot{\bm{P}}^{(k+1)}) - \mathcal{P}_{\Omega}(\dot{\bm{P}}^{(k)}) ||_{F} = \lim_{k\rightarrow +\infty} ||  \mathcal{P}_{\Omega}(\dot{\bm{Y}} - \dot{\bm{Q}}^{(k+1)}) - \mathcal{P}_{\Omega}(\dot{\bm{Y}} - \dot{\bm{Q}}^{(k)} ) ||_{F} = 0
$$
holds.  Let's discuss the situation of $\dot{p}_{i,j} \in \bar{\Omega}$.
\begin{equation}
\begin{aligned}
&\lim_{k\rightarrow +\infty} ||  \mathcal{P}_{\bar{\Omega}}(\dot{\bm{P}}^{(k+1)}) - \mathcal{P}_{\bar{\Omega}}(\dot{\bm{P}}^{(k)}) ||_{F}\\
&~= \lim_{k\rightarrow +\infty} || \mathcal{P}_{\bar{\Omega}}(\dot{\bm{X}}^{(k)}) + \frac{\mathcal{P}_{\bar{\Omega}}(\dot{\bm{\eta}}^{(k)})}{\beta_1^{(k)}} -  \mathcal{P}_{\bar{\Omega}}(\dot{\bm{X}}^{(k)}) + \frac{ \mathcal{P}_{\bar{\Omega}}(\dot{\bm{\eta}}^{(k)} - \dot{\bm{\eta}}^{(k-1)}) }{\beta_1^{(k-1)}} ||_{F} \\
&~\leq \lim_{k\rightarrow +\infty} || \frac{\mathcal{P}_{\bar{\Omega}}(\dot{\bm{\eta}}^{(k)})}{\beta_1^{(k)}}  ||_{F} + || \frac{ \mathcal{P}_{\bar{\Omega}}(\dot{\bm{\eta}}^{(k)} - \dot{\bm{\eta}}^{(k-1)}) }{\beta_1^{(k-1)}}  ||_{F} \\
&~=  0.
\end{aligned}
\end{equation}
In summary, $\lim_{k\rightarrow +\infty} ||  \dot{\bm{P}}^{(k+1)} - \dot{\bm{P}}^{(k)} ||_{F} =0$. For the sequence $\{\dot{\bm{X}}^{(k)}\}$,
\begin{equation}
\begin{aligned}
&\lim_{k\rightarrow +\infty} ||  \dot{\bm{X}}^{(k+1)} - \dot{\bm{X}}^{(k)} ||_{F} \\
&~= \lim_{k\rightarrow +\infty} || \dot{\bm{P}}^{(k+1)}  + \frac{\dot{\bm{\eta}}^{(k+1)} -\dot{\bm{\eta}}^{(k)}}{\beta_1^{(k)}} - \dot{\bm{X}}^{(k)}||_{F} \\
&~= \lim_{k\rightarrow +\infty} || \dot{\bm{P}}^{(k+1)} - \dot{\bm{P}}^{(k)}  + \frac{\dot{\bm{\eta}}^{(k+1)}  -\dot{\bm{\eta}}^{(k)}}{\beta_1^{(k)}} + \dot{\bm{P}}^{(k)} - \dot{\bm{X}}^{(k)}||_{F} \\
&~\leq \lim_{k\rightarrow +\infty} || \dot{\bm{P}}^{(k+1)} - \dot{\bm{P}}^{(k)} ||_{F}  + || \dot{\bm{P}}^{(k)} - \dot{\bm{X}}^{(k)} ||_{F} + || \frac{\dot{\bm{\eta}}^{(k+1)} -\dot{\bm{\eta}}^{(k)}}{\beta_1^{(k)}} ||_{F} =0, 
\end{aligned}
\end{equation}
\eqref{converge31} and \eqref{converge33} are proved.

\end{proof}

% \section*{Acknowledgments}
% We would like to acknowledge the assistance of volunteers in putting
% together this example manuscript and supplement.

\bibliographystyle{siamplain}
\bibliography{references}
\end{document}